\documentclass[twoside, 11pt]{article}

\usepackage{graphicx, subcaption}
\graphicspath{{figures/}}
\usepackage[top=1in,bottom=1in,left=1in,right=1in,footnotesep=0.3in]{geometry}
\usepackage[sort&compress, numbers]{natbib} \setlength{\bibsep}{0.0pt}
\usepackage{amsfonts, amsmath, amssymb, amsthm}
\usepackage{MnSymbol}
\usepackage[shortlabels]{enumitem}
\usepackage{booktabs}
\usepackage{algorithm, algorithmic}


\usepackage{color}


\usepackage{hyperref}
\hypersetup{colorlinks=true}
\usepackage{url}

\def\d{\mathsf{d}}

\usepackage{scalerel}


\newtheorem{theorem}{Theorem}[section]
\newtheorem{proposition}[theorem]{Proposition}
\newtheorem{lemma}[theorem]{Lemma}
\newtheorem{corollary}[theorem]{Corollary}

\theoremstyle{definition}

\theoremstyle{remark}
\newtheorem{remark}[theorem]{Remark}
\newtheorem{assumption}[theorem]{Assumption}
\newtheorem{conjecture}[theorem]{Conjecture}

\numberwithin{equation}{section}
\numberwithin{figure}{section}

\def\beq{\begin{equation}} 
\def\eeq{\end{equation}}
\def\beqn{\begin{eqnarray*}}
\def\eeqn{\end{eqnarray*}}
\def\Bitem{\begin{itemize}\setlength{\itemsep}{.2in}}
\def\bitem{\begin{itemize}\setlength{\itemsep}{.05in}}
\def\eitem{\end{itemize}}
\def\Benum{\begin{enumerate}\setlength{\itemsep}{.2in}}
\def\benum{\begin{enumerate}\setlength{\itemsep}{.05in}}
\def\eenum{\end{enumerate}}
\def\bmult{\begin{multline*}}
\def\emult{\end{multline*}}
\def\bcenter{\begin{center}}
\def\ecenter{\end{center}}
\def\bframe{\begin{frame}}
\def\eframe{\end{frame}}

\newcommand{\thmref}[1]{Theorem~\ref{thm:#1}}
\newcommand{\prpref}[1]{Proposition~\ref{prp:#1}}
\newcommand{\corref}[1]{Corollary~\ref{cor:#1}}
\newcommand{\lemref}[1]{Lemma~\ref{lem:#1}}
\newcommand{\secref}[1]{Section~\ref{sec:#1}}
\newcommand{\figref}[1]{Figure~\ref{fig:#1}}
\newcommand{\tabref}[1]{Table~\ref{tab:#1}}
\newcommand{\algref}[1]{Algorithm~\ref{alg:#1}}
\newcommand{\remref}[1]{Remark~\ref{rem:#1}}

\newcommand{\aspref}[1]{Assumption~\ref{asp:#1}}


\DeclareMathOperator{\diam}{diam}
\DeclareMathOperator{\width}{width}
\DeclareMathOperator{\dist}{dist}




\def\cG{\mathcal{G}}

\def\cM{\mathcal{M}}

\def\cS{\mathcal{S}}
\def\cT{\mathcal{T}}
\def\cU{\mathcal{U}}

\def\bA{\mathbf{A}}

\def\bL{\mathbf{L}}

\def\bU{\mathbf{U}}

\def\bX{\mathbf{X}}
\def\bY{\mathbf{Y}}




\def\bbM{\mathbb{M}}

\def\bbO{\mathbb{O}}

\def\bbQ{\mathbb{Q}}
\def\bbR{\mathbb{R}}





\def\eps{\varepsilon}

\def\1{\mathbbm{1}}



\pagestyle{myheadings}
\raggedbottom

\usepackage{tikz}
\usepackage{tikz-3dplot}
\usepackage{pgfplots}
\pgfplotsset{compat=1.16}
\usepgfplotslibrary{fillbetween}
\usepgfplotslibrary{patchplots}


\usepackage{float}
\usepackage{mathtools}
\mathtoolsset{showonlyrefs}



\newcommand\restr[2]{{
  \left.\kern-\nulldelimiterspace
  #1 
  \vphantom{\big|} 
  \right|_{#2} 
}}


\begin{document}

\title{Minimax Estimation of Distances on a Surface and \\ Minimax Manifold Learning in the Isometric-to-Convex Setting}
\author{
Ery Arias-Castro\,\footnote{Department of Mathematics and Halıcıoğlu Data Science Institute, University of California, San Diego, USA} 
\and  
Phong Alain Chau\,\footnote{Department of Mathematics, University of California, San Diego, USA}
}
\date{}
\maketitle


\begin{abstract} 
We start by considering the problem of estimating intrinsic distances on a smooth submanifold. We show that minimax optimality can be obtained via a reconstruction of the surface, and discuss the use of a particular mesh construction --- the tangential Delaunay complex --- for that purpose. 
We then turn to manifold learning and argue that a variant of Isomap where the distances are instead computed on a reconstructed surface is minimax optimal for the isometric variant of the problem.
\end{abstract}

\noindent
{\small {\bf Keywords:} shortest paths, geodesic distances, meshes, tangential Delaunay complex, surfaces with positive reach, manifold learning, Isomap, minimax decision theory}

\section{Introduction}
The estimation of shortest paths and intrinsic distances on surfaces is a fundamental problem in
computational geometry with wide-ranging applications.
In motion planning, shortest paths represent resource-efficient
sequences of actions to be undertaken by the agent in some given configuration space~\cite{lavalle2006planning,latombe2012robot}.
In addition to the clear applications to robot locomotion and manipulation, this framework has bore fruit in the field of
biology wherein proteins and folding networks are of great interest~\cite{Amato2002, Thomas2005}.
In cluster analysis, geodesic distances have found use as a similarity metric
to create partitions that respect the underlying geometry~\cite{kaufmann1987clustering, park2009simple, li2019geodesic}.
In manifold learning (aka nonlinear dimensionality reduction), the Isometric Feature Mapping (Isomap) algorithm crucially depends on the
approximation of geodesic distances on the underlying surface~\cite{tenenbaum2000global}, and so does another important algorithm, Maximum Variance Unfolding (MVU)~\cite{mvu04} --- although in disguise~\cite{paprotny2012connection, arias2013convergence}.
More generally, the estimation of distances is at the core of some important methods for embedding a graph (aka multidimensional scaling)~\cite{kruskal1980designing, shang2003localization, shang2004improved, niculescu2003dv}.

\subsection{Existing error bounds}
\label{sec:existing bounds}
Consider a set of points $\bX = \{x_1, \dots, x_n\}$ in some Euclidean space assumed to belong to some unknown $C^2$ submanifold $\cM$. The goal is to estimate their pairwise (intrinsic) distances on $\cM$ and possibly provide corresponding shortest paths. 
Therefore, if $\d_\cM$ denotes the intrinsic distance on $\cM$, then the goal is to estimate $\d_{\cM}(x_i, x_j)$ for all $i, j \in [n] := \{1, \dots, n\}$.

For this goal to be achievable in a nonparametric setting where not much is known about $\cM$ except being smooth (see \aspref{M} for details) requires that the point set be sufficiently dense in $\cM$. To quantify that, suppose\footnote{The use of intrinsic distances could be used instead, but this would not change things in any noticeable way.}
\begin{equation} \label{eps}
\max_{x \in \cM} \min_{i = 1, \dots, n} \|x-x_i\| \le \eps.
\end{equation}
Note that $\eps$ is at best on the order of $(\log(n)/n)^{1/k}$ when the points are sampled uniformly at random from $\cM$ and $\cM$ is of dimension $k$.
Throughout, we assume that $k$ is known, although this is non-essential as it can be reliably estimated~\cite{fukunaga1971algorithm, kim2019minimax}.

The first error bounds we know of come from the literature on manifold learning. Indeed, Bernstein {\it et al} provide some theory for Isomap in~\cite{bernstein2000graph}. Isomap is based on three main steps: 1) form a neighborhood graph where the nodes are the points and two points within distance $r$ are connected with an edge weighted by the Euclidean distance between the points; 2) compute all the pairwise graph distances; 3) apply Classical Scaling to these distances with a prescribed embedding dimension $k$. The connectivity radius $r$ is a tuning parameter of the method. Bernstein {\it et al} focus on the first two steps, meaning on the estimation of the intrinsic distances. Let $\d_\cG$ denote the graph metric, and note that it depends on $r$. Bernstein {\it et al} are able to show that, if $\cM$ is geodesically convex and $\eps/r \le C_1$, then
\begin{equation} 
\label{bernstein_bound} 
(1 - C_2 \eps/r) \d_\cG(x_i, x_j) \le \d_\cM(x_i, x_j) \le (1 + C_2 r^2) \d_\cG(x_i, x_j), \quad \forall i, j \in [n],
\end{equation}
where $C_1, C_2$ are constants depending on $\cM$.

The assumption of geodesic convexity is in fact not needed for~\eqref{bernstein_bound} to hold as long as the shortest paths on $\cM$ have curvature bounded by some $C$ depending on $\cM$, as shown in~\cite{arias2019unconstrained}. In that paper, the upper bound is derived based on the seminal work of Dubins~\cite{dubins1957curves} (the lower bound can be obtained by elementary means), and the problem is also considered under a curvature constraint on the paths.
The lower bound~\eqref{bernstein_bound} is derived independently by Oh {\it et al}~\cite{oh2010sensor} in the context of a convex domain, motivated by the problem of placing sensors that are only aware of other sensors within a prescribed distance --- one variant of the sensor network localization problem. Note that the upper bound $\d_\cM(x_i, x_j) \le \d_\cG(x_i, x_j)$ holds in that case.  
In the same setting, Janson {\it et al}~\cite{janson2018deterministic} derive a similar lower bound in the context of path planning in robotics in the presence of obstacles (although with some clearance) and where again the upper bound is trivial.
Arias-Castro {\it et al}~\cite{arias2020perturbation} sharpen the lower bound, replacing $\eps/r$ with $(\eps/r)^2$. They do so in the more general setting where $\cM$ is isometric to a convex domain. 

In summary, for general $C^2$ submanifolds, the best available bound remains~\eqref{bernstein_bound} as established recently in~\cite{arias2019unconstrained}. And if one optimizes the bounds in terms of $r$ (the tuning parameter here) --- a task that in principle requires knowledge of $\eps$ --- we find that the relative error rate is in $O(\eps^{2/3})$, specifically,
\begin{equation} 
\label{available_bound} 
|\d_\cM(x_i, x_j) - \d_\cG(x_i, x_j)| \le C \eps^{2/3} \d_\cM(x_i, x_j), \quad \forall i, j \in \{1, \dots, n\},
\end{equation}
where $C$ is a constant that depends on $\cM$.
In fact, this rate already appears in \cite{aaron2018convergence}.
If $\cM$ is isometric to a convex domain, the improved result in~\cite{arias2020perturbation} leads to a relative error rate in $O(\eps)$.

\begin{remark} 
Note that in sensor network localization~\cite{oh2010sensor} and in path planning~\cite{janson2018deterministic}, the quantity $r$ is typically {\em not} a parameter that the user can change. We are here in the original context of points in space, as in manifold learning.
\end{remark}

\subsection{A new error bound}
It turns out that~\eqref{available_bound} is far from optimal. Indeed, we show that it is possible to obtain estimates $\hat d_{ij}$ such that
\begin{equation} 
\label{distance_error} 
|\d_\cM(x_i, x_j) - \hat d_{ij}| \le C \eps^2 \min\{\d_\cM(x_i, x_j), \hat d_{ij}\}, \quad \forall i, j \in [n],
\end{equation}
where $C$ is again a generic constant depending on $\cM$.

We first propose a non-constructive approach that consists in interpolating the data points by a smooth surface, and then estimating the distance on $\cM$ by the distance on that interpolating surface. 

We then propose a more practical approach based instead on a mesh construction. The particular mesh construction that we use is the tangential Delaunay complex~\cite{Boissonnat2014, Boissonnat2018, Boissonnat2004, Freedman2002}. In fact, because it requires knowledge of the tangent subspaces to the surface $\cM$ at the sample points, we follow Aamari and Levrard~\cite{aamari2018stability} and first estimate the tangent spaces.
See \figref{basic_illustration} for an illustration.

\begin{figure}[tbp] 
\centering 
\includegraphics[width=0.40\linewidth]{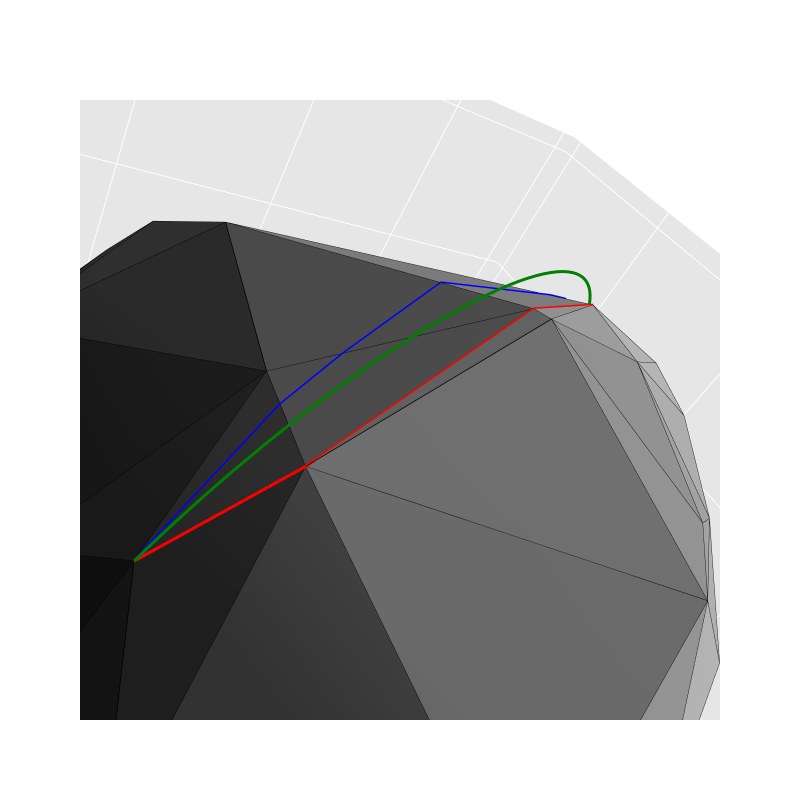} 
\includegraphics[width=0.40\linewidth]{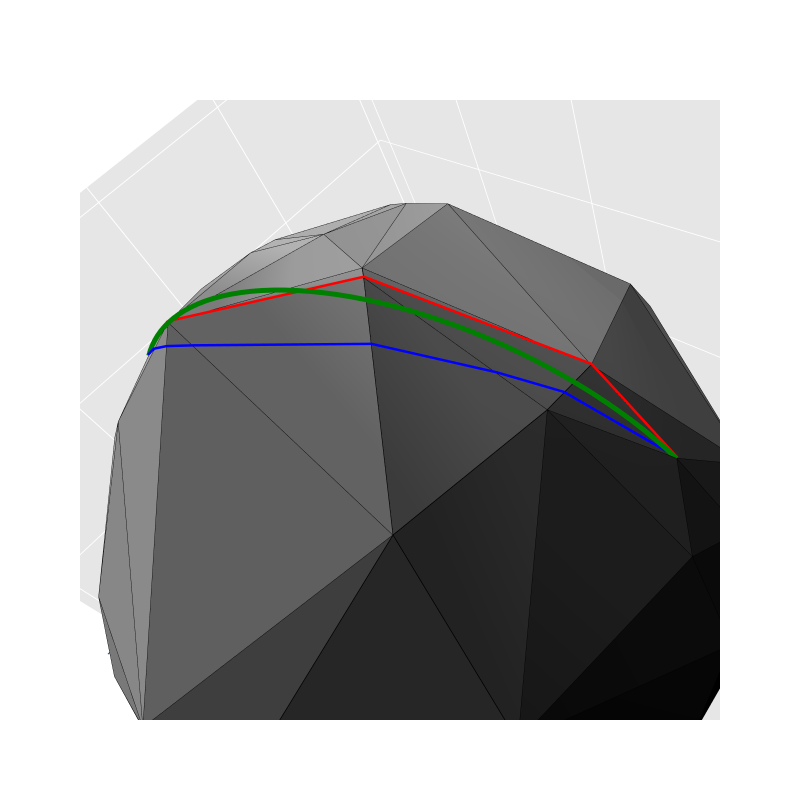} 
\caption{A setting where the underlying surface is a sphere. The shortest path on the surface (green), on the mesh (blue), and on the neighborhood graph (red), between two sample points are shown.} 
\label{fig:basic_illustration}
\end{figure}

In addition to proposing estimators that satisfy the performance bound~\eqref{distance_error}, we show that the relative error rate in $O(\eps^2)$ that results from that bound is best possible in an information-theoretic sense ---w even if we know that $\cM$ is isometric to a convex domain.

\begin{remark} 
Experts in computational geometry are aware of approximating meshes providing an approximation to the metric on the surface. For example, the paper~\cite{dyer2015riemannian} provides sufficient conditions for a mesh construction to satisfy an approximation bound like our \thmref{approximation}.\footnote{\cite[Th~3]{dyer2015riemannian}, as stated, provides a distortion bound in $O(1)$, which is of course less than satisfactory. However, a quick inspection reveals that, with the notation of that paper, one can take $h \le h_0 := \min\{\iota_M/4, t_0/6\sqrt{\Lambda}\}$ instead of $h = h_0$ as stated in the result, leading to a bound in $O(h^2)$, which is of similar order as~\eqref{distance_error} as $h$ there plays the role of $\eps$ here. We note in passing that the sufficient conditions provided in that theorem for a mesh to provide a good metric approximation rely on knowing the underlying surface.} 
This result is used in~\cite{boissonnat2018delaunay} to derive a bound similar to~\eqref{distance_error} for a mesh construction based on knowledge of an atlas of the underlying surface $\cM$. Here we show that knowledge of underlying surface is in fact not needed --- an important difference as we adopt an estimation/information-theoretic stance.
\end{remark}

\begin{remark}
After our work was made public, Aamari, Berenfeld and Levrard \cite{aamari2022optimal} obtained minimax bounds on the estimation of the metric of a $C^q$ submanifold based on a sample drawn iid from a density supported on the submanifold.
Although the setting is a little different, the bound is also in $O(\eps^2)$ when $q = 2$.
\end{remark}

\subsection{Application: minimax manifold learning}
We already mentioned one of the main methods for manifold learning, Isomap~\cite{tenenbaum2000global}, which consists in estimating the pairwise intrinsic distances by shortest path distances in a neighborhood graph, followed by an application of Classical Scaling. 
Arias-Castro {\it et al}~\cite{arias2020perturbation} derive an error bound for Isomap based on a perturbation bound for Classical Scaling. 

An improved estimation of the pairwise intrinsic distances naturally leads to an improved performance. We show that the resulting performance bound --- obtained by a combination of the new bound \eqref{distance_error} and perturbation bounds available in~\cite{arias2020perturbation} --- is optimal in an information-theoretic sense for the problem of manifold learning in the setting where the submanifold $\cM$ is isometric to a convex domain. 

In its more practical form, where a mesh reconstruction of the surface is used, we call the resulting method {\em Mesh Isomap}. This idea of using a mesh to improve on Isomap is not entirely and is brought up in a discussion~\cite{balasubramanian2002isomap} of the main Isomap paper~\cite{tenenbaum2000global}. We elaborate on this point in \remref{isomap_discussion}.


\subsection{Content}
The rest of the paper is organized as follows.
In \secref{preliminaries}, we list or quickly derive some results that will prove useful later on in the paper.
In \secref{distortion}, we derive an estimator that satisfies the announced performance bound~\eqref{distance_error}. We also show that this cannot be improved upon from an information-theoretic perspective. 
While the estimator defined and studied in that section is not constructive, in \secref{meshes} we propose a more practical alternative based on a particular mesh construction --- the tangential Delaunay complex --- which we show achieves the same level of performance.
We show the method in action in some numerical experiments.
In \secref{isomap}, we turn to the problem of (isometric) manifold learning and apply the conclusions of the previous sections to derive a minimax optimal procedure and showcase Mesh Isomap in some numerical experiments.
\secref{discussion} is a brief discussion section.

\section{Preliminaries}
\label{sec:preliminaries}

In this section we introduce some concepts and tools that will be used in subsequent sections to derive the main results.

\subsection{Length of a curve}
Before proceeding, we recall that a curve in $\bbR^d$ can be defined as the range of a continuous function $\gamma: [0, 1] \to \bbR^d$. We may identify a curve $\gamma$ with one of its parameterization without warning.
The length of a curve $\gamma$ is defined as
\begin{equation} 
\label{length} 
\Lambda(\gamma) := \sup \sum_j \|\gamma(t_{j+1}) - \gamma(t_j)\|,
\end{equation}
where the supremum is over all increasing sequences $(t_j) \subset [0,1]$.
We also note that, if $\cM$ is a closed topological submanifold of a Euclidean space without boundary,\footnote{This properties is also satisfied by topological submanifolds with boundary under some conditions on the boundary. We focus on submanifolds without boundary as these are the objects that occupy us in the present paper.}
then for each pair of points $x, x' \in \cM$ there is a shortest path on $\cM$ joining them, meaning that the following infimum is attained
\begin{equation} 
\inf\big\{\Lambda(\gamma) \mid \gamma: [0,1] \to \cM, \gamma(0) = x, \gamma(1) = x'\big\}.
\end{equation}

The following is by definition of parameterization by arc length.
\begin{lemma}\label{lem:arclength}
Consider a differentiable and injective function $\gamma: [0,1] \to \bbR^d$. 
For $t \in [0,1]$, let $\lambda(t)$ denote the length of $\gamma([0,t]) = \{\gamma(s): 0 \le s \le t\}$. 
Then there is a differentiable and injective function $\nu : [0, \Lambda(\gamma)] \to \bbR^d$ differentiable such that $\nu(\lambda(t)) = \gamma(t)$ for all $t \in [0,1]$. Since satisfies $\|\dot\nu(s)\| = 1$ for all $s$, $\nu$ is an isometric diffeomorphism between $[0, \Lambda(\gamma)]$ and $\gamma([0,1])$.
\end{lemma}

\subsection{Distortion maps}
A map $F: U \to \bbR^d$, where $U\subset\bbR^p$ where $p$ and $d$ may be different, is called a $\xi$-distortion map if
\begin{equation} 
\label{distortion_map} 
\big| \|F(x) - F(y) \| - \|x - y \| \big| \leq \xi \|x - y\|, \quad \forall x, y \in U.
\end{equation}
Note that a $\xi$-distortion map is Lipschitz with constant $(1+\xi)$, and if $\xi < 1$, it is injective and its inverse (defined on its range) is also Lipschitz with constant $(1-\xi)^{-1}$, and in fact, a $\xi/(1-\xi)$-distortion map. This is important because of the following.

\begin{lemma} 
\label{lem:distortion} 
For any curve $\gamma$ and any $L$-Lipschitz function $F$, $\Lambda(F(\gamma))\leq L\, \Lambda(\gamma)$. As a consequence, if $F$ is a $\xi$-distortion map with $\xi < 1$, then for any curve $\gamma$, 
\begin{equation} 
    \label{distortion} 
    \big|\Lambda(\gamma) - \Lambda(F(\gamma))\big| \le \frac\xi{1-\xi}\, \min\{\Lambda(\gamma), \Lambda(F(\gamma))\}. 
\end{equation}
\end{lemma}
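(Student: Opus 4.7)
The plan is straightforward: both parts reduce to the definition of length as a supremum of polygonal lengths over partitions, so the game is to push the pointwise Lipschitz / distortion inequalities through that supremum.

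For the first statement, I would fix a parameterization $\gamma:[0,1]\to\R^d$ and an arbitrary increasing sequence $(t_j)\subset[0,1]$. Applying the $L$-Lipschitz condition termwise gives $\|F(\gamma(t_{j+1}))-F(\gamma(t_j))\|\le L\|\gamma(t_{j+1})-\gamma(t_j)\|$, so summing yields $\sum_j\|F(\gamma(t_{j+1}))-F(\gamma(t_j))\|\le L\sum_j\|\gamma(t_{j+1})-\gamma(t_j)\|\le L\Lambda(\gamma)$. Taking the supremum over partitions on the left, which is exactly $\Lambda(F(\gamma))$, gives the claim.

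For the second statement, the $\xi$-distortion hypothesis gives the two-sided termwise bound $(1-\xi)\|\gamma(t_{j+1})-\gamma(t_j)\|\le \|F(\gamma(t_{j+1}))-F(\gamma(t_j))\|\le (1+\xi)\|\gamma(t_{j+1})-\gamma(t_j)\|$. Summing and then taking the supremum over partitions of $[0,1]$ on the right-hand inequality, as in the first part, yields $\Lambda(F(\gamma))\le (1+\xi)\Lambda(\gamma)$. For the matching lower bound I would argue: for any fixed partition, the left-hand inequality gives $\Lambda(F(\gamma))\ge\sum_j\|F(\gamma(t_{j+1}))-F(\gamma(t_j))\|\ge(1-\xi)\sum_j\|\gamma(t_{j+1})-\gamma(t_j)\|$; now taking the supremum over partitions on the far right delivers $\Lambda(F(\gamma))\ge(1-\xi)\Lambda(\gamma)$. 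Together these two bounds give $|\Lambda(F(\gamma))-\Lambda(\gamma)|\le\xi\Lambda(\gamma)$, which is the first inequality in \eqref{distortion}. The second inequality then follows by substituting $\Lambda(\gamma)\le\Lambda(F(\gamma))/(1-\xi)$ (a rearrangement of the lower bound just established) into $\xi\Lambda(\gamma)$.

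The only subtle point — really the only thing that could conceivably go wrong — is the step where one takes the supremum on both sides of an inequality between partition-dependent sums. It is legitimate here because in each case one side is bounded above by a partition-independent quantity ($\Lambda(\gamma)$ or $\Lambda(F(\gamma))$), so the supremum can be taken on the variable side only. No regularity of $\gamma$ beyond being a continuous image of $[0,1]$ is needed, and no rectifiability assumption is required since if $\Lambda(\gamma)=\infty$ the first bound $\Lambda(F(\gamma))\le(1+\xi)\Lambda(\gamma)$ is vacuous and the lower bound forces $\Lambda(F(\gamma))=\infty$ as well, so the equality of infinities is consistent with \eqref{distortion}.
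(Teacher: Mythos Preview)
Your proof is correct and follows essentially the same route as the paper: push the termwise Lipschitz/distortion inequalities through the supremum definition of length. The only cosmetic difference is that for the lower bound the paper explicitly invokes the inverse $F^{-1}$ (using that it is $(1-\xi)^{-1}$-Lipschitz, as noted just before the lemma) and applies the first part to it, whereas you obtain $(1-\xi)\Lambda(\gamma)\le\Lambda(F(\gamma))$ directly from the lower side of the distortion inequality --- a slightly more self-contained variant of the same argument.
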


\begin{proof} 
We provide a proof for completeness. For the first part, we first note that $F(\gamma) = F\circ\gamma$ is indeed a curve. Also, for an increasing sequence $(t_j) \subset [0,1]$, we have 
\begin{align} 
    \sum_j \|F\circ\gamma(t_{j+1}) - F\circ\gamma(t_j)\| 
     & \le \sum_j L \|\gamma(t_{j+1}) - \gamma(t_j)\|, 
\end{align} 
and taking the supremum over all such sequences leads to the desired bound.
 
For the second part, meaning \eqref{distortion}, from the first part we obtain $\Lambda(F(\gamma)) \le (1+\xi) \Lambda(\gamma)$ since $F$ is $(1+\xi)$-Lipschitz. Let $\eta = F(\gamma)$. Then $F^{-1}$ is obviously defined on $\eta$, and being $(1-\xi)^{-1}$-Lipschitz, the first part gives $\Lambda(F^{-1}(\eta)) \le (1-\xi)^{-1} \Lambda(\eta)$, or equivalently, $\Lambda(\gamma) \le (1-\xi)^{-1} \Lambda(F(\gamma))$. 
This gives the first inequality in~\eqref{distortion}. 
Tow cases are possible.
If $\Lambda(\gamma) \ge \Lambda(F(\gamma))$, then we use the first inequality to get 
\[
0 \le \Lambda(\gamma) - \Lambda(F(\gamma))\le (1+\xi) \Lambda(\gamma) - \Lambda(\gamma) 
= \xi \Lambda(\gamma).\]
If $\Lambda(\gamma) \le \Lambda(F(\gamma))$, then we use the second inequality to get 
\[
0 \le \Lambda(F(\gamma)) - \Lambda(\gamma)\le (1-\xi)^{-1} \Lambda(\gamma) - \Lambda(\gamma) 
= \frac\xi{1-\xi} \Lambda(\gamma).\]
In either case, \eqref{distortion} is implied.
\end{proof}

The following is a simple corollary of this lemma. Although straightforward, it is at the very root of this idea of using a surface reconstruction to obtain better approximating rates for the intrinsic distances. In what follows, $\cS$ should be thought of as playing the role of approximating surface to $\cM$, even as they in fact play symmetric roles.

\begin{corollary} 
\label{cor:distortion} 
Suppose that $\cS \subset \bbR^p$ is a closed topological submanifold without boundary and that $F: \cS \to \bbR^d$ is some $\xi$-distortion map with $\xi < 1$. Then $\cM := F(\cS) \subset \bbR^d$ is also a closed topological submanifold without boundary. Morevover, the distance on $\cM$ can be approximated by the distance on $\cS$ to within a relative error of $(1-\xi)^{-1}$ in the sense that 
\begin{align} 
    \big| \d_\cM(x, x') - \d_{\cS}(F^{-1}(x), F^{-1}(x')) \big|
    & \le \frac\xi{1-\xi} \min\{ \d_\cM(x, x'), \d_{\cS}(F^{-1}(x), F^{-1}(x'))\},
     \quad \forall x, x' \in \cM. 
\end{align}
\end{corollary}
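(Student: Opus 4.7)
The plan is to leverage \lemref{distortion} twice---once on a shortest path in $\cS$ to obtain an upper bound on $\d_\cM$, and once on a shortest path in $\cM$ to obtain a matching lower bound---after first checking that $\cM$ inherits enough structure for shortest paths on it to exist.

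First I would verify that $\cM = F(\cS)$ is a closed topological submanifold of $\R^d$ without boundary. Since $\xi < 1$, the discussion just before the corollary records that $F$ is injective and $(1+\xi)$-Lipschitz, with $(1-\xi)^{-1}$-Lipschitz inverse on its range, so $F$ is a homeomorphism onto $\cM$. This transports the topological-submanifold and no-boundary properties from $\cS$ to $\cM$. For closedness in $\R^d$, I would use that $\cS$ is complete in the induced Euclidean metric (being a closed subset of $\R^p$), that bi-Lipschitz maps preserve completeness, and that a complete subset of $\R^d$ is closed in $\R^d$.

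Next, fix $x, x' \in \cM$ and set $u = F^{-1}(x)$, $u' = F^{-1}(x')$. For the upper bound on $\d_\cM(x, x')$, pick a shortest path $\gamma$ on $\cS$ from $u$ to $u'$, which exists by the preliminaries. Then $F\circ\gamma$ is a curve on $\cM$ from $x$ to $x'$, and \lemref{distortion} gives $\Lambda(F(\gamma)) \le (1+\xi)\Lambda(\gamma) = (1+\xi)\,\d_\cS(u, u')$, whence $\d_\cM(x, x') \le (1+\xi)\,\d_\cS(u, u')$. For the matching lower bound, pick a shortest path $\eta$ on $\cM$ from $x$ to $x'$, which exists by the first step applied to $\cM$. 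The curve $F^{-1}\circ\eta$ joins $u$ and $u'$ in $\cS$, so $\d_\cS(u, u') \le \Lambda(F^{-1}(\eta))$. Applying \lemref{distortion} to the curve $F^{-1}(\eta)$ (whose image under $F$ is $\eta$) gives $\Lambda(\eta) \ge (1-\xi)\,\Lambda(F^{-1}(\eta))$, hence $\d_\cM(x, x') \ge (1-\xi)\,\d_\cS(u, u')$.

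Combining the two inequalities yields $|\d_\cM(x, x') - \d_\cS(u, u')| \le \xi\,\d_\cS(u, u')$; then substituting $\d_\cS(u, u') \le \d_\cM(x, x')/(1 - \xi)$ (which is the same lower bound rearranged) into the right-hand side gives the second stated inequality. I do not foresee a real obstacle here: all the analytic content is already packaged in \lemref{distortion}, and the corollary is essentially its reformulation in terms of intrinsic distances once the existence of shortest paths has been secured. The one subtlety is to verify cleanly that $\cM$ is a closed topological submanifold without boundary, so that the shortest-path-existence statement from the preliminaries can be invoked for $\cM$ as well as for $\cS$.
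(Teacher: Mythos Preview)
Your proposal is correct and follows essentially the same approach as the paper: establish that $F$ is a homeomorphism so that $\cM$ inherits the requisite properties, then push a shortest path on $\cS$ forward via $F$ for the upper bound and a shortest path on $\cM$ back via $F^{-1}$ for the lower bound, both times invoking \lemref{distortion}. The paper is slightly terser (it writes ``similarly'' for the lower bound and does not spell out the completeness argument for closedness), but the substance is identical.
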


\begin{proof} 
The fact that $\cM$ is a closed topological submanifold without boundary is because $\cS$ satisfies these properties by assumption and $F$ is a homeomorphism between $\cS$ and $\cM$. 
Now, take $x, x' \in \cM$, and let $\gamma$ be a shortest path on $\cS$ between $y := F^{-1}(x)$ and $y' := F^{-1}(x')$ so that $\d_{\cS}(y,y') = \Lambda(\gamma)$. 
Applying \lemref{distortion}, using the fact that $F$ is $(1+\xi)$-Lipschitz, we have that $\Lambda(F(\gamma)) \le (1+\xi)\Lambda(\gamma)$. And since $F(\gamma) = F\circ\gamma$ is a curve on $\cM$ between $x$ and $x'$, we have $\d_\cM(x,x') \le  \Lambda(F(\gamma))$. We thus have 
\[\d_\cM(x,x') 
    \le \Lambda(F(\gamma)) 
    \le (1+\xi)\Lambda(\gamma) 
    = (1+\xi) \d_{\cS}(y,y').\] 
Similarly, using the fact that $F^{-1}$ is $(1-\xi)^{-1}$-Lipschitz, we obtain 
\[\d_{\cS}(y,y') \le (1-\xi)^{-1} \d_\cM(x,x').\] 
We conclude combining these two bounds.
\end{proof}

\subsection{Medial axis, reach, and metric projection}
The medial axis of $\cM$, denoted ${\rm ax}(\cM)$, is the set of points in $\bbR^d$ that have two or more closest points on $\cM$. We define the (metric) projection onto $\cM$ as $P_\cM: \bbR^d \setminus {\rm ax}(\cM) \to \cM$ that sends a point $x$ to its (unique) closest point on $\cM$.
The reach of $\cM$ the infimum of the distance between a point in $\cM$ and ${\rm ax}(\cM)$~\cite{Federer1959}.
It is well-known that a compact connected $C^2$ submanifold without boundary has a (strictly) positive reach, and that the inverse of the reach bounds from above the (sectional) curvature on $\cM$, pointwise.

Recall that the $h$-tubular neighborhood of $\cM \subset \bbR^d$ is the set of all points that are within distance $h$ of $\cM$, meaning $\{x: \dist(x, \cM) \le h\}$.

\begin{lemma}[Th~4.8(8) in~\cite{Federer1959}; Lem~7.13 in~\cite{Boissonnat2018}] 
\label{lem:pi_lipschitz} 
If $\cM$ has reach $\ge \rho$, then for any $h < \rho$, $P_\cM$ is $\rho/(\rho-h)$-Lipschitz on the $h$-tubular neighborhood of $\cM$.
\end{lemma}

\begin{lemma} 
\label{lem:both_projections} 
If $\cM$ and $\cS$ have reach $\ge \rho$ and are within Hausdorff distance $h \le \rho/2$, then 
\begin{equation} 
    |\d_\cM(x,x') - \d_{\cS}(x, x')| 
    \le (2h/\rho) \min\{\d_\cM(x,x'), \d_{\cS}(x,x')\}, \quad \forall x, x' \in \cM \cap \cS. 
\end{equation}
\end{lemma}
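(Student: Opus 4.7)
The plan is to use the metric projection $P_\cM$ to transport shortest paths on $\cM'$ to curves on $\cM$ (and symmetrically), exploiting the fact that the Hausdorff hypothesis places all of $\cM'$ inside the $h$-tubular neighborhood of $\cM$, where $P_\cM$ is well-defined and controlled.

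First, I would observe that since $\cM$ and $\cM'$ are within Hausdorff distance $h$, every point of $\cM'$ lies within distance $h < r$ of $\cM$. In particular $\cM' \subset \Unp(\cM)$ and, by \lemref{pi_lipschitz}, $P_\cM$ is $r/(r-h)$-Lipschitz on $\cM'$. Also, since $x, x' \in \cM \cap \cM'$, we have $P_\cM(x) = x$ and $P_\cM(x') = x'$.

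The main step is then straightforward: let $\gamma'_n$ be a sequence of curves in $\cM'$ joining $x$ and $x'$ with $\Lambda(\gamma'_n) \to \d_{\cM'}(x,x')$ (or simply a minimizing shortest path, which exists under \aspref{M}). The image $P_\cM \circ \gamma'_n$ is a curve on $\cM$ from $x$ to $x'$, and by the first part of \lemref{distortion} applied to the Lipschitz map $P_\cM$,
\begin{equation}
    \d_\cM(x,x') \le \Lambda(P_\cM \circ \gamma'_n) \le \frac{r}{r-h} \Lambda(\gamma'_n).
\end{equation}
Passing to the limit gives $\d_\cM(x,x') \le \frac{r}{r-h} \d_{\cM'}(x,x')$, equivalently
\begin{equation}
    \d_\cM(x,x') - \d_{\cM'}(x,x') \le \frac{h}{r-h} \d_{\cM'}(x,x').
\end{equation}
The roles of $\cM$ and $\cM'$ are completely symmetric in the hypothesis, so the same argument with $P_{\cM'}$ applied to a near-minimizing curve on $\cM$ yields
\begin{equation}
    \d_{\cM'}(x,x') - \d_\cM(x,x') \le \frac{h}{r-h} \d_\cM(x,x').
\end{equation}

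To conclude, I would split on which of $\d_\cM(x,x')$, $\d_{\cM'}(x,x')$ is smaller: in either case, one of the two displayed inequalities above yields exactly $|\d_\cM(x,x') - \d_{\cM'}(x,x')| \le \frac{h}{r-h} \min\{\d_\cM(x,x'), \d_{\cM'}(x,x')\}$. I do not foresee a real obstacle here; the only point that deserves a line of justification is that the full image curve $\gamma'_n$ genuinely stays inside the $h$-tubular neighborhood of $\cM$ (it does, by the Hausdorff hypothesis, since it lies on $\cM'$), so that $P_\cM$ is defined and Lipschitz along all of it.
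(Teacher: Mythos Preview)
Your proof is correct and follows essentially the same route as the paper's: project a shortest (or near-shortest) path from one surface onto the other via the metric projection, use \lemref{pi_lipschitz} for the Lipschitz constant $r/(r-h)$ and \lemref{distortion} for the length bound, then invoke the symmetry between $\cM$ and $\cM'$. Your use of a near-minimizing sequence $\gamma'_n$ is a harmless extra precaution; the paper simply takes a shortest path directly.
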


\begin{proof} 
Let $\gamma$ be a shortest path on $\cM$ between $x$ and $x'$, so that $\Lambda(\gamma) = \d_\cM(x,x')$. 
Because $\gamma \subset \cM$, $\gamma$ is entirely in the $h$-tubular neighborhood of $\cS$, and we may define $\zeta := P_{\cS}(\gamma)$, which is a curve on $\cS$ joining $x$ and $x'$. In particular, $\d_{\cS}(x,x') \le \Lambda(\zeta)$. 
The fact that $\gamma$ is entirely in the $h$-tubular neighborhood of $\cS$ also implies via \lemref{pi_lipschitz} that $P_{\cS}$ is $(1+\xi)$-Lipschitz on $\gamma$ with $\xi := h/(\rho-h)$, in turn implying via \lemref{distortion} that $\Lambda(\zeta) \le (1+\xi)\, \Lambda(\gamma)$. We have thus established that $\d_{\cS}(x,x') \le (1+\xi)\, \d_\cM(x,x')$. 

The reverse inequality holds by symmetry, given that $\cM$ and $\cS$ play the same role, and applying these two bounds together with the fact that $h \le \rho/2$ --- which implies that $\xi \le 2h/\rho$ --- yields 
\[\d_{\cS}(x,x') - \d_\cM(x,x') \le \frac{2h}{\rho}\, \d_\cM(x,x') \quad \text{and} \quad \d_{\cM}(x,x') - \d_{\cS}(x,x') \le \frac{2h}{\rho}\, \d_{\cS}(x,x'),\]
from which the result follows immediately.
\end{proof}

\subsection{Simplexes}
\label{sec:simplexes}
A finite subset $\sigma$ of a Euclidean space is said to be a $k$-simplex if $\sigma$ is the convex hull of $k+1$ affinely independent points.
The thickness $\tau(\sigma)$ of a $k$-simplex $\sigma$ is defined as the ratio of its smallest altitude to its diameter. (A slightly different definition is given in~\cite{Boissonnat2018}, but the two notions are proportional to each other.) 

The thickness of a simplex is a measure of its regularity in that a lower bound on the thickness implies a lower bound on the angles of the simplex, and also on the ratio of the lengths of its shortest and longest edges. In particular, a regular $k$-simplex has the largest possible thickness among all $k$-simplexes, equal to $\tau_k := \sqrt{(k+1)/2k}$.

The thickness of a simplex $\sigma$ can also be measured based on its side length ratio $\pi(\sigma)$, defined as the length of its shortest edge divided by the length of its longest edge. 
Indeed, the following (straightforward) result holds.

\begin{lemma} \label{lem:thickness}
There is an increasing homeomorphism $\phi_k$ of $[0,1]$ such that $\tau(\sigma) \ge \tau_k\, \phi_k(\pi(\sigma))$ for any $k$-simplex $\sigma$.
\end{lemma}

%

\subsection{Affine subspaces}
\label{sec:affine}

Affine subspaces will play an important role in the form of tangent spaces. We will need the following bounds on the angle between affine subspaces. For two such subspaces, $T$ and $T'$, we denote by $\angle (T,T')$ their angle, or more precisely, their maximum principal (aka canonical) angle~\cite[Sec~I.5.2]{stewart1990matrix}.

The first result is referred to as Whitney's angle bound in~\cite{Boissonnat2018}.

\begin{lemma}[Lem~15c in~\cite{whitney1957geometric} or Lem~5.14 in~\cite{Boissonnat2018}] 
\label{lem:whitney} 
Let $T$ be an affine subspace and let $\sigma$ be a $k$-simplex whose edges are all of length at least $\eta$ and whose vertices are all within distance $\delta$ of $T$. Then 
\[\sin\angle({\rm aff}(\sigma), T) \le \frac2{(k-1)!} \frac{\delta}{\tau(\sigma) \eta},\] 
where ${\rm aff}(\sigma)$ is the affine subspace generated by the vertices of $\sigma$.
\end{lemma}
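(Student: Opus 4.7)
The plan is to translate the angle bound into a linear-algebra statement about the edge vectors of $\sigma$ and then use the thickness hypothesis to control the resulting linear system. Let $v_0, v_1, \dots, v_k$ be the vertices of $\sigma$ and set $e_i := v_i - v_0$, so that $V := \operatorname{span}(e_1, \dots, e_k)$ is the direction subspace of $\operatorname{aff}(\sigma)$. After translating by $-P_T(v_0)$, we may assume $T$ is a linear subspace $T_0$ with $\|v_0\| \le \delta$, and the bound $\operatorname{dist}(v_i, T_0) \le \delta$ is preserved. Using the variational characterization of the maximum principal angle,
\[
\sin\angle(\operatorname{aff}(\sigma), T) \;=\; \sup_{u \in V,\; \|u\|=1} \|P_{T_0^\perp}\, u\|,
\]
so the task reduces to bounding $\|P_{T_0^\perp}\, u\|$ uniformly for unit $u \in V$.

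First, the triangle inequality yields $\|P_{T_0^\perp}\, e_i\| \le \operatorname{dist}(v_i, T_0) + \operatorname{dist}(v_0, T_0) \le 2\delta$ for each edge vector. Writing a unit vector $u \in V$ in the edge-vector basis as $u = \sum_i c_i e_i$, one obtains $\|P_{T_0^\perp}\, u\| \le 2\delta\,\|c\|_1$, so it remains to bound $\|c\|_1$. Setting $E := [e_1 \mid \cdots \mid e_k]$, one has $\|c\|_2 \le 1/\sigma_{\min}(E)$, and a lower bound $\sigma_{\min}(E) \gtrsim \tau(\sigma)\,\eta$ would close the argument up to a dimensional constant. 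The natural route is to apply Gram–Schmidt to $e_1, \dots, e_k$: the orthogonalized norm $\|\tilde e_i\|$ equals the distance from $v_i$ to $\operatorname{aff}(v_0, \dots, v_{i-1})$, i.e.\ the altitude of the nested sub-simplex $\operatorname{conv}(v_0, \dots, v_i)$ from $v_i$. Each such altitude is at least of order $\tau(\sigma)\,\eta$, using that the thickness of $\sigma$ controls altitudes of its faces and that edge lengths are at least $\eta$.

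The main obstacle is recovering the sharp factorial constant $2/(k-1)!$ in the statement (and, for that matter, correcting what appears to be an inversion of the ratio $\eta/\delta$ in the conclusion, as the bound is only meaningful with $\delta$ in the numerator and $\eta$ in the denominator). The singular-value argument above only yields a bound of the form $C_k\,\delta/(\tau(\sigma)\,\eta)$ with an unspecified dimensional constant; extracting the precise factorial factor requires tracking the full product of orthogonal altitudes in the Gram–Schmidt factorization together with the identity $|\det E| = \prod_i \|\tilde e_i\|$, and a careful induction on $k$ in the spirit of Whitney's original monograph. Since only the qualitative scaling $\sin\angle(\operatorname{aff}(\sigma), T) \lesssim \delta/(\tau(\sigma)\,\eta)$ is needed for the applications in Section~\ref{sec:distortion}, the sharper constant is inessential to our arguments, and I would simply cite~\cite{whitney1957geometric} or~\cite{Boissonnat2018} for the precise form.
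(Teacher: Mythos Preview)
The paper does not prove this lemma at all: it is stated with attribution to Whitney and to Boissonnat--Chazal--Yvinec and used as a black box in the proof of \lemref{tangent_angle_intersection}. So there is no in-paper proof to compare against, and your decision to sketch the standard argument and then cite the references for the sharp constant is exactly what the authors themselves do (minus the sketch).

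Your sketch is the standard one and is sound. You are also right that the ratio in the stated bound is inverted: dimensionally the conclusion must scale like $\delta/(\tau(\sigma)\,\eta)$, not $\eta/(\tau(\sigma)\,\delta)$, and indeed the application in the proof of \lemref{tangent_angle_intersection} plugs in $\delta \asymp \eps^2$ and edge scale $\asymp \eps$ to obtain an $O(\eps)$ angle, which only works with $\delta$ in the numerator. So the statement as printed contains a typo; your corrected form is the one actually used downstream.
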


In the next result, we compare the distances of a point to two intersecting affine subspaces based on the angle between these subspaces.
\begin{lemma} 
\label{lem:dist_angle} 
For two intersecting affine subspaces $T$ and $T'$, and any point $x$, 
\begin{equation} 
    |\dist(x, T) - \dist(x, T')| 
    \le \angle (T, T') \dist(x, T \cap T'). 
\end{equation}
\end{lemma}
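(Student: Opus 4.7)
The plan is to translate so that $T \cap T'$ contains the origin and hence $T, T', L := T \cap T'$ become linear subspaces, and then to compare $x$ with its projection $x' := P_{T'}(x)$ onto $T'$. A straightforward triangle inequality yields
\[
\dist(x,T) \;\le\; \|x - P_T(x')\| \;\le\; \|x - x'\| + \|x' - P_T(x')\| \;=\; \dist(x,T') + \dist(x',T),
\]
so it suffices to bound $\dist(x', T)$ by $\angle(T, T') \dist(x, L)$; the reverse inequality then follows by symmetry.

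The geometric heart of the argument is the estimate
\[
\dist(y, T) \;\le\; \sin\angle(T, T') \cdot \dist(y, L) \quad \text{for every } y \in T'.
\]
To prove it, decompose $T = L \oplus T_0$ and $T' = L \oplus T_0'$ as orthogonal direct sums in the ambient space, with $T_0 := T \cap L^\perp$ and $T_0' := T' \cap L^\perp$. For $y \in T'$, write $y = P_L(y) + v$ with $v \in T_0'$, so that $\dist(y, L) = \|v\|$. Since $P_L(y) \in L \subset T$ and $T_0 \perp L$, one computes $P_T(y) = P_L(y) + P_{T_0}(v)$, whence $\dist(y, T) = \|v - P_{T_0}(v)\| = \dist(v, T_0)$. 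By construction the maximum principal angle between $T_0$ and $T_0'$ coincides with $\angle(T, T')$, and the definition of that angle gives $\dist(v, T_0) \le \sin\angle(T_0, T_0') \|v\|$, as desired.

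The final ingredient is the inequality $\dist(x', L) \le \dist(x, L)$. Because $L \subset T'$, the vector $x - x'$ lies in $(T')^\perp \subset L^\perp$, while $x' - P_L(x')$ lies in $T'$; in particular $P_L(x') = P_L(x)$. Pythagoras applied to the orthogonal decomposition
\[
x - P_L(x) \;=\; (x - x') + (x' - P_L(x'))
\]
then gives $\dist(x, L)^2 = \dist(x, T')^2 + \dist(x', L)^2$, from which $\dist(x', L) \le \dist(x, L) = \dist(x, T \cap T')$ follows.

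Chaining the three estimates and using $\sin\theta \le \theta$ yields
\[
\dist(x,T) - \dist(x,T') \;\le\; \sin\angle(T,T') \, \dist(x, T \cap T') \;\le\; \angle(T,T') \, \dist(x, T \cap T'),
\]
and the symmetric bound follows by swapping the roles of $T$ and $T'$. The only mildly delicate step is the angle bound in the second paragraph; the main obstacle is keeping the principal-angle bookkeeping straight and verifying that the angle between the orthogonal complements $T_0, T_0'$ inside $L^\perp$ is the same as $\angle(T, T')$, which is immediate once one recalls that the $L$-direction contributes only zero principal angles.
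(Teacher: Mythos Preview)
Your proof is correct, and it takes a genuinely different route from the paper's. The paper introduces the single point $y = P_{T\cap T'}(x)$ and writes $\dist(x,T) = \sin\theta\,\|x-y\|$ and $\dist(x,T') = \sin\theta'\,\|x-y\|$, where $\theta = \angle((xy),T)$ and $\theta' = \angle((xy),T')$; it then invokes the triangle inequality for angles between subspaces to get $|\theta-\theta'|\le\angle(T,T')$ and finishes with $|\sin\theta-\sin\theta'|\le|\theta-\theta'|$. By contrast, you project $x$ onto $T'$, use the ordinary triangle inequality for distances, and reduce to the purely linear estimate $\dist(y,T)\le\sin\angle(T,T')\,\dist(y,L)$ for $y\in T'$, which you prove via the orthogonal splitting $T=L\oplus T_0$, $T'=L\oplus T_0'$. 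Your argument is longer but more self-contained: it avoids the angle triangle inequality (which the paper states without proof and which is itself not entirely trivial), and along the way it makes explicit the Pythagorean identity $\dist(x,L)^2=\dist(x,T')^2+\dist(x',L)^2$. The paper's argument is terser and treats both directions at once rather than by symmetry. One small caveat: your identification of $\angle(T_0,T_0')$ with $\angle(T,T')$ and the bound $\dist(v,T_0)\le\sin\angle(T_0,T_0')\|v\|$ are cleanest when $\dim T=\dim T'$, which is the only case used in the paper; for subspaces of unequal dimension the notion of ``maximum principal angle'' needs a convention, though the lemma as applied never leaves the equidimensional setting.
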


\begin{proof} 
Let $t \in T$, $t' \in T'$, and $y \in T \cap T'$ be closest to $x$ in their respective set. Define the angles 
\[\theta = \angle((xy), (ty)) = \angle((xy), T), \qquad 
\theta' = \angle((xy), (t'y)) = \angle((xy), T').\] 
Then $\dist(x, T) = \sin(\theta) \|x-y\|$ and $\dist(x, T') = \sin(\theta') \|x-y\|$, so that 
\begin{align*} 
    |\dist(x, T) - \dist(x, T')| 
     & \le |\sin\theta - \sin\theta'|\, \|x-y\|.
\end{align*}
We conclude with
\begin{align*} 
|\sin\theta - \sin\theta'|
\le |\theta - \theta'|
\le \angle(T,T'),
\end{align*}
by the triangle inequality for angles between subspaces, combined with 
\[\|x-y\| \le \dist(x, T \cap T'),\] 
due to the simple fact that $y \in T \cap T'$.
\end{proof}

Let $P_T$ denote the orthogonal projection onto the affine subspace $T$ (which is also the metric projection onto $T$). Also, for matrix $A$, let $\|A\|$ denote the operator norm of $A$. 
The following is well-known~\cite[Sec~I.5.2]{stewart1990matrix}.
\begin{lemma} \label{lem:proj_angle}
For two linear subspaces $T$ and $T'$ of same dimension, $\|P_{T} - P_{T'}\| = \sin\angle(T, T')$. Moreover, $\min\{\|U-{\rm I}\| : U \text{ orthogonal, } UT = T'\} = 2 \sin (\frac12 \angle(T, T'))$.
\end{lemma}

\subsection{Tangent spaces}
For a submanifold $\cM$, we let $T_\cM(x)$ denote the tangent space of $\cM$ at $x \in \cM$.
A lot is known about the tangent spaces of a submanifold with positive reach and their orthogonal projections.

The first result is on the distance of a point on the surface to a tangent space at some other point on the surface, and conversely, on the distance of a point on a tangent space to the surface.

\begin{lemma}[Th~4.18 in~\cite{Federer1959}; Lem~7.8(2) in~\cite{Boissonnat2018}; Lem~2 in~\cite{arias2017spectral}] 
\label{lem:tangent_distance} 
Let $\cM$ be a submanifold with reach at least $r > 0$. 
For $x, x' \in \cM$, $\dist(x', T_\cM(x)) \le \frac1{2r} \|x-x'\|^2$. 
Moreover, if $t \in T_\cM(x)$ is such that $\|t-x\| \le r/3$, then $\dist(t, \cM) \le \frac1r \|t-x\|^2$.
\end{lemma}

The second result is on the distortion of the projection onto a tangent space restricted to a neighborhood of the surface around the point of contact.

\begin{lemma}[Lem~7.14(1) in~\cite{Boissonnat2018} and Lem~5 in~\cite{arias2017spectral}] 
\label{lem:tangent_distortion} 
Let $\cM$ be a submanifold with reach at least $r > 0$. 
For any $h < r/2$ and any $x \in \cM$, the restriction to $B(x, h) \cap \cM$ of the orthogonal projection onto the tangent space at $x$ is a $4 (h/r)^2$-distortion map. 
Moreover the projection of $B(x, h) \cap \cM$ contains $B(x, h - C h^3) \cap T_\cM(x)$ for a constant $C$ that depends only on $\cM$.
\end{lemma}

%

\section{Minimax metric estimation}
\label{sec:distortion}

The basic idea leading to our new bound~\eqref{distance_error} is to reconstruct the surface, at least approximately, and then compute the shortest paths between the sample points on the reconstructed surface. In our case, it turns out that the reconstructed surface interpolates the sample points, but this is not necessary in principle.

\subsection{Metric estimation by surface reconstruction}
\label{sec:surface_reconstruction}
In this subsection, we are in a setting where we have a set of points $\bX = \{x_1, \dots, x_n\} \subset \bbR^d$ assumed, as in \eqref{eps}, to be an $\eps$-covering of a set $\cM \subset \bbR^d$ satisfying the following properties:
\begin{assumption} 
\label{asp:M} 
$\cM$ is a compact and connected $k$-dimensional $C^2$ submanifold without boundary.
\end{assumption}
See \remref{boundary} for extensions.

Our main goal is to define a surface $\hat\cM$ interpolating the same points and with similar characteristics. This surface will approximate $\cM$ well enough that the distances on $\hat\cM$ will be good approximations to the distances on $\cM$. The approach for defining $\hat\cM$ is not constructive, but rather relies on the axiom of choice. We present an actual construction in \secref{meshes} based on recent developments in computational geometry. Our definition here is much more elementary and is enough to establish the achievability of~\eqref{distance_error}, at least from an information-theoretic perspective.

Let $\bbM = \bbM(k, \bX, \eps)$ denote the class of submanifolds satisfying \aspref{M} for which $\bX$ is an $\eps$-covering. We know that $\bbM$ is non-empty since $\cM \in \bbM$. 
Let $\rho_{\rm max}$ denote the supremum reach among surfaces in $\bbM$. Select any surface\footnote{If there is a surface in $\bbM$ with reach $\rho_{\rm max}$, it is natural to choose such a surface. We believe this is possible, but we are not sure. In any case, what matters is that the regularity of $\hat\cM$ is controlled as a function of $\cM$.}  
$\hat\cM$ in $\bbM$ with reach $\rho(\hat\cM) \ge \rho_{\rm max}/2$, so that $\rho(\hat\cM) \ge \rho(\cM)/2$.
The surface $\hat\cM$ offers a good approximation to $\cM$, as the following result establishes.

\begin{proposition} 
\label{prp:approximation} 
There is $C$ which only depends on $\cM$ such that $\dist(\cM, \hat\cM) \le C \eps^2$.
\end{proposition}

With the interpolating surface $\hat\cM$ defined, we estimate the metric on $\cM$ by the metric on $\hat\cM$. Therefore, define the estimator
\begin{equation} \label{d_hat}
\hat d_{ij} := \d_{\hat\cM}(x_i, x_j), \quad \forall i, j \in [n].
\end{equation}

\begin{theorem} 
\label{thm:approximation} 
There is a constant $C$ that only depends on $\cM$ such that, whenever $\eps \le 1/C$, 
\begin{equation} 
    \label{approximation} 
    |\d_\cM(x,x') - \d_{\hat\cM}(x,x')| 
    \le C \eps^2 \min\{\d_\cM(x,x'), \d_{\hat\cM}(x,x')\}, \quad \forall x, x' \in \cM \cap \hat\cM. 
\end{equation} 
This implies that the estimator defined in \eqref{d_hat} satisfies \eqref{distance_error}.
\end{theorem}

\begin{proof} 
Let $C_0$ be the constant of \prpref{approximation}. 
Suppose that $\eps$ is small enough that $C_0 \eps^2 \le \rho(\cM)/4$. Because $\rho(\hat\cM) \ge \rho(\cM)/2$ by construction, we have $\dist(\cM, \hat\cM) \le \frac12 \min\{\rho(\cM), \rho(\hat\cM)\}$, which allows us to apply  \lemref{both_projections} and conclude.
\end{proof}

We now turn to the proof of \prpref{approximation}. In what follows, $C, C_1, C_2, \dots$ are generic constants that only depend on $\cM$ and may change with each appearance.

\begin{lemma} 
\label{lem:simplex} 
In the present situation, whenever $\eps \le 1/C$, for every $x \in \cM$, there are sample points $x_{i_1}, \dots, x_{i_k}$ such that the $k$-simplex defined by $\{x, x_{i_1}, \dots, x_{i_k}\}$ has minimum side length $\ge \eps$ and thickness $\ge 1/C$.
\end{lemma}

\begin{proof} 
Let $T$ be shorthand for $T_\cM(x)$. For $A > 0$ to be chosen large enough later, pick $t_1, \dots, t_k \in T$ such that the convex hull of $\{x, t_1, \dots, t_k\}$ is a regular $k$-simplex of side length $A \eps$. By \lemref{tangent_distortion}, the resulting map is one-to-one on $B(x, h) \cap \cM$ whenever $h < \rho(\cM)/2$. We restrict $P_T$ to that set, and for each $q$, define $u_q = P_T^{-1}(t_q)$ so that $u_q \in B(x, h) \cap \cM$. 
In particular, since $u_q \in \cM$, we have $\|t_q - u_q\| \le \dist(t_q, \cM)$. 
Noting that $\|t_q - x\| = A\eps$, by \lemref{tangent_distance} there is $C_1>0$ such that, if $A\eps \le 1/C_1$, then $\dist(t_q, \cM) \le C_1 (A \eps)^2$.

For each $q$, let $x_{i_q}$ be a sample point satisfying $\|u_q - x_{i_q}\| \le \eps$, which exists by virtue of the fact that $u_q \in \cM$ by construction and the sample points form an $\eps$-covering of $\cM$ by assumption. Let $\sigma$ denote the simplex defined by $\{x, x_{i_1}, \dots, x_{i_k}\}$, meaning the convex hull of that point set. Then, by the triangle inequality, $\sigma$ has side lengths satisfying 
\begin{align*} 
    \|x_{i_q} - x_{i_p}\| 
     & \le \|x_{i_q} - u_q\| + \|u_q - t_q\| + \|t_q - t_p\| + \|t_p - u_p\| + \|u_p - x_{i_p}\| \\ 
     & \le \eps + C_1 (A \eps)^2 + A \eps + C_1 (A \eps)^2 + \eps                                    \\ 
     & = A \eps\, (1 + 2/A + 2 C_1 A \eps), 
\end{align*} 
and 
\begin{align*} 
    \|x_{i_q} - x_{i_p}\| 
     & \ge -\|x_{i_q} - u_q\| - \|u_q - t_q\| + \|t_q - t_p\| - \|t_p - u_p\| - \|u_p - x_{i_p}\| \\ 
     & \ge -\eps - C_1 (A \eps)^2 + A \eps - C_1 (A \eps)^2 - \eps                                    \\ 
     & = A \eps\, (1 - 2/A - 2 C_1 A \eps), 
\end{align*} 
and the same upper and lower bounds apply when $x$ replaces $x_{i_q}$ above. 
Therefore, $\sigma$ has minimum side length $\ge A \eps\, (1 - 2/A - 2 C_1 A \eps)$ and side length ratio 
\[\pi(\sigma) \ge 
\frac{1 - 2/A - 2 C_1 A \eps}{1 + 2/A + 2 C_1 A \eps}.\]
We first require that $A \ge 4$ and $A \eps \le 1/8 C_1$, so that $\sigma$ has minimum side length $\ge \eps$.
Recall the definition of $\phi_k$ in \lemref{thickness}. Since $\tau_k > 1/\sqrt{2}$, there is $C_2$ such that $\phi_k(1-1/C_2) = 1/(\sqrt{2} \tau_k)$. Choose $A \ge 4$ large enough that $(1-2A)/(1+2A) \ge 1-1/2C_2$, and then $\eps$ small enough that $A \eps \le 1/8C_1$ (as required above) and $(1 - 2/A - 2 C_1 A \eps)/(1 + 2/A + 2 C_1 A \eps) \ge 1-C_2$. In that case, the simplex $\sigma$ constructed above is such that $\pi(\sigma) \ge 1-1/C_2$, which via \lemref{thickness} implies that $\tau(\sigma) \ge \tau_k \phi_k(1-1/C_2) = 1/\sqrt{2}$. 
\end{proof}

\begin{lemma} 
\label{lem:tangent_angle_intersection} 
In the present situation, for every $x \in \cM \cap \hat\cM$, $\angle(T_\cM(x), T_{\hat\cM}(x)) \le C \eps$.
\end{lemma}
 
\begin{proof} 
Take any point $x \in \cM \cap \hat\cM$ and consider the point set $x_{i_1}, \dots, x_{i_k}$ defined in \lemref{simplex}. 
By \lemref{tangent_distance}, 
\[\dist(x_{i_q}, T_\cM(x)) \le C_1 \|x_{i_q} - x\|^2 \le C_2 \eps^2.\] 
Using \lemref{whitney}, we have that  
\[\sin\angle({\rm aff}(\sigma), T_\cM(x)) \le A \frac{C_2\eps^2}{(1/C_0) \eps} =: C_3 \eps,\]  
where $A$ is a universal constant and $C_0$ is the constant of \lemref{simplex}. 
Similarly,  
\[\sin\angle({\rm aff}(\sigma), T_{\hat\cM}(x)) \le C_4 \eps.\]  
(In principle $C_4$ would depend on $\hat\cM$, but a more careful tracking of the constants reveal that they really only depend on a lower bound on the reach of the underlying surface, and $\rho(\hat\cM) \ge \rho(\cM)/2$ by construction.) 
We then conclude by the triangle inequality that 
\begin{align*} 
    \angle(T_\cM(x), T_{\hat\cM}(x)) 
     & \le \angle(T_\cM(x), {\rm aff}(\sigma)) + \angle({\rm aff}(\sigma), T_{\hat\cM}(x)) \\ 
     & \le \tfrac\pi2 C_3 \eps + \tfrac\pi2 C_4 \eps =: C_5 \eps,
\end{align*}
using the fact that $\sin a \ge \frac2\pi a$ for all $a \in [0,\frac\pi2]$.
\end{proof}

\begin{proof}[Proof of \prpref{approximation}] Take $x \in \cM$. We want to show that $\dist(x, \hat\cM) \le C \eps^2$.
Let $x_i$ be such that $\|x-x_i\| \le \eps$.
Then by \lemref{tangent_distance}, we have $\dist(x, T_\cM(x_i)) \le C_1 \eps^2$. Also, by \lemref{dist_angle},
\begin{align*} 
\dist(x, T_{\hat\cM}(x_i)) 
 & \le \dist(x, T_\cM(x_i)) + \angle(T_\cM(x_i), T_{\hat\cM}(x_i)) \dist(x, T_\cM(x_i) \cap T_{\hat\cM}(x_i)) \\ 
 & \le C_1 \eps^2 + (C_2 \eps) \|x - x_i\|
 \le C_3 \eps^2,
\end{align*}
using \lemref{tangent_angle_intersection} with $C_2$ denoting the constant there.
Let $t$ be the orthogonal projection of $x$ onto $T_{\hat\cM}(x_i)$ so that $\|x-t\| \le C_3 \eps^2$.
We have
\begin{align*} 
\|x_i - t\| 
 & \le \|x_i - x\| + \|x - t\|     \\ 
 & \le \eps + \dist(x, T_{\hat\cM}(x_i)) \\ 
 & \le \eps + C_3 \eps^2 
 \le C_4 \eps.
\end{align*}
By \lemref{tangent_distance}, if $C_4 \eps \le 1/C_5$, then $\dist(t, \hat\cM) \le C_5 (C_4 \eps)^2 =: C_6 \eps^2$.
($C_5$ depends monotonically on $\rho(\hat\cM)$, but $\rho(\hat\cM) \ge \rho(\cM)/2$ by construction.) 
Finally, by the triangle inequality,
\[\dist(x, \hat\cM) \le \|x-t\| + \dist(t, \hat\cM) \le C_3 \eps^2 + C_6 \eps^2 =: C_7 \eps^2.\]
Similarly, we can show that for any $s \in \hat\cM$, $\dist(s, \cM) \le C_8 \eps^2$, and combined, this allows us to conclude that $\dist(\cM, \hat\cM) \le \max\{C_7, C_8\} \eps^2$.
\end{proof}

\begin{remark} 
\label{rem:boundary} 
Although we have followed the tradition of working with submanifolds without boundary, an extension to submanifolds with boundary is straightforward albeit a little more tedious. 
Indeed, all the steps in the proof of \thmref{approximation} apply to surfaces with boundary, except possibly for \lemref{simplex}. 
For this lemma to apply, it is enough that $\partial\cM$ is itself smooth or that it does not have arbitrarily `sharp' singularities. Technically, it is enough that, for some constant $\alpha = \alpha(\cM) > 0$, for each $x\in\cM$ and $h \le \alpha$, the orthogonal projection of $B(x,h) \cap \cM$ onto $T_\cM(x)$ contains a cone of the form 
\begin{equation}
\big\{t \in T_\cM(x) \cap B(x,\alpha h) : \angle(t-x,u) \le \alpha\big\},
\end{equation}
for some normed vector $u$. This condition applies, for example, to the situation where $\partial\cM$ is itself a $C^2$ submanifold, or more generally, when it is locally the graph of a Lipschitz function. 
\end{remark}

\subsection{Information bound}
\label{sec:information}
So far, we have worked in a setting where all we know about the underlying surface $\cM$ satisfies \aspref{M}. 
Based on this, we have defined an estimator that satisfies the error bound~\eqref{distance_error}. The questions arises: Is this best possible? We approach this question from a minimax perspective, and it turns out it is indeed best possible.

Our approach is standard: the idea is to construct a situation where two distinct surfaces satisfying \aspref{M} with sufficiently different metrics and that interpolate the same set of points. For other examples in the geometrical statistics literature, see~\cite{genovese2012manifold, kim2015tight, aamari2018stability, divol2020minimax, aamari2022optimal}.
We work with surfaces that have a boundary, knowing that we can extend them into surfaces without boundary without modifying the construction in any otherwise meaningful way.

In the next subsections, we prove the following information lower bound. (In fact, we prove a somewhat stronger result.)
\begin{theorem}
For any estimator $\hat d$, the following is true. 
For any $\eps > 0$, there is a surface $\cM$ satisfying \aspref{M} and a set of points $x_1, \dots, x_n$ belonging to $\cM$ dense enough that \eqref{eps} holds, such that the proportion of pairs $i \ne j$ for which
\begin{equation}
|\hat d_{ij} - \d_\cM(x_i, x_j)| \ge C^{-1} \eps^2 \d_\cM(x_i, x_j)
\end{equation}
approaches~1 as $C$ increases without bound. 
\end{theorem}

\subsubsection{Case $k=1$}
\label{sec:k=1}
As a warm-up, we consider the case where the underlying submanifold is a curve, meaning of dimension $k = 1$. It is enough to consider the plane ($d = 2$). There, let $\cM_1$ be defined as the line segment $[0,1] \times \{0\}$. Starting at the origin and moving right, place sample points $\eps$ apart, and assume for convenience that $\eps = 1/(n-1)$.
The sample points are therefore $x_i = ((i-1) \eps, 0)$ for $i = 1, \dots, n$.
We define $\cM_2$ by bending and stretching $\cM_1$.
To create an `arc' between two sample points, we use a $C^2$ function $w$ supported on $[-1,1]$ and such that $w(0) = 1$. Define $\cM_2$ by changing in $\cM_1$ the line segment joining $x_i$ and $x_{i+1}$ into the curve given by the graph of the function $t \mapsto A (\eps/2)^2 w((t-(i-1/2)\eps)/(\eps/2))$ on the interval $[(i-1)\eps, i\eps]$, doing so for each $i = 1, \dots, n$. Let $f_\eps: [0,1] \to \bbR$ denote the resulting function and $\gamma_\eps(t) = (t, f_\eps(t))$, and
set $\cM_2 = \gamma_\eps([0,1])$.
By construction, $\cM_2$ is a $C^2$ simple curve with curvature bounded from above by a universal constant multiple of $A$.
The parameter $A>0$ is fixed and only there to indicate that any upper bound on the curvature can be fulfilled by choosing $A$ sufficiently small. The dependence on $A$ is otherwise left implicit, as it is of secondary importance.
See \figref{minimax_curves} for an illustration.

\begin{figure}[tbp!] 
\centering 
\includegraphics[scale = 0.5]{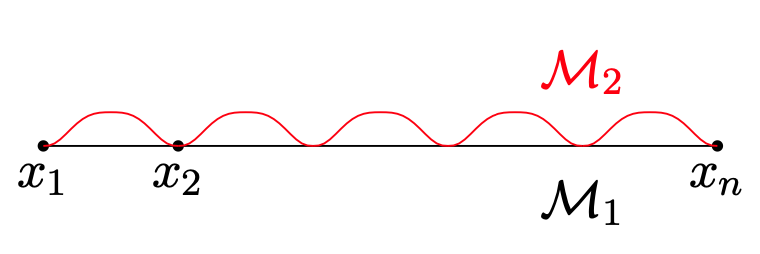} 
\caption{The case of dimension $k=1$ in ambient dimension $d=2$ (the latter without loss of generality). What is pictured is in fact a piece of each curve. The reader is invited to imagine that these curves are completed by the same curve, say $\cM_0$, so that $\cM_0 \cup \cM_1$ and $\cM_0 \cup \cM_2$ are smooth simple closed curves. Also, additional points would be placed on $\cM_0$ so that, together with the points depicted in the plot, they would form an $\eps$-covering.} 
\label{fig:minimax_curves}
\end{figure}

Take $1 \le i < j \le n$. While the distance between $x_i$ and $x_j$ on $\cM_1$ is obviously $(j-i)\eps$, their distance on $\cM_2$ is equal to the length of the piece of $\cM_2$ starting at $x_i$ and ending at $x_j$, which is equal to $(j-i) \eta$, where $\eta$ is the length of the piece of $\cM_2$ between $x_1$ and $x_2$.
Elementary calculations show that $\eta \ge \eps + C_1 \eps^3$ for some $C_1 > 0$ which depends only on $A$. 
To be sure, assume that $\eps$ is small enough that $A (\eps/2) \sup_t |w'(t)| \le 1$, and compute
\begin{align*} 
\eta 
 & = \int_0^\eps \Big\{1 + \big[A (\eps/2)^2 (2/\eps) w'(2t/\eps-1)\big]^2\Big\}^{1/2} \d t                                          \\ 
 & = \int_{-1}^1 \Big\{1 + \big[A (\eps/2) w'(t)\big]^2 \Big\}^{1/2} (\eps/2) \d t                                                   \\ 
 & \ge \int_{-1}^1 \Big\{1 + \tfrac14 \big[A (\eps/2) w'(t)\big]^2 \Big\} (\eps/2) \d t                                                   
 = \eps + C_1 \eps^3, \qquad C_1 := \tfrac1{32} A^2 \int_{-1}^1 w'(t)^2 \d t.
\end{align*}
Similar calculations show that $\eta \le \eps + C_2 \eps^3$, for another constant $C_2$ depending only on $A$. We have thus bounded $\eta$ from below and above as follows 
\begin{equation}
\label{eta_bounds}
\eps + C_1 \eps^3 \le \eta \le \eps + C_2 \eps^3.
\end{equation}
Using the lower bound in \eqref{eta_bounds}, we get that the distance on $\cM_2$ between $x_i$ and $x_j$ is $\ge (j-i)(\eps + C_1 \eps^3)$. In particular, because the distance on $\cM_1$ between $x_i$ and $x_j$ is $= (j-i)\eps$, when $\eps$ is sufficiently small, we have
\begin{equation} 
\label{info1} 
\d_{\cM_2}(x_i, x_j) - \d_{\cM_1}(x_i, x_j) 
\ge (j-i) C_1 \eps^3 
\ge C_1 \eps^2 \d_{\cM_1}(x_i, x_j).
\end{equation}
Using the upper bound in \eqref{eta_bounds}, we get that
\begin{equation} 
\label{info1_upper} 
\d_{\cM_2}(x_i, x_j) - \d_{\cM_1}(x_i, x_j) 
\le C_2 \eps^2 \d_{\cM_1}(x_i, x_j) 
\le \d_{\cM_1}(x_i, x_j),
\end{equation}
with the last inequality holding as soon as $\eps$ is small enough that $C_2 \eps^2 \le 1$.

Based on what we know of the true $\cM$, it could be $\cM_1$ as easily as $\cM_2$, and therefore, for any estimate $\hat d_{ij}$,
\begin{align*}
\max_{\cM \in \{\cM_1, \cM_2\}}    \big|\hat d_{ij} - \d_\cM(x_i, x_j)\big| 
 & \ge \tfrac12\, \big| \d_{\cM_2}(x_i, x_j) - \d_{\cM_1}(x_i, x_j) \big|  \\ 
 & \ge \tfrac14 C_1 \eps^2\, \d_{\cM_1}(x_i, x_j)                               \\ 
 & \ge \tfrac18 C_1 \eps^2\, \max\{\d_{\cM_2}(x_i, x_j), \d_{\cM_1}(x_i, x_j)\} \\ 
 & = \tfrac18 C_1 \eps^2\, \max_{\cM \in \{\cM_1, \cM_2\}}\d_{\cM}(x_i, x_j).
\end{align*}


\subsubsection{Case $k \ge 2$}
In general, it is enough to consider the setting where $d = k+1$. 
We also consider a regular grid where $x_i = (x_{i,1}, \dots, x_{i,k}, 0)$ with $i = (i_1, \dots, i_k)$ and $x_{i,q} = (i_q-1)\eps$ for $i_q = 1, \dots, m$ and $q = 1, \dots, k$. We are indeed assuming that $n$ is of the form $n = m^k$ for some positive integer $m$. This is for convenience, although again, it brings the focus to what should in principle be a regular case (since the sample points are well spread out). We again assume for expediency that $\eps = 1/(m-1)$.
We take $\cM_1 = [0,1]^k \times \{0\}$, and
\[\cM_2 = \big\{(t_1, \dots, t_{k-1}, \gamma_\eps(t_k)) : t_1,\dots,t_k \in [0,1]\big\},\]
where $\gamma_\eps$ is the same parameterized curve that was constructed in \secref{k=1}.
Clearly, the sample points belong to both surfaces.
See \figref{minimax_surfaces} for an illustration.

\begin{figure}[tbp!] 
\centering 
\includegraphics[scale = 0.5]{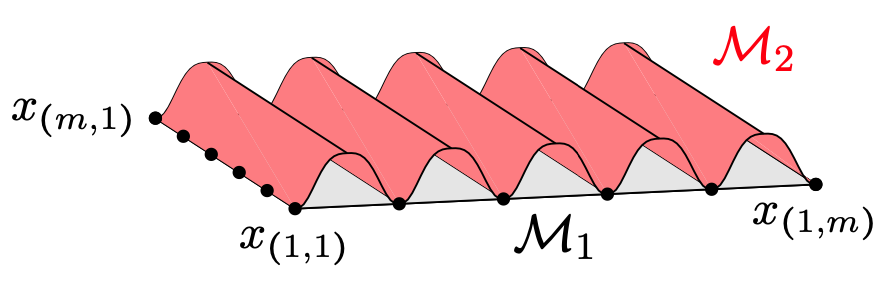} 
\caption{Analogous to \figref{minimax_curves}, but in the case of dimension $k=2$ in ambient dimension $d=3$.} 
\label{fig:minimax_surfaces}
\end{figure}

On the one hand, $\cM_1$ is convex, and thus the intrinsic metric on $\cM_1$ coincides with the Euclidean metric. In particular,
\begin{align*} 
\dist_{\cM_1}(x_i, x_j)^2 
 & = \|x_i - x_j\|^2                  \\ 
 & = \sum_{q=1}^k (i_q-j_q)^2 \eps^2,
\end{align*}
for all $i,j \in [m]^k$.
On the other hand, recalling the definition of $\eta$ given in \secref{k=1}, by straightening along the $(k+1)$th canonical direction, we see that $\cM_2$ is isometric to $[0,1]^{k-1} \times [0, (m-1) \eta]$ via the isometry 
\[(t_1, \dots, t_{k-1}, \gamma_\eps(t_k))) \mapsto (t_1, \dots, t_{k-1}, \lambda_\eps(t_k)),\]
where 
\begin{equation} \label{lambda}
\lambda_\eps(t) := \Lambda\big(\gamma_\eps([0,t])\big).
\end{equation}
This isometry is based on an arc length parameterization of $\gamma_\eps$. See \lemref{arclength}.
In particular, with this isometry 
\[x_i \mapsto u_i := (x_{i,1}, \dots, x_{i,k-1}, \lambda_\eps (x_{i,k})) = ((i_1-1)\eps, \dots, (i_{k-1}-1)\eps, (i_k-1)\eta).\]
As a consequence,
\begin{align*} 
\dist_{\cM_2}(x_i, x_j)^2 
 & = \|u_i - u_j\|^2                                           \\ 
 & = \sum_{q=1}^{k-1} (i_q-j_q)^2 \eps^2 + (i_k-j_k)^2 \eta^2,
\end{align*}
for all $i,j \in [m]^k$.
Hence,
\begin{align*} 
\dist_{\cM_2}(x_i, x_j)^2 - \dist_{\cM_1}(x_i, x_j)^2 
= (i_k-j_k)^2 (\eta^2 - \eps^2),
\end{align*}
and again, this is so for all $i,j \in [m]^k$.
Using the upper bound in \eqref{eta_bounds}, if $\eps$ is sufficiently small that $C_2 \eps^2 \le 1$, we get
\begin{align*} 
\dist_{\cM_2}(x_i, x_j)^2 - \dist_{\cM_1}(x_i, x_j)^2 
 & = (i_k-j_k)^2 (\eta-\eps)(\eta+\eps)                                          \\ 
& \le (i_k-j_k)^2 C_2 \eps^3 (2\eps+C_2\eps^3) \\ 
& \le (i_k-j_k)^2 3 C_2 \eps^4 \\ 
& = 3 C_2 \eps^2 \beta_{ij}^2 \dist_{\cM_1}(x_i, x_j)^2 \\
&\le 3 \dist_{\cM_1}(x_i, x_j)^2,
\end{align*}
where $\beta_{ij} := \cos(\theta_{ij})$ and $\theta_{ij}$ is the angle that the line passing through $x_i$ and $x_j$ makes with the $k$th axis.
In the process, we found that
\[\dist_{\cM_2}(x_i, x_j) \le 2 \dist_{\cM_1}(x_i, x_j).\]
From this, we get
\begin{align*} 
\dist_{\cM_2}(x_i, x_j) - \dist_{\cM_1}(x_i, x_j) 
&\le \frac{3 C_2 \eps^2 \beta_{ij}^2 \dist_{\cM_1}(x_i, x_j)^2}{\dist_{\cM_2}(x_i, x_j) + \dist_{\cM_1}(x_i, x_j)} \\
&\le C_2 \eps^2 \beta_{ij}^2 \dist_{\cM_1}(x_i, x_j).
\end{align*}
Using the lower bound in \eqref{eta_bounds}, we get
\begin{align*} 
\dist_{\cM_2}(x_i, x_j)^2 - \dist_{\cM_1}(x_i, x_j)^2 
 & = (i_k-j_k)^2 (\eta-\eps)(\eta+\eps)                                          \\ 
 & \ge (i_k-j_k)^2 C_1 \eps^4 \\ 
 & = C_1 \eps^2 \beta_{ij}^2 \dist_{\cM_1}(x_i, x_j)^2,
\end{align*}
and, as before, this implies that
\begin{align*} 
\dist_{\cM_2}(x_i, x_j) - \dist_{\cM_1}(x_i, x_j) 
\ge \tfrac13 C_1 \eps^2 \beta_{ij}^2 \dist_{\cM_1}(x_i, x_j).
\end{align*}

Based on what we know of the true $\cM$, it could be $\cM_1$ as easily as $\cM_2$, and therefore, for any estimate $\hat d_{ij}$,
\begin{align*} 
\max_{\cM \in \{\cM_1, \cM_2\}}    \big|\hat d_{ij} - \d_\cM(x_i, x_j)\big| 
 & \ge \tfrac12\, \big| \d_{\cM_2}(x_i, x_j) - \d_{\cM_1}(x_i, x_j) \big|                        \\ 
 & \ge \tfrac16 C_1 \beta_{ij}^2 \eps^2\, \d_{\cM_1}(x_i, x_j)                                  \\ 
 & \ge \tfrac1{12} C_1 \beta_{ij}^2 \eps^2\, \max\{\d_{\cM_2}(x_i, x_j), \d_{\cM_1}(x_i, x_j)\} \\ 
 & = \tfrac1{12} C_1 \beta_{ij}^2 \eps^2\, \max_{\cM \in \{\cM_1, \cM_2\}}\d_{\cM}(x_i, x_j).
\end{align*}
We conclude with the fact that as $t \searrow 0$, the proportion of pairs $(i,j)$ such that $\beta_{ij}^2$ exceeds $t$ tends to~1.


\section{Meshes}
\label{sec:meshes}

Polytopes form an important class of surfaces used in computational geometry, numerical partial differential equations, and more. Their approximation properties and their simplicity allow for the design of algorithms for rendering a surface on a computer under a chosen lighting condition, as done in 3D animation, or for simulating a particular equation arising in physics.
Among polytopes that are routinely used in practice, simplicial complexes are arguably the most common. They are particularly relevant in our context as they are often used to assess the shape defined by an otherwise unorganized set of points.
A finite collection $K$ of simplexes constitutes a simplicial complex if it is closed under intersection (i.e., the intersection of two simplexes in $K$ is either empty or itself a simplex of $K$) and if any face of a simplex in $K$ is also a simplex in $K$.

When used to approximate of a surface, a simplicial complex is often called a mesh. A number of mesh construction are available in the literature, including some that come with theoretical guarantees --- see \tabref{meshes} for some prominent examples. 
Because of the available theory, we work with the tangential Delaunay complex.

\begin{table}[htbp!] 
\centering
\caption{Overview of some mesh constructions with their theoretical guarantees.} 
\label{tab:meshes} 
\footnotesize\setlength{\tabcolsep}{0.0in} 
\begin{tabular}{p{0.4\textwidth} p{0.6\textwidth}} 
    \toprule 
    {\bf Method}                                                                          & {\bf Guarantees}                                                                    \\ \midrule 
    Ball-pivoting~\cite{Bernardini1999, digne2014analysis}                                & homeomorphism; watertight; bound in Hausdorff distance                          \\ \midrule 
    Crust~\cite{Amenta1998a, Amenta1998, amenta1999crust, amenta1999surface},\ 
    Cocone~\cite{amenta2000simple}                                                        & homeomorphism; bound in Hausdorff distance and in angle                            \\ \midrule 
    Tight Cocone~\cite{dey2003tight, dey2006provable}, Power Crust~\cite{amenta2001power} & homeomorphism; watertight; bound in Hausdorff distance and in angle             \\  \midrule 
    Noisy Power Crust~\cite{dey2006provable, mederos2005surface}                          & homeomorphism 
   \\ \midrule 
    Natural Neighbors~\cite{Hoppe1992, boissonnat2002smooth}                              & homeomorphism; convergence in the Hausdorff metric 
    \\ \midrule 
    Peel~\cite{dey2009isotopic}                                                           & isotopy; convergence in Hausdorff metric 
    \\ 
    \bottomrule 
\end{tabular}
\end{table}

\subsection{Nets}
\label{sec:nets}

The mesh construction we use --- introduced in \secref{TDC} below --- requires that the data points form an $\eps$-covering of $\cM$, meaning that \eqref{eps} holds, and that they form a $c \eps$-packing for some constant $c > 0$, meaning that 
\begin{equation} \label{packing}
\min_{i \ne j} \|x_i - x_j\| \ge c \eps.
\end{equation}

Suppose we are interested in estimating $\d_\cM(x_i, x_j)$ for a given pair of points indexed by $i, j \in [n]$.
If $\|x_i - x_j\| \le \eps$, let the estimate be $\hat d_{ij} := \|x_i - x_j\|$. 
\begin{lemma}[Lem~3 in~\cite{bernstein2000graph};  Lem~3.12 in \cite{arias2019unconstrained}]\label{lem:distance_comparison}
For $\cM$ satisfying \aspref{M}, there is a constant $C$ such that
\[0 \le \d_\cM(x, x') - \|x - x'\| \le C \|x-x'\|^3, \quad \forall x, x' \in \cM.\]
\end{lemma}
Applying this lemma, we see that the bound in \eqref{distance_error} applies for $i, j$, since
\begin{align}
0 
\le \d_\cM(x_i, x_j) - \hat d_{ij}
&\le C \|x_i-x_j\|^3 \\
&\le C \eps^2 \hat d_{ij} \\
&\le C \eps^2 \min\{\d_\cM(x_i, x_j), \hat d_{ij}\},
\end{align}
using the fact that $\hat d_{ij} = \|x_i-x_j\| \le \eps$.

If $\|x_i - x_j\| > \eps$, do as follows. Starting with $S_0 := \{x_i, x_j\}$, at stage $t$, add a data point to $S_t$ not within distance $\eps$ from a point in $S_t$ to form $S_{t+1}$ --- stop at $S_t$ if no such point exists. Let $S_\infty$ denote the resulting subset of sample points. By construction, any two points in $S_\infty$ are separated by a distance exceeding $\eps$. Also, any sample point not included in $S_\infty$ is within distance $\eps$ of a point in $S_\infty$, so that $S_\infty$ is an $2\eps$-cover of $\cM$ by the triangle inequality.
Following~\cite{Boissonnat2018}, $S_\infty$ is an $(2\eps, 1/2)$-net of $\cM$. (In general, a $(\eta, c)$-net of $\cM$ is a point set which is an $\eta$-covering of $\cM$ where any two points are at least $c\,\eta$ apart.)
We then construct a mesh based on $S_\infty$ and let $\hat d_{ij}$ denote the distance on $\hat\cM$ between $x_i$ and $x_j$.

Therefore, in without loss of generality, in the remaining of this section, we simply assume that $\bX$ itself is an $(\eps, 1/2)$-net of $\cM$. 


\begin{remark} \label{rem:nets}
Proceeding as a describe here would in principle require that we build a different mesh for each pair of points $x_i$ and $x_j$ such that $\|x_i-x_j\| > \eps$. This would appear to be wasteful and unnecessary in practice. We believe this is indeed the case. See \secref{discussion} for a longer discussion.
\end{remark}

The construction below also requires that the sample points be in general position and that a certain `transversality' condition\footnote{The condition is that no tangent space at any of the sample points contains a point that is equidistant to more than $k + 1$ points of $\bX$.} be satisfied. If these conditions are not already satisfied, they can be achieved by a simple infinitesimal random perturbation of the sample points, and so we assume they are satisfied in what follows.

\subsection{Tangential Delaunay complex}
\label{sec:TDC}
The tangential Delaunay complex is a mesh construction that dates back to~\cite{Freedman2002, Boissonnat2002}. Here we follow the exposition given in~\cite[Ch 8]{Boissonnat2018}.
In addition to the point set, $\{x_1, \dots, x_m\}$, the construction relies on knowledge of the tangent space at each sample point, meaning the knowledge of the tangent spaces at the sample points. We will see later in \secref{tangent_estimation} that these tangent spaces can be estimated to enough precision to circumvent this otherwise substantial requirement.

\begin{figure}[htbp!] 
\includegraphics[width=0.49\linewidth, trim=10 50 10 100, clip]{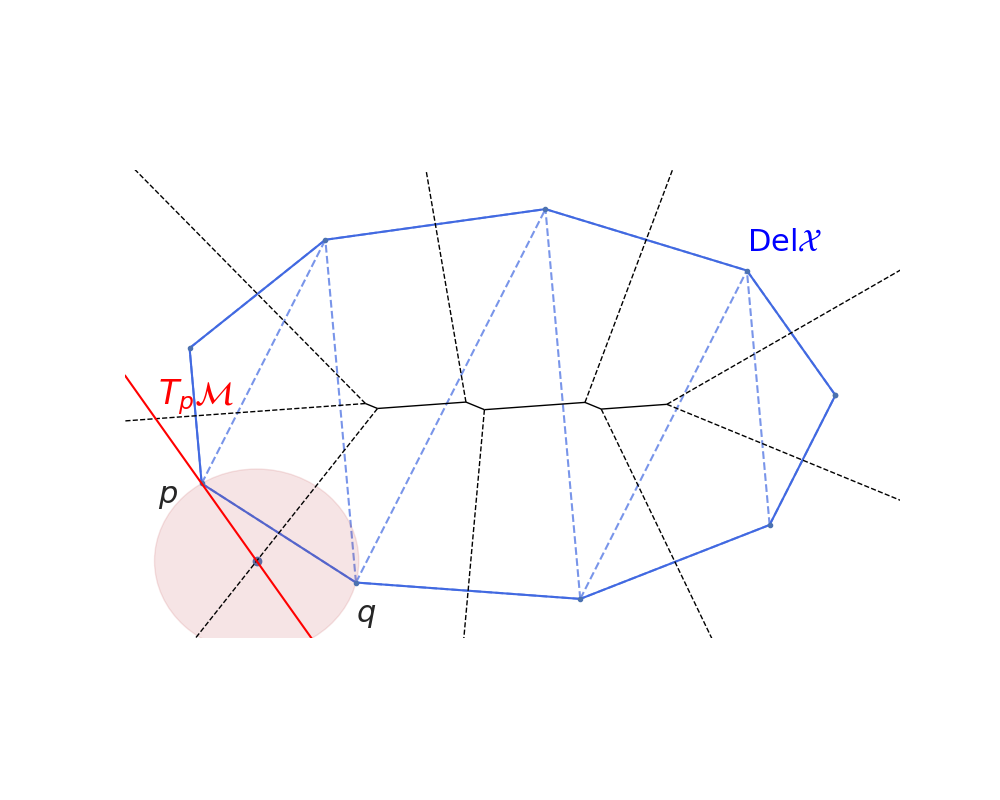} 
\includegraphics[width=0.49\linewidth, trim=10 50 10 100, clip]{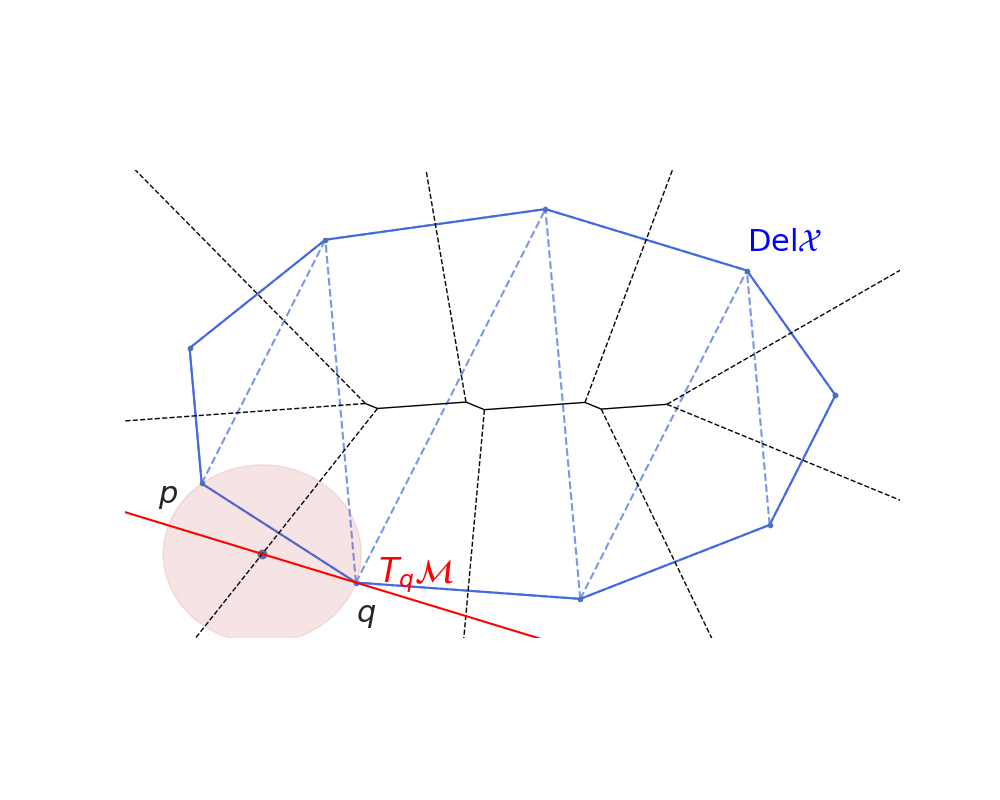} 
\caption{The Delaunay triangulation of a point cloud sampled from an ellipse is depicted in blue. 
    The dual Voronoi diagram is in black. The edge $\sigma=\overline{pq}$ is a consistent simplex of the Tangential 
    Delaunay complex because $\sigma\in\text{star}(p)\cap\text{star}(q)$. That is, 
    $\sigma$ can be circumscribed by empty balls centered on $T_p\cM$ and $T_q\cM$. (The 
    balls, although very similar in this example, are not the same in general.)}
\end{figure}

Let $\cU$ denote the Delaunay complex of $\bX$, which is the collection of all the simplexes with vertices in $\bY$ that admit a circumscribing ball empty of sample points in its interior.
For $i \in [n]$, let $\cU_i$ denote the Delaunay complex of $\bX$ restricted to the tangent space $T_i := T_\cM(x_i)$, which is defined as the subcomplex of $\cU$ formed by all the simplexes that admit a  circumscribing ball centered on $T_i$ empty of sample points  in its interior.
The closed star of $x_i$ in $\cU_i$, denoted $\cS_i$, is the subcomplex of $\cU_i$ that consists of the simplexes incident to $x_i$ together with all their faces.
With these definitions in place, the tangential Delaunay complex of $\bY$ is the simplicial complex made of the union of all these closed stars, i.e.,
\begin{equation} 
\cT := \big\{\sigma : \sigma \in \cS_i \text{ for some } i \in [n]\big\}.
\end{equation}
Because of the transversality condition mentioned above, $\cT$ does not contain faces of dimension greater than $k$.


When used to approximate a surface, the presence of thin simplexes or slivers (i.e., simplexes with small thickness as defined in \secref{simplexes}) in a mesh can make restrict the accuracy of the approximation to the underlying surface to 0th order and be completely inaccurate at the level of the tangent spaces. This is due to the fact that slivers can be make an arbitrarily large angle with the surface. In the extreme case, a sliver can even be perpendicular to the surface --- think of three points along a same great circle on a 2D sphere. 
The Schwarz lantern provides a famous example of this: it is an arbitrarily fine mesh of a cylinder which converges in Hausdorff metric (i.e., 0th order) while the simplexes never become tangent to the cylinder in the infinitesimally fine mesh limit.
In addition, thin simplexes can prevent the mesh from being a topological (in fact, piecewise linear) submanifold. 

A simplex of the tangential Delaunay complex $\cT$ is said to be inconsistent if it does not belong to the closed stars of all of its vertices.
In the presence of inconsistent simplexes, $\cT$ is not a topological submanifold. 
Without going into too much detail, to each inconsistent $k$-simplex of the tangential Delaunay complex, $\sigma \in \cT$, we can associate a $(k+1)$-simplex of the Delaunay complex, $\sigma^I \in \cU$, that is said to `trigger' the inconsistency; and, as it happens, that simplex $\sigma^I$ cannot be too thick~\cite[Cor 8.13]{Boissonnat2018}.
These inconsistencies are dealt with in~\cite{Boissonnat2018} by perturbing $\cU$  using a variant of the weighing method of~\cite{cheng2000silver} which consists in lifting the points to dimension $d+1$ by assigning weights to them and then reassigning random weights (from a carefully chosen distribution) to the vertices of a sliver; see Algorithm 5 in~\cite{Boissonnat2018}.
The overall method for building a tangential Delaunay complex with no inconsistencies is described in Algorithm 8 in~\cite{Boissonnat2018}, and is shown to have expected running time proportional to the sample size~\cite[Th~8.17]{Boissonnat2018}. 
We refer to this algorithm as TDC.
The overall algorithm is complex, but a relatively accessible although partial description is given in~\cite{aamari2018stability}.

\begin{theorem}[Th~7.16 and Th~8.18 in~\cite{Boissonnat2018}] 
\label{thm:TDC} 
There is a constant $C>0$ depending on $\cM$ such that, if TDC is provided with an $(\eps, 1/2)$-net of $\cM$ together with the tangent spaces at each point of the net, then with probability~1 it returns a piecewise linear submanifold of dimension $k$ without boundary that is a $C\eps^2$-distortion of $\cM$.
\end{theorem}

We provide a sketch of a roadmap through the book of Boissonnat {\it et al}~\cite{Boissonnat2018} that leads to the result.
Let $\cT_{\rm c}$ denote the output of TDC, that is, the tangential Delaunay complex built on the provided sample of points together with the accompanying tangents spaces --- corrected for inconsistencies.
According to~\cite[Th~8.18]{Boissonnat2018} the simplexes of $\cT_{\rm c}$ all have thickness at least $1/C_0$, and $\cT_{\rm c}$ and $\cM$ are within Hausdorff distance $C_0 \eps^2$. The proof of that result consists in large part in verifying that the conditions of~\cite[Th~7.16]{Boissonnat2018} are satisfied, which in particular includes showing that the simplexes of $\cT_{\rm c}$ all have diameter at most $C_0 \eps$ and also that the projection map $P_\cM : \cT_{\rm c} \to \cM$ (which is well-defined when $C_0 \eps^2 < \rho(\cM)$, which we assume is the case) is one-to-one. To complete the picture, the projection map is shown to be a $O(\eps^2)$-distortion map. By~\cite[Lem~7.13]{Boissonnat2018} (or our \lemref{pi_lipschitz}), this is true on each simplex of $\cT_{\rm c}$ by the above bounds on the thickness and diameter, and thus true on the entirety of $\cT_{\rm c}$ seen as a surface.

\subsection{Estimation of the tangential Delaunay complex}

We just saw that the tangential Delaunay complex, after correction for inconsistencies, provides a good enough approximation to the underlying surface for the metric approximation~\eqref{distance_error} to hold. In addition, from an algorithmic standpoint, the complex can be built in (randomized) polynomial time, and results in a piecewise linear surface for which algorithms for computing distances exist (see \secref{numerics}). All that said, the construction relies on knowing the tangent spaces at the sample points, which in principle is not part of the information we have access to.

As it turns out, this additional information is not needed: In the same setting, the tangent spaces can be estimated to enough accuracy that the construction of the tangential Delaunay complex based on these estimated tangent spaces, again after correction for inconsistencies, also provides a good enough approximation to the underlying surface. All we need is a lower bound on the reach of $\cM$ --- a more reasonable requirement.

The same strategy is considered by Aamari and Levrard~\cite{aamari2018stability}. 
We follow in their footsteps to obtain the desired bound on the distortion between the estimated tangential Delaunay complex and the underlying surface.

\subsubsection{Estimating the tangent spaces}
\label{sec:tangent_estimation}
The estimation of tangent spaces is by local principal component analysis, a natural approach used throughout the manifold estimation literature and manifold learning literature (e.g., in~\cite{arias2017spectral, kambhatla1997dimension, weingessel2000local, fukunaga1971algorithm}). While~\cite{aamari2018stability} works with a random sample, we show below that the same accuracy results if we work instead with an $(\eps, 1/2)$-net as we do here.

Let $\hat T_i$ denote the $k$-dimensional affine space that passes through $x_i$ and is parallel to the top $k$-dimensional eigenspace of the following matrix
\begin{equation} \label{Sigma}
\Sigma_i := \frac1{|N_i|} \sum_{j \in N_i} (x_j - x_i) (x_j - x_i)^\top, \qquad N_i := \big\{j \in [n]: \|x_j - x_i\| \le h\big\}.
\end{equation}
That eigenspace will be shown to be well-defined when $h$ is chosen proportional to $\eps$ and $\eps$ is small enough.
Unlike~\cite{aamari2018stability}, this matrix is not the covariance matrix of the sample points in $B(x_i, h)$ as it is centered at $x_i$ and not at the barycenter of those points. This is not essential, but helps streamline the proof of the following result. 

Below, we will use $\preceq$ to denote the Loewner order when comparing symmetric matrices of same size. 
For a symmetric matrix $M$, $\lambda_1(M) \ge \lambda_2(M) \ge \cdots$ denote its eigenvalues thus ordered. 

\begin{proposition}
\label{prp:tangent_estimation}
Choose $h = A \eps$ in \eqref{Sigma} for a constant $A$ depending only on $k$ specified below. There is $C > 0$ depending only on $\cM$ such that $\angle(\hat T_i, T_i) \le C \eps$ for all $i$.
\end{proposition}

\begin{proof}
In what follows, $A = 3/\eta_0$ where $\eta_0$ is implicitly defined in \lemref{Sigma_proj}. We only need to prove the statement for $\eps$ sufficiently small because an angle is bounded.

Let $P_i$ be short for $P_{T_i}$. 
Let $t_j$ denote the orthogonal projection of $x_j$ onto $T_i$. First, note that $t_i = x_i$, and for $j \in N_i$, 
\[\|t_j - x_j\| = \dist(x_j, T_i) \le C_1 \|x_j-x_i\|^2 \le C_1 h^2,\] 
by \lemref{tangent_distance}, so that  
\[\|t_j - t_{j'}\| \ge \|x_j - x_{j'}\| - \|t_j-x_j\| - \|t_{j'}-x_{j'}\| \ge \eps/2 - C_1 h^2 - C_1 h^2,\] 
by the triangle inequality. 

Next, we claim that $\{t_j : j \in N_i\}$ forms an $(\eps+C_2h^3)$-covering of $B(x_i, h) \cap T_i$, where $C_2$ is the constant of \lemref{tangent_distortion}. Indeed, take $t \in B(x_i, h) \cap T_i$ and let $t' \in B(x_i, h-C_2h^3) \cap T_i$ be such that $\|t'-t\| \le C_2 h^3$. By \lemref{tangent_distortion}, there is $x' \in B(x_i, h) \cap \cM$ such that $P_i(x') = t'$. Since $\{x_j : j \in N_i\}$ is an $\eps$-covering of $B(x_i, h-\eps) \cap \cM$, there must be $j \in N_i$ be such that $\|x_j - x'\| \le \eps$. Then $\|t_j-t'\| \le \eps$ because $P_i$ is 1-Lipschitz. By the triangle inequality, we get $\|t_j-t\| \le \eps + C_2 h^3$.
Note that, for any $j \in N_i$, $t_j \in B(x_i, h) \cap T_i$, since $\|t_j-t_i\| \le \|x_j-x_i\| \le h$, again relying on $P_i$ being 1-Lipschitz.  
Hence, recalling that $h = A\eps$, if $\eps$ is small enough that
\begin{align*}
\eps/2 - 2 C_1 (A\eps)^2 > \eps/3 \quad \text{and} \quad
\eps+C_2(A\eps)^3 \le 2\eps,
\end{align*}
we have that $\{t_j : j \in N_i\}$ forms a $(2\eps,1/6)$-net of $B(x_i, h) \cap T_i$.
   
Now, remembering that $t_i = x_i$, we have
\begin{align*} 
    \Sigma_i 
     & = \Sigma_i^{\rm proj} + R_1 + R_1^\top + R_2, \qquad \text{with} \quad
\Sigma_i^{\rm proj} := \frac1{|N_i|} \sum_{j \in N_i} (t_j-t_i)(t_j-t_i)^\top, \quad 
\end{align*}
and remainders
\begin{align*}
R_1 := \frac1{|N_i|} \sum_{j \in N_i} (x_j-t_j)(t_j-t_i)^\top, \qquad
R_2 := \frac1{|N_i|} \sum_{j \in N_i} (x_j-t_j)(x_j-t_j)^\top.    
\end{align*}
satisfying
\[\|R_1\| \le \max_{j \in N_i} \|x_j-t_j\| \|t_j-t_i\| \le (C_1 h^2) h = C_1 h^3,\]
\[\|R_2\| \le \max_{j \in N_i} \|x_j-t_j\|^2 \le \big(C_1 h^2\big)^2 = C_3 h^4.\]
Hence, 
\[\|\Sigma_i - \Sigma_i^{\rm proj}\| \le C_1 h^3 + C_1 h^3 + C_3 h^4 \le C_4 h^3,\] 
assuming $\eps$ is small enough that $h = A\eps \le 1$. 
By rescaling the $t_j$ and applying \lemref{Sigma_proj} below ($A$ was chosen to make things work), we find that 
\[C_5^{-1} h^2 P_i \preceq \Sigma_i^{\rm proj} \preceq C_5 h^2 P_i.\]  
When this is the case, $\Sigma_i^{\rm proj}$ has exactly $k$ nonzero eigenvalues, all between $C_5^{-1} h^2$ and $C_5 h^2$, and so by the Davis--Kahan $\sin\Theta$ theorem~\cite[Th~V.3.6]{stewart1990matrix}, 
\[\|Q_i - Q_i^{\rm proj}\| \le \frac{\sqrt{2}\, \|\Sigma_i - \Sigma_i^{\rm proj}\|}{\lambda_k(\Sigma_i^{\rm proj}) - \lambda_{k+1}(\Sigma_i^{\rm proj})} \le \frac{\sqrt{2}\, C_4 h^3}{C_5^{-1} h^2} = C_6 h,\] 
where $Q_i$ and $Q_i^{\rm proj}$ are the projections onto the top $k$-dimensional eigenspaces of $\Sigma_i$ and $\Sigma_i^{\rm proj}$ respectively. 
Since \smash{$Q_i^{\rm proj} = P_i$}, the result follows from \lemref{proj_angle} and the fact that $\sin a \ge \frac2\pi a$ for all $a \in [0,\frac\pi2]$.
\end{proof}

\begin{lemma} 
\label{lem:Sigma_proj} 
Suppose that $u_1, \dots, u_N \in \bbR^k$ is a $(\eta, 1/C_1)$-net of the unit ball. Define $\Sigma = \frac1N \sum_j u_j u_j^\top$. Then, for $\eta \le \eta_0$ for some $\eta_0 > 0$ depending only on $k$ and $C_2 \ge 1$ depending only on $k$ and $C_1$, we have $C_2^{-1} {\rm I} \preceq \Sigma \preceq C_2 {\rm I}$. 
\end{lemma}

\begin{proof}
Let $B_0$ denote the unit ball in $\bbR^k$.    We make use of Riemann sums. Let $V_j$ denote the cell corresponding to $u_j$ in the Voronoi partition of $B_0$ based on $u_1, \dots, u_N$, and define
\[\tilde\Sigma := \sum_{j\in[N]} \mu(V_j) u_j u_j^\top.\]
Using the fact that $u \mapsto u u^\top$ is 2-Lipschitz on $B_0$, we have 
\begin{equation} \label{Sigma_proj1}  
    \left\| \tilde\Sigma - \int_{B_0} u u^\top \d u \right\| 
    \le 2 \max_{j\in[N]} \diam(V_j), 
\end{equation}
where $\mu$ denotes the Lebesgue measure.
 
We now show that $\mu(V_j)$ is of order $\eta^k$ uniformly in $j$. 
Indeed, on the one hand, $\|u_j - u_l\| > \eta/C_1$, so that $B(u_j,\eta/2C_1) \subset V_j$, implying that 
\[\mu(V_j) \ge \mu\big(B(u_j,\eta/2C_1)\big) \ge \mu(B_0) (\eta/2C_1)^k =: \eta^k/C_3.\] 
On the other hand, $V_j \subset B(u_j, 2\eta)$ since any $u \in B_0$ such that $\|u - u_j\| > \eta$ must be within $\eta$ of some $u_l$ other than $u_j$, and this implies that 
\[\mu(V_j) \le \mu\big(B(u_j,2\eta)\big) \le \mu(B_0) (2\eta)^k =: C_4 \eta^k.\] 
Because $\sum_j \mu(V_j) = \mu(B_0) = 1$ and $N \min_j \mu(V_j) \le \sum_j \mu(V_j) \le N \max_j \mu(V_j)$, all this implies that $\eta^{-k}/C_4 \le N \le C_3 \eta^{-k}$.
We thus have 
\[(C_3 C_4)^{-1} \Sigma \preceq N (\min_j \mu(V_j)) \Sigma \preceq \tilde\Sigma \preceq N (\max_j \mu(V_j)) \Sigma \preceq (C_3 C_4) \Sigma.\]
Reorganized, this gives
\[(C_3 C_4)^{-1} \tilde\Sigma \preceq \Sigma \preceq (C_3 C_4) \tilde\Sigma.\]
Note that $C_3$ and $C_4$ only depend on $k$ and $C_1$.
 
Along the way, we also found that $\diam(V_j) \le 4 \eta$ due to $V_j \subset B(u_j, 2\eta)$.
And since, by symmetry, 
\[\int_{B_0} y y^\top \d y = C_5 {\rm I},\] 
for $C_5$ depending only on $k$, going back to \eqref{Sigma_proj1}, we find that the eigenvalues of $\tilde\Sigma$ are between $C_5-8\eta$ and $C_5+8\eta$.
Let $\eta_0 = C_5/16$, defined so that $C_5-8\eta_0 \ge C_5/2$ and $C_5+8\eta_0 \le 2 C_5$.
Then, for $\eta \le \eta_0$, the eigenvalues of $\Sigma$ are between $C_5/(2 C_3C_4)$ and $(2C_3C_4)C_5$.
\end{proof}

\subsubsection{Estimating the surface}
Having estimated the tangent space $T_i$ at each point $x_i \in \bX$, resulting in $\hat T_i$, we build the tangential Delaunay complex (corrected for inconsistencies), denoted $\hat\cT_{\rm c}$. 

The proof of \thmref{TDC_bound} relies on a result borrowed\,\footnote{The lower bound on the reach stated in Th~4.1 in~\cite{aamari2018stability} appears incorrect. We  followed the arguments backing that result, especially Lem~4.2, to arrive at the lower bound on the reach given here.} from~\cite{aamari2018stability}, which in words says that if the estimates for the tangent spaces are accurate enough then there is a surface with positive reach which also passes through the sample points and for which each estimated tangent space corresponds to its actual tangent space at the corresponding location.
\begin{proposition}[Th~4.1 in~\cite{aamari2018stability}]
\label{prp:tangent_interpolation}
In the present context, suppose that, for all $i \in [n]$, $\tilde T_i$ is a $k$-dimensional affine subspace passing through $x_i$ such that $\angle(T_i, \tilde T_i) \le \theta$. There is a constant $C$ depending on $\cM$ such that, if $\eps \le 1/C$ and $\theta \le 1/C$, then there is a surface $\tilde\cM$ satisfying \aspref{M} with reach $\ge 1/C$ and within Hausdorff distance $C\theta\eps$ from $\cM$ such that $x_i \in \tilde\cM$ and $T_{\tilde\cM}(x_i) = \tilde T_i$ for all $i \in [n]$.
\end{proposition}

In its original statement, \cite[Th~4.1]{aamari2018stability} also says that $\cM$ and $\tilde\cM$ are diffeomorphic, and in fact, a look at the proof of that result, in particular from \cite[Lem~4.2]{aamari2018stability}, reveals that they are $O(\eps)$ distortions of each other. This is not quite enough for our purposes, and therefore we do not use this part of the result. See \secref{discussion} for a longer discussion.

\subsubsection{Estimating the distances}
With $\hat\cT_{\rm c}$ at our disposal, we compute the pairwise distances on $\hat\cT_{\rm c}$ to obtain
\begin{equation} \label{dhat_delaunay}
\hat d_{ij} := \d_{\hat\cT_{\rm c}}(x_i, x_j), \quad i, j \in [n].
\end{equation}
\begin{theorem}
\label{thm:TDC_bound}
There is $C > 0$ depending only on $\cM$ such that, if $\eps \le 1/C$, then the estimator \eqref{dhat_delaunay} satisfies \eqref{distance_error}.
\end{theorem}

\begin{proof}[Proof of \thmref{TDC_bound}]
First, let $C_0$ denote the constant of \prpref{tangent_estimation} and let $C_1$ be the constant of \prpref{tangent_interpolation}. Suppose $\eps$ is small enough that $C_0\eps \le 1/C_1$, so that  \prpref{tangent_interpolation} applies to yield the existence of a surface $\tilde\cM$ satisfying \aspref{M} such that: it contains the sample points $\bX$; its tangent spaces at the sample points coincide with the estimated tangent spaces, i.e., $T_{\tilde\cM}(x_i) = \hat T_i$ for all $i \in [n]$; it has reach $\ge 1/C_1$; and it is within Hausdorff distance $C_1 \eps^2$. 
By \lemref{both_projections}, we have
\begin{equation} 
|\d_{\cM}(x_i,x_j) - \d_{\tilde\cM}(x_i,x_j)| 
\le C_2 \eps^2\, \d_{\cM}(x_i,x_j), \quad \forall i,j \in [n].
\end{equation}

Next, suppose $\eps$ is small enough that \thmref{TDC} applies, so that $\hat\cT_{\rm c}$ and the surface $\tilde\cM$ are in one-to-one correspondence via a $C_3 \eps^2$-distortion map. 
By \corref{distortion}, this implies that 
\begin{equation} 
|\d_{\tilde\cM}(x_i,x_j) - \d_{\hat\cT_{\rm c}}(x_i,x_j)| 
\le C_4 \eps^2\, \d_{\hat\cT_{\rm c}}(x_i,x_j), \quad \forall i,j \in [n].
\end{equation}

Combining these two bounds using the triangle inequality, we get for any pair $i,j \in [n]$,
\begin{equation} 
|\d_{\cM}(x_i,x_j) - \d_{\hat\cT_{\rm c}}(x_i,x_j)| 
\le C_5 \eps^2\, \big(\d_{\cM}(x_i,x_j) + \d_{\hat\cT_{\rm c}}(x_i,x_j)\big),
\end{equation}
and, if $\eps$ is small enough that $C_5 \eps^2 \le 1/3$, this implies that
\begin{equation} 
|\d_{\cM}(x_i,x_j) - \d_{\hat\cT_{\rm c}}(x_i,x_j)| 
\le C_6 \eps^2\, \min\big\{\d_{\cM}(x_i,x_j),  \d_{\hat\cT_{\rm c}}(x_i,x_j)\big\},
\end{equation}
which gives the desired bound.
\end{proof}

\subsection{Numerical experiments}
\label{sec:numerics}

We performed some numerical experiments to illustrate our theory. We focused on the most interesting case accessible to computations, that of points on a $(k=2)$-dimensional surface embedded in a Euclidean space of dimension $d=3$. We chose to work with such emblematic surfaces as the sphere, the torus, and the Swiss roll (even though the latter has a boundary).

\subsubsection{Data}
Armed with a parameterization of a surface, we generate sample points by drawing from the uniform distribution on the parameter domain independently $n$ times, $n$ being the desired sample size (which varies in our experiments). We then subsampled the points to obtain a net. For simplicity, this subset of points was considered to be the entire sample. 


These are the parameterizations that we used:
\begin{align}
\text{sphere:} & \qquad (u,v) \in [0,2\pi) \times [0,\pi) \mapsto (\cos(u) \cos(v), \sin(u) \cos(v), \sin(v)); \\
\text{torus:} & \qquad (u,v) \in [0,2\pi) \times [0,2\pi) \mapsto ( \cos(u) (2+\cos v), \sin(u) (2+\cos v), \sin v); \\
\text{Swiss roll:} & \qquad (u,v) \in [\pi/4, 9\pi/4] \times [0, 1] \mapsto (u\cos u, u\sin u, v).
\end{align}

\subsubsection{Mesh construction}

The generation a mesh from the point cloud, specifically, the tangential Delaunay complex, was done via the implementation available in the {\em Geometry Understanding in Higher Dimensions (GUDHI)} library~\cite{gudhi:TangentialComplex}.
The main parameters are the maximum perturbation radius, which is a constraint on the amount that points may be perturbed in an effort to reduce inconsistencies, and the maximum
squared edge length of a simplex.
With a dataset that has undergone preprocessing to yield a net, the parameter values do not significantly alter the resulting mesh.
The maximum squared edge length is the most crucial parameter to adjust when some areas of the surface are poorly sampled. It was set using a priori knowledge of the true underlying surface.

In \figref{mesh_examples}, we provide examples of tangential Delaunay complex mesh constructions for the sphere, the torus, and the Swiss roll, and do so for various sample sizes. 

\begin{figure}[htb!]
\includegraphics[width=.33\linewidth]{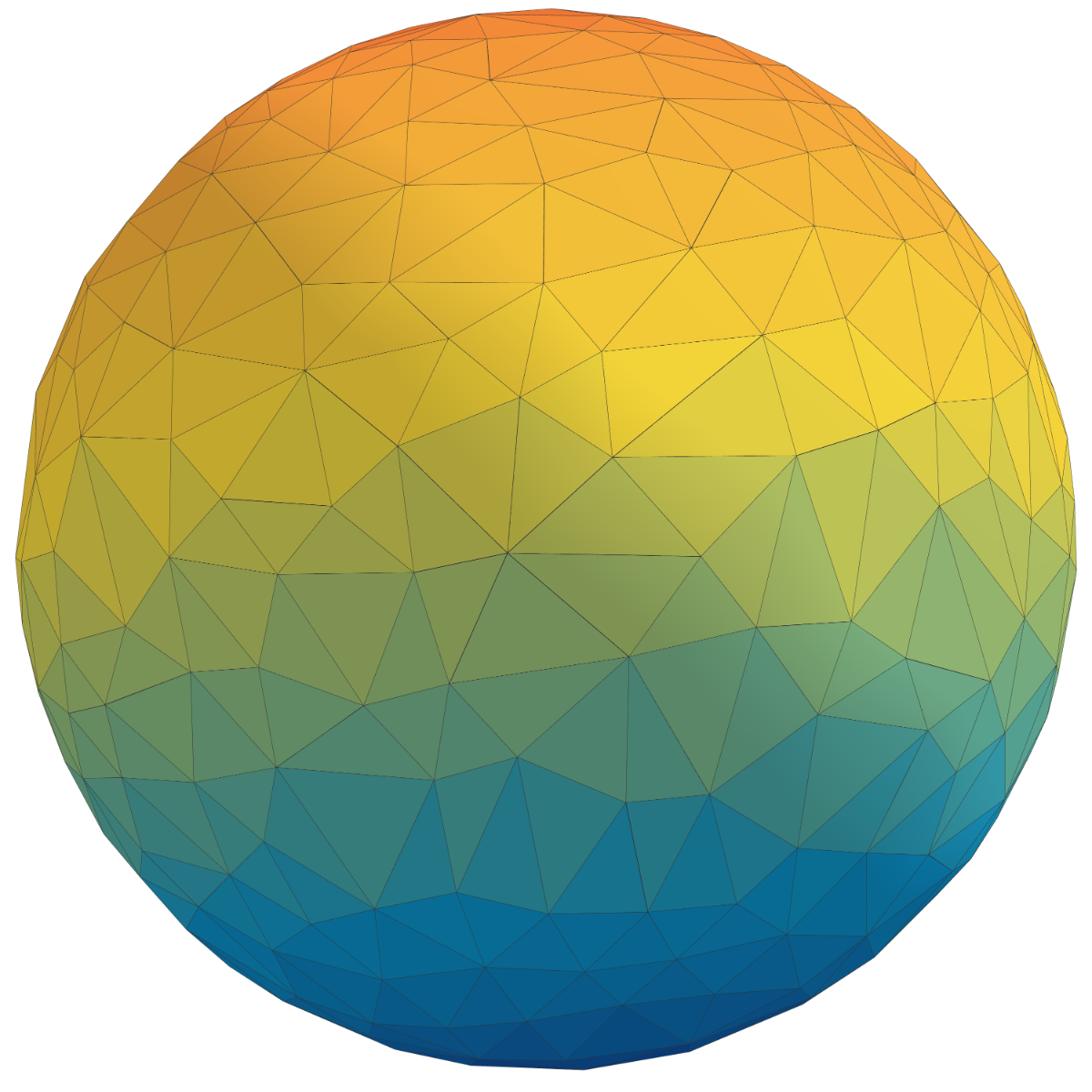} 
\includegraphics[width=.33\linewidth]{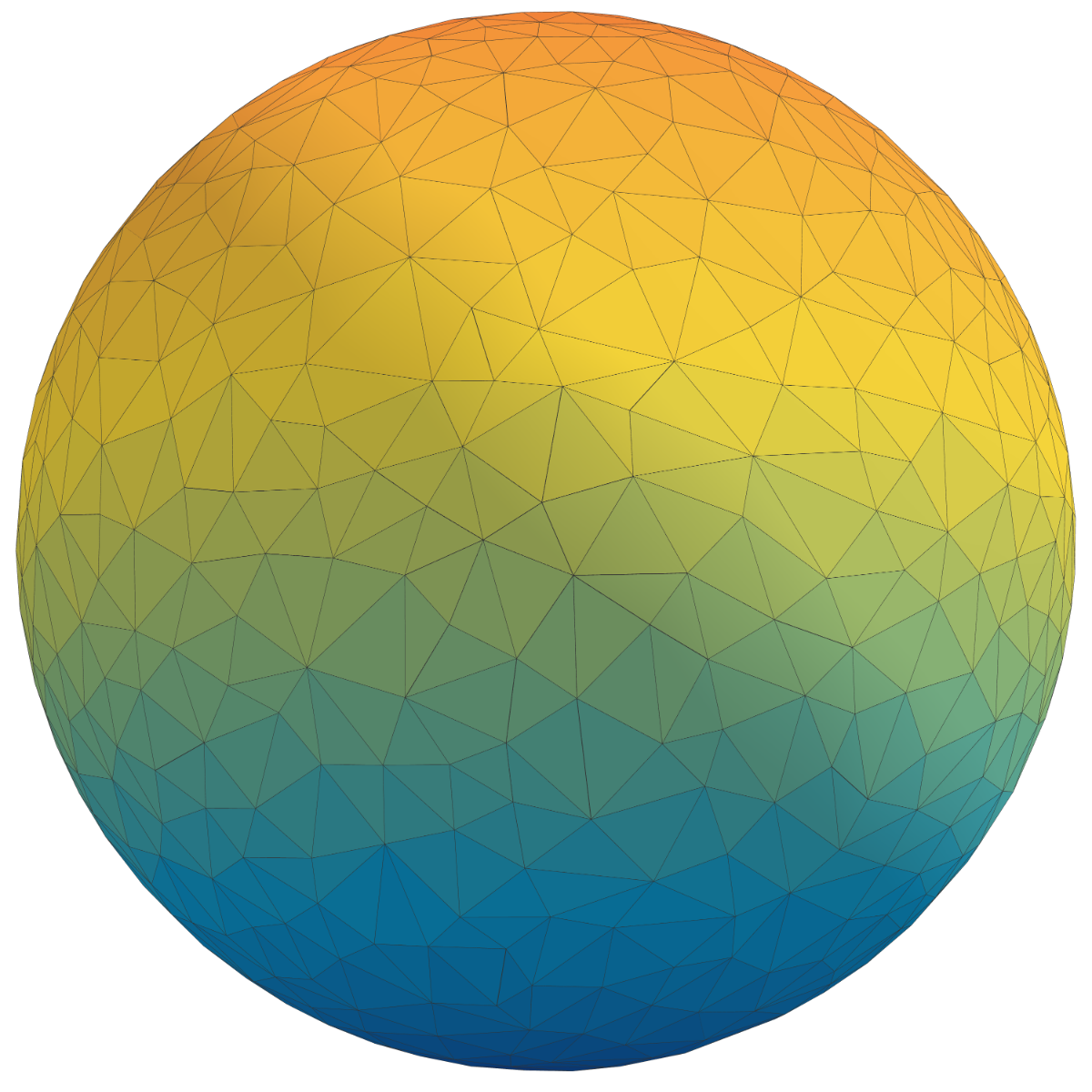} 
\includegraphics[width=.33\linewidth]{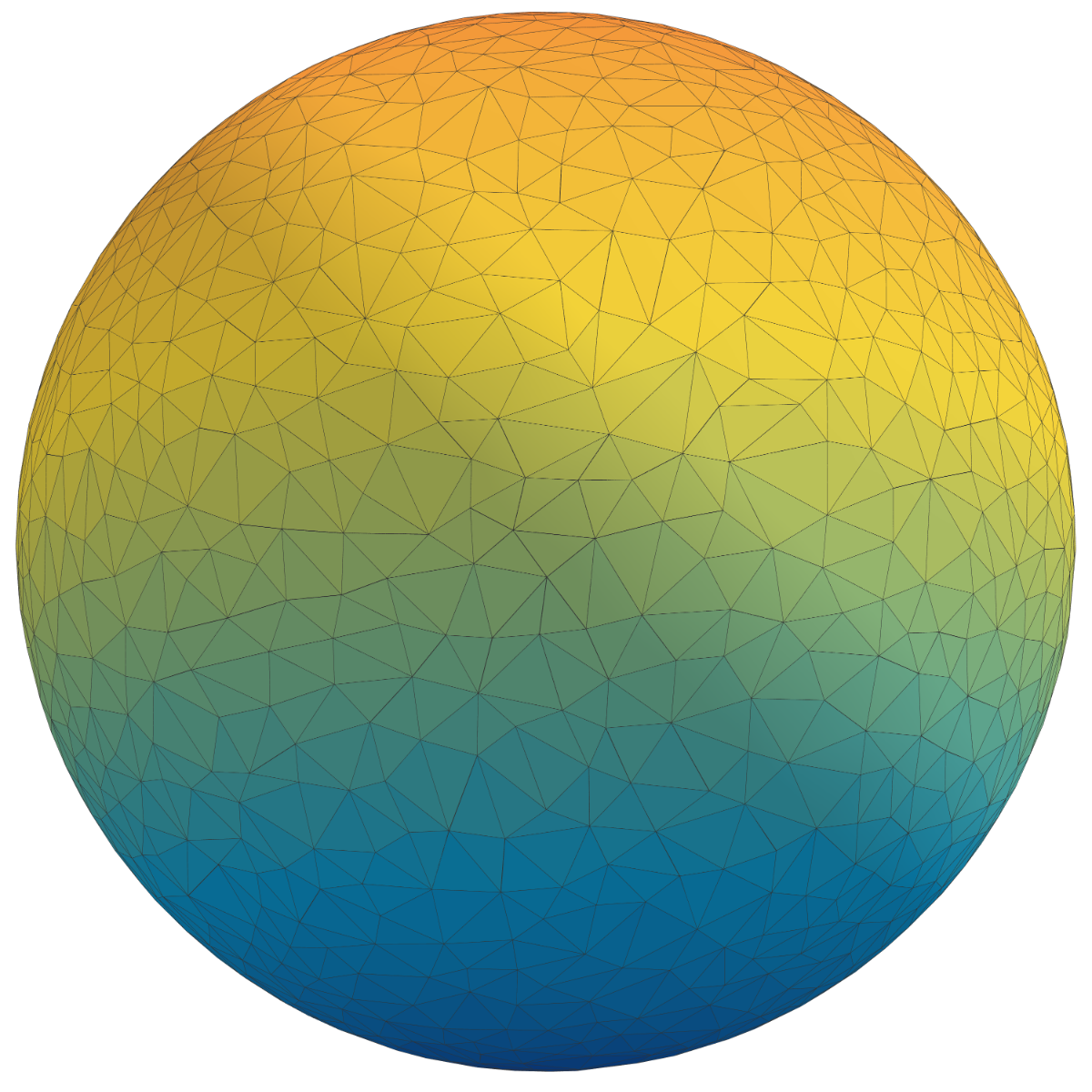} \\ 
    \includegraphics[width=.33\linewidth]{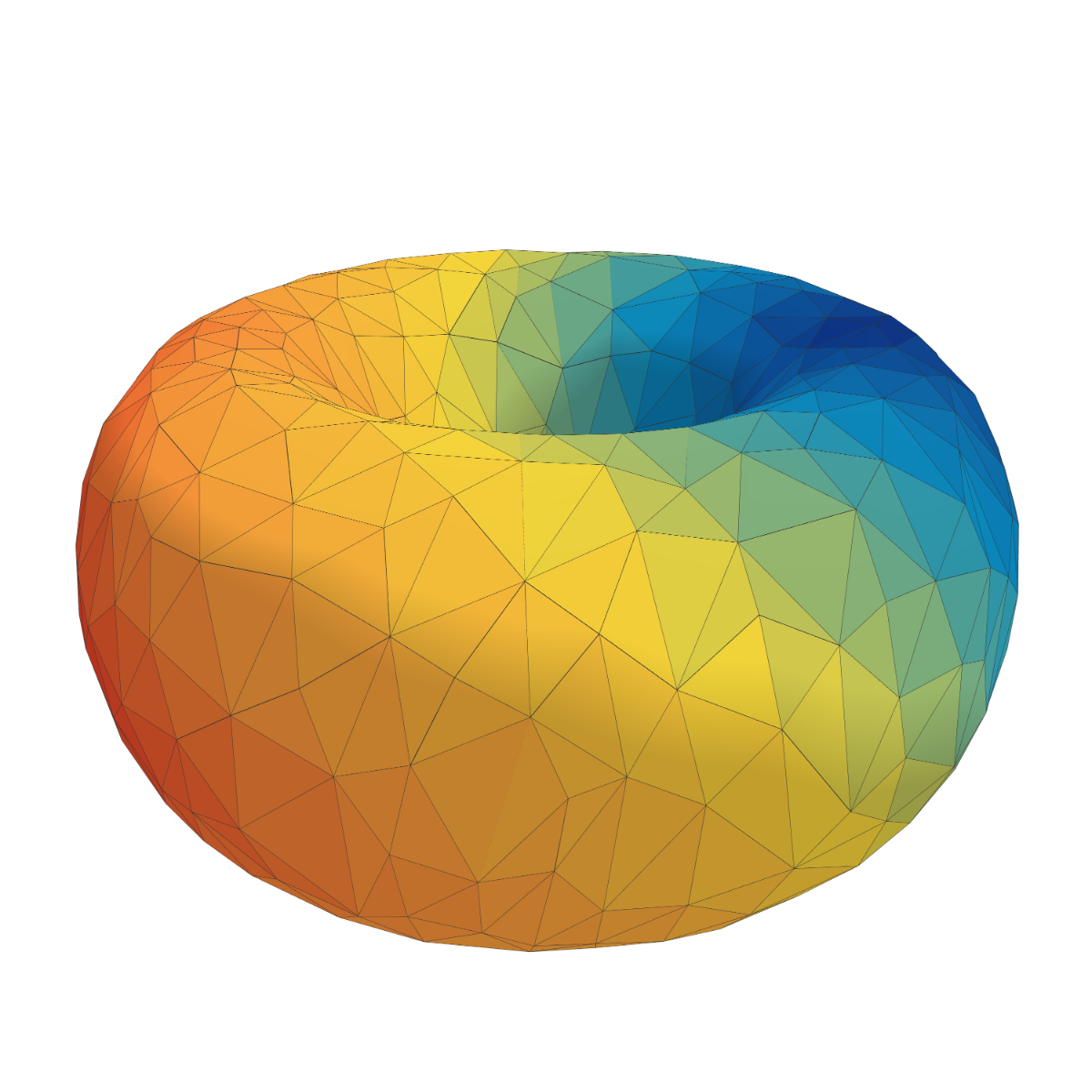} 
\includegraphics[width=.33\linewidth]{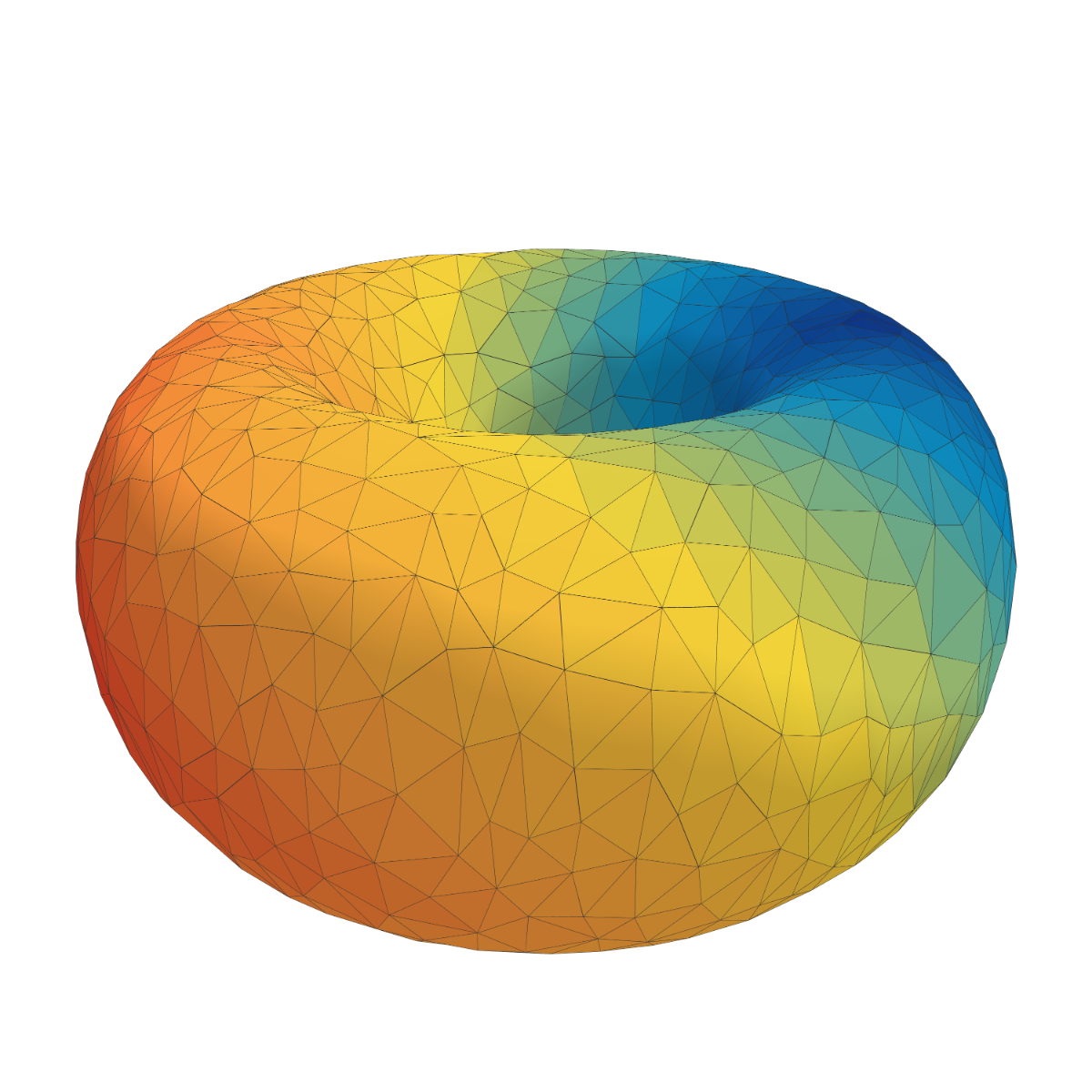} 
\includegraphics[width=.33\linewidth]{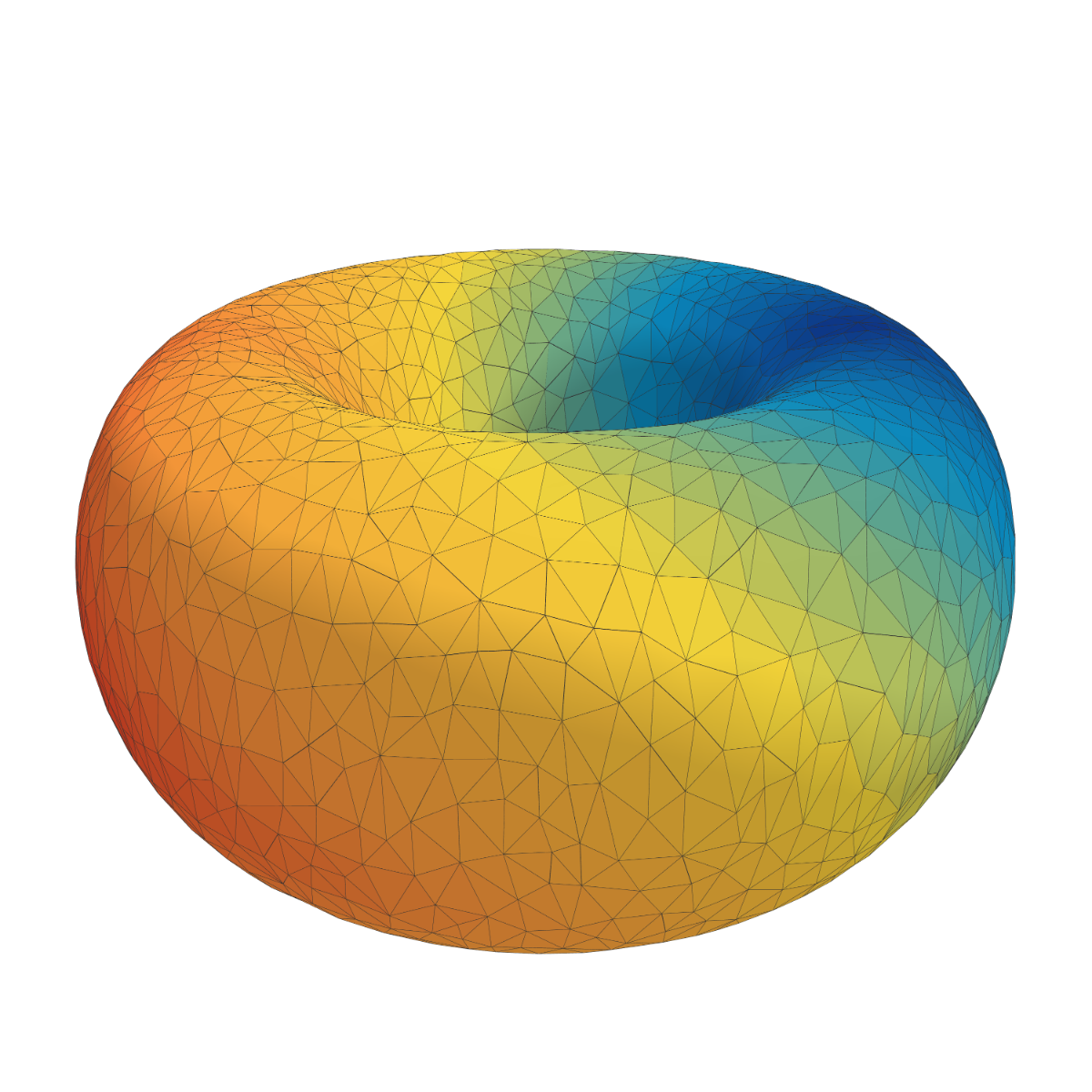} \\ 
\includegraphics[width=.33\linewidth]{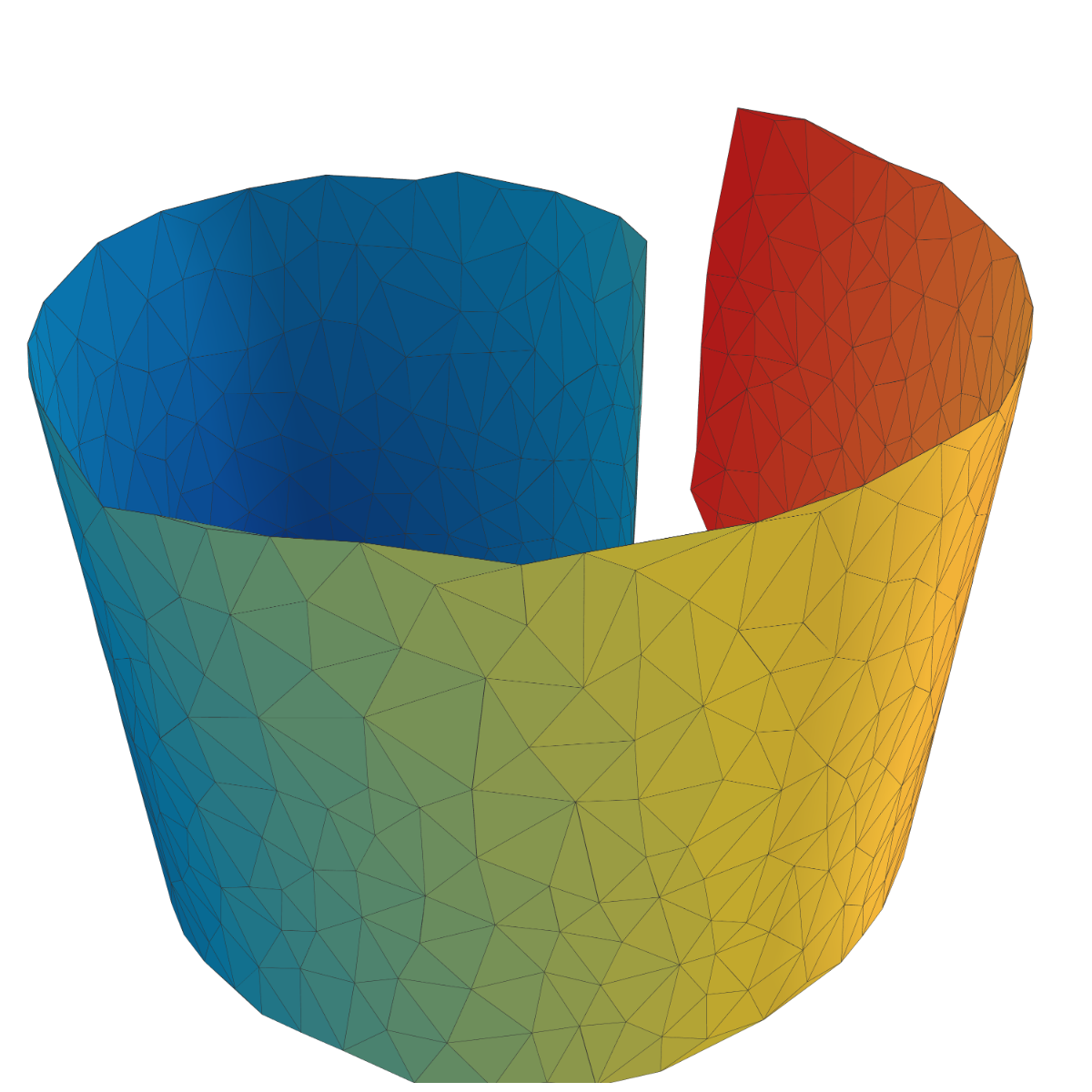} 
\includegraphics[width=.33\linewidth]{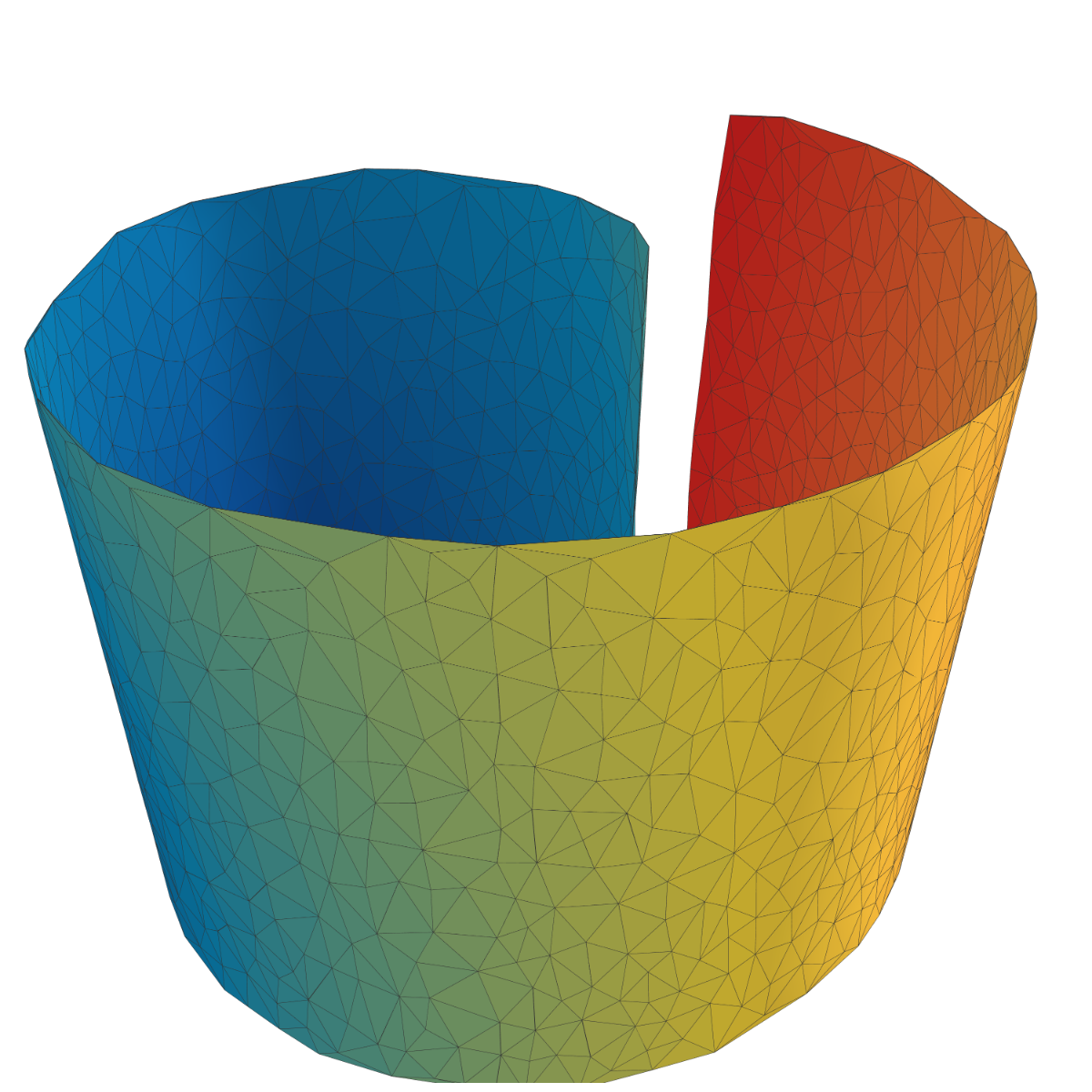} 
\includegraphics[width=.33\linewidth]{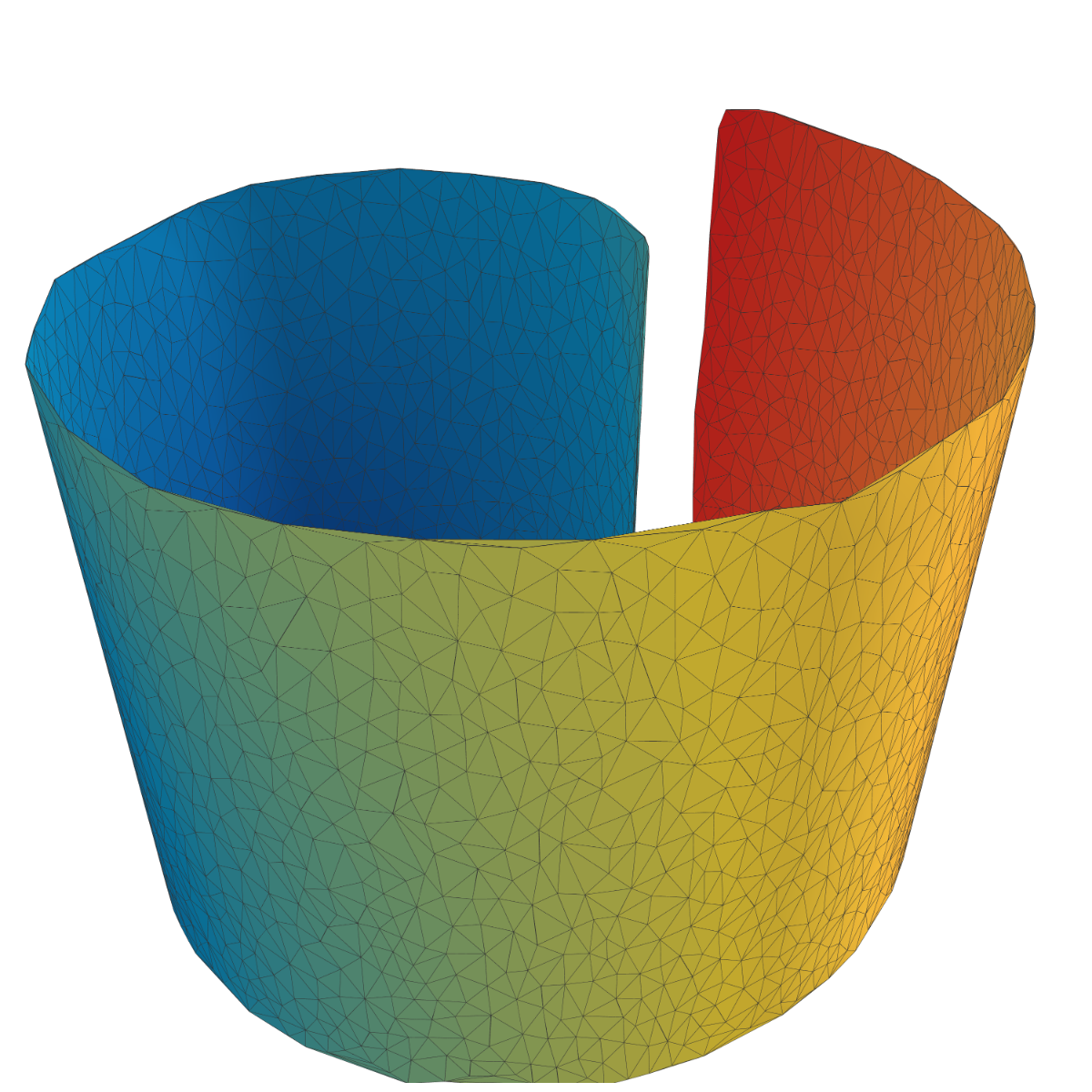} 
\caption{Examples of tangential Delaunay complex mesh constructions on the sphere, the torus, and the Swiss roll (top to bottom) based on uniform samples of size 500, 1000, and 2000 (left to right). The color signifies the distance to a fixed source point, from closer (blue) to farther (red).} 
\label{fig:mesh_examples}
\end{figure}

\subsubsection{Shortest paths}
We now turn to evaluating the accuracy of the proposed method for estimating the pairwise distances on a surface. As benchmark, we use
Isomap~\cite{tenenbaum2000global}. Isomap estimates pairwise distances on the underlying surface by forming a neighborhood graph and then computing the shortest-path distances using Djikstra's algorithm.\footnote{The igraph package was used to compute shortest paths on graphs~\cite{igraphpackage}.} The accuracy of this estimation depends crucially on the connectivity radius $r>0$ which defines the neighborhood graph. As there is no standard data-driven way to choose this connectivity radius, in our experiment, we look at various choices in a reasonable range.

On the other hand, after meshing --- which, it is true, may require some tuning to achieve a reasonable reconstruction --- computing distances on the mesh does not require further tuning.
In our experiments, we used the the `triangulated surface mesh shortest paths' module of the {\em Computational Geometric Algorithms Library (CGAL)}, which implements a variation of the Chen--Han algorithm~\cite{cgal:sp,Xin2009,Chen1990}.

For the sphere $\mathbb{S}^2$, the great circle distance (i.e., intrinsic distance) is given by $\cos^{-1} \langle x,y\rangle$ between $x,y\in \mathbb{S}^2$.
The Swiss roll is another nice surface to work with because there is a global isometry
between the surface and a rectangle in $\bbR^2$.
Let $\gamma(t) = (\gamma_1(t), \gamma_2(t)) := (t\cos(\alpha t), t\sin(\alpha t))$.
The arc length is given by 
\[s(t) = \int_0^t \|\dot\gamma(t)\| {\rm d}t = \int_0^t \Big[\frac{t}{2} \sqrt{1+(\alpha t)^2} + \frac{1}{2\alpha}\sinh^{-1}(\alpha t)\Big] {\rm d}t.\]
The inverse $t = t(s)$ is calculated with Newton's method.
By construction, $\phi(s, z) := (\gamma_1(s), \gamma_2(s), z)$ is an isometry, and in particular, 
\[d_\cM(\phi(s_1,z_1), \phi(s_2,z_2)) = \sqrt{(s_1-s_2)^2 + (z_1-z_2)^2}.\]
By contrast, the torus is not as easy to handle. It does admit a $C^1$ isometry into $\bbR^3$, but not a $C^2$ isometry, and the $C^1$ isometry is rather complex \citep{borrelli2013isometric}.
We opted for a numerical approximation ascertained by the use of a midpoint method initialized with the output from a neighborhood graph distances.
The method is iterative. From an existing approximating path, say $\gamma^{(t)}$ at iteration $t$, a new path is constructed at iteration $t+1$ by first splitting each line segment of $\gamma^{(t)}$ in half and then projecting the resulting piecewise linear path back onto the surface.
The results of these experiments are reported in \figref{exp_sphere} (sphere), \figref{exp_torus} (torus), and \figref{exp_swiss} (Swiss roll).
For the sphere and Swiss roll, mesh distances are noticeably more accurate on average than graph distances, and so across a wide range of choices of the connectivity radius of the graph. For the torus, the mesh distances are only slightly better than graph distances corresponding to the best choice of connectivity radius. (We again note that the choice of radius is typically done in a rather ad hoc manner in practice.)

\begin{figure}[!htb]
\centering 
\includegraphics[width=.30\linewidth]{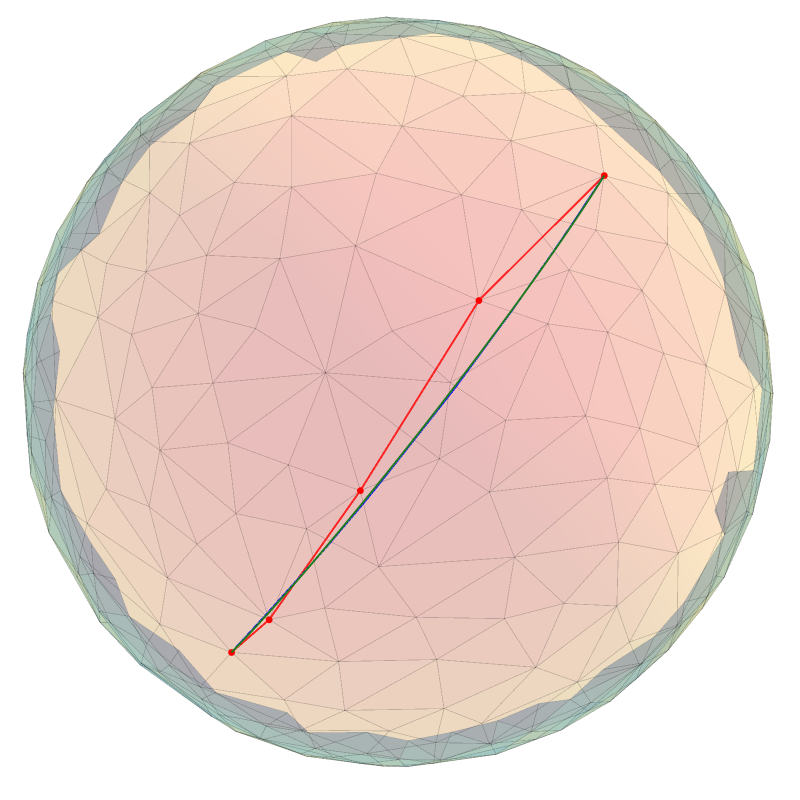} \quad
\includegraphics[width=.30\linewidth]{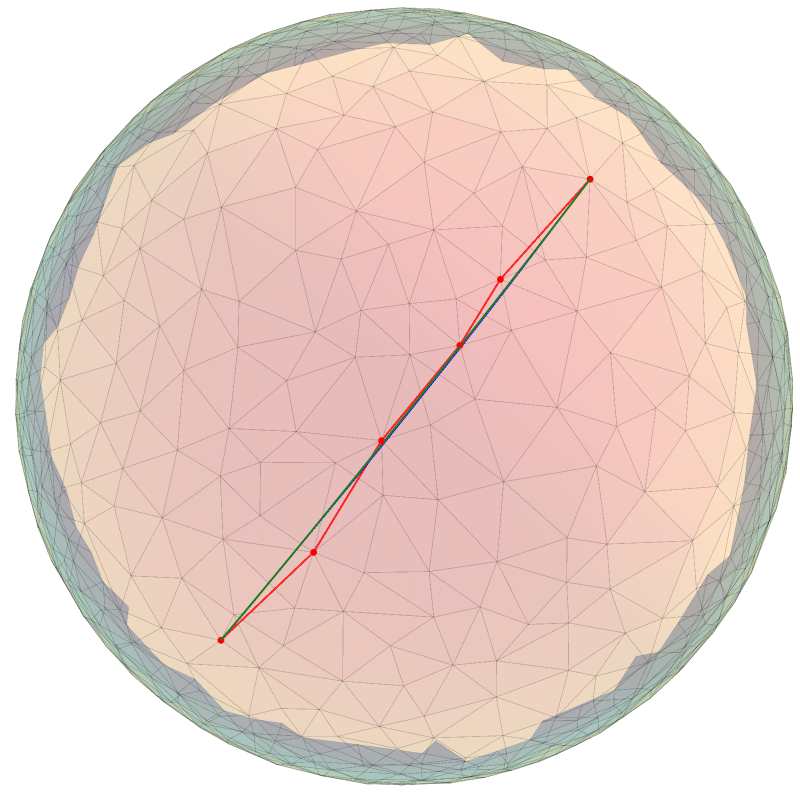} \quad
\includegraphics[width=.30\linewidth]{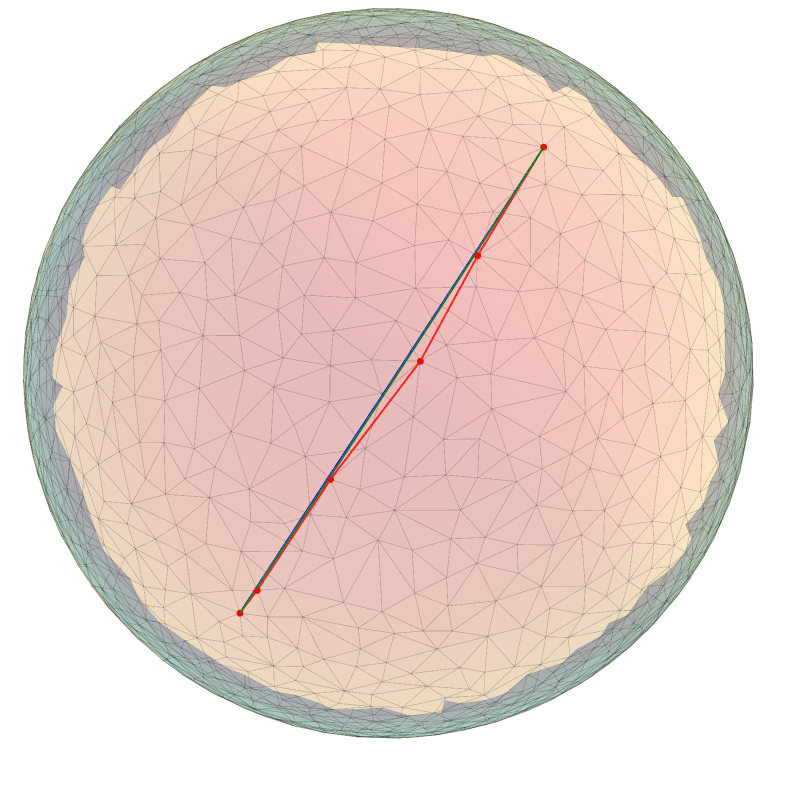} 
\includegraphics[width=.30\linewidth, trim=35 0 50 45, clip]{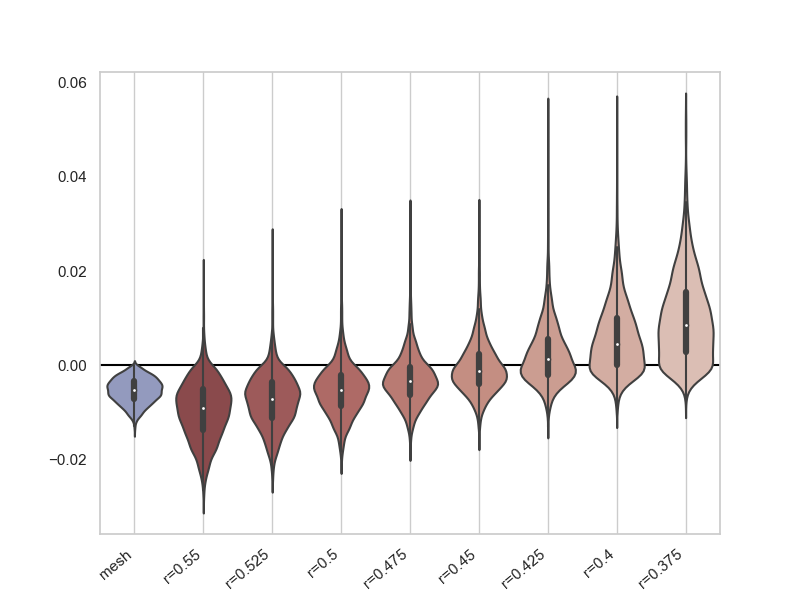} \quad
\includegraphics[width=.30\linewidth, trim=35 0 50 45, clip]{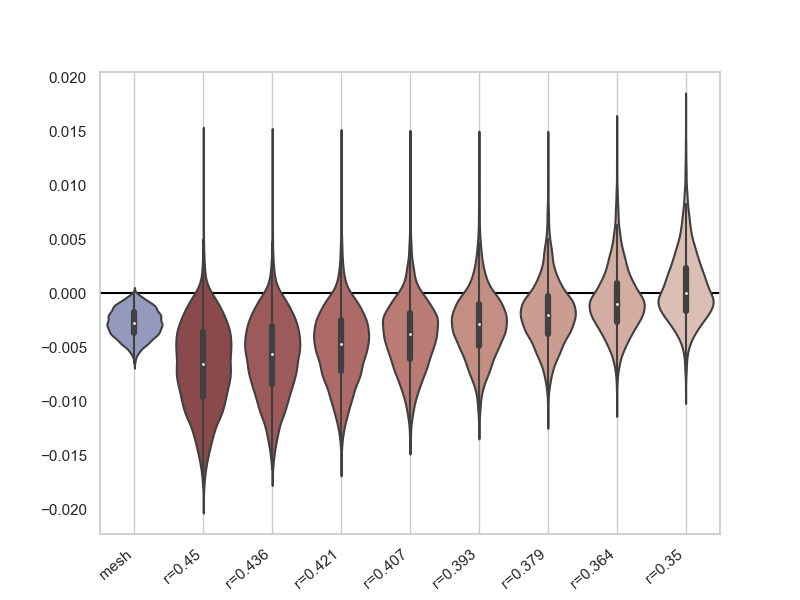} \quad
\includegraphics[width=.30\linewidth, trim=35 0 50 45, clip]{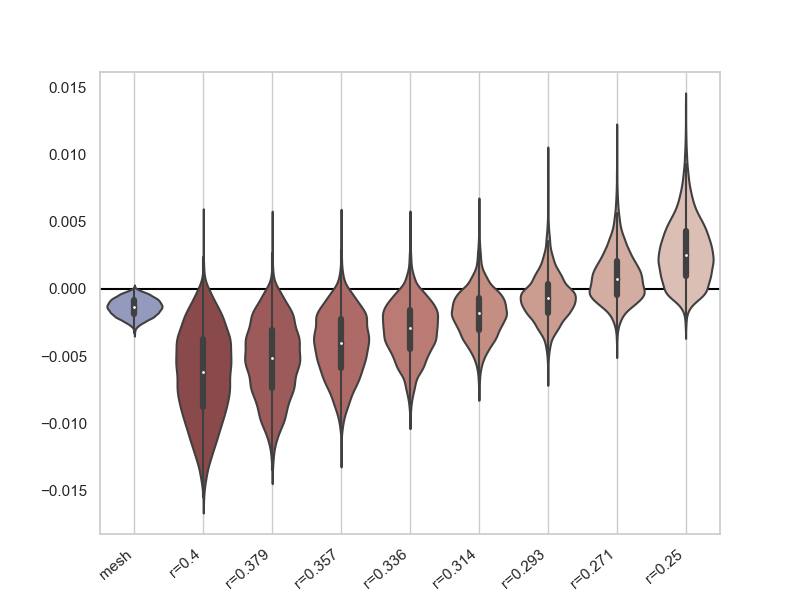} 
\caption{Estimation of distances for the sphere. Sample size $n = 500, 1000, 2000$ (left to right). 
Top row: Examples of computed shortest paths comparing the true path (green), path computed on the mesh (blue), and a typical path computed on the neighborhood graph (red). The first two paths almost overlap. 
Bottom row: Signed error (estimate - true) averaged over 50 repeats for the distance computed on the mesh and for the distance computed on a neighborhood graph of varying connectivity radius $r$. 
} 
\label{fig:exp_sphere}
\end{figure}

\begin{figure}[!htb] 
\centering
\includegraphics[width=.30\linewidth, trim=30 30 30 30, clip]{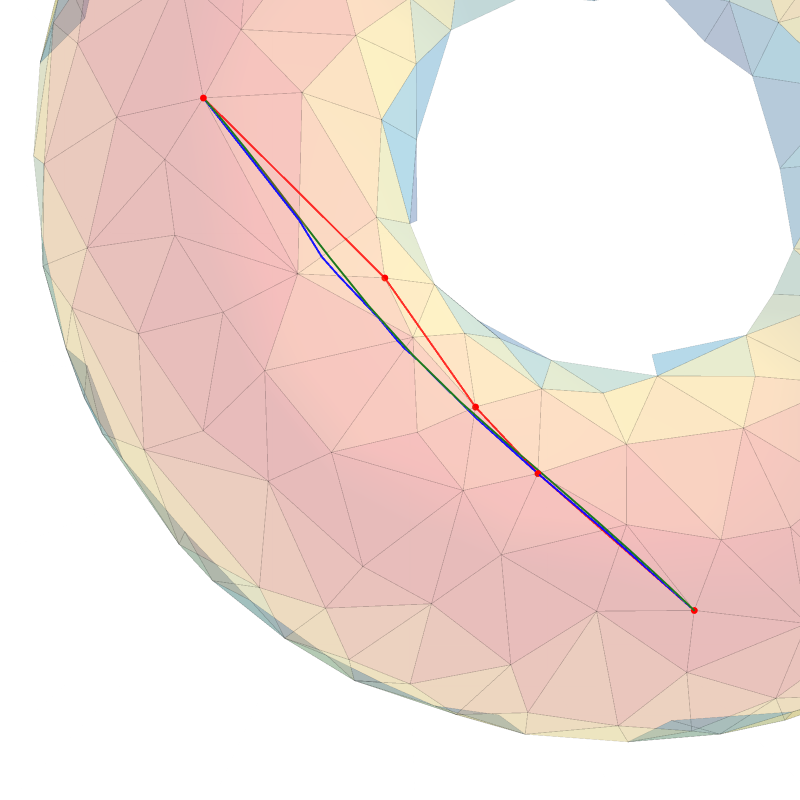} \quad
\includegraphics[width=.30\linewidth, trim=30 30 30 30, clip]{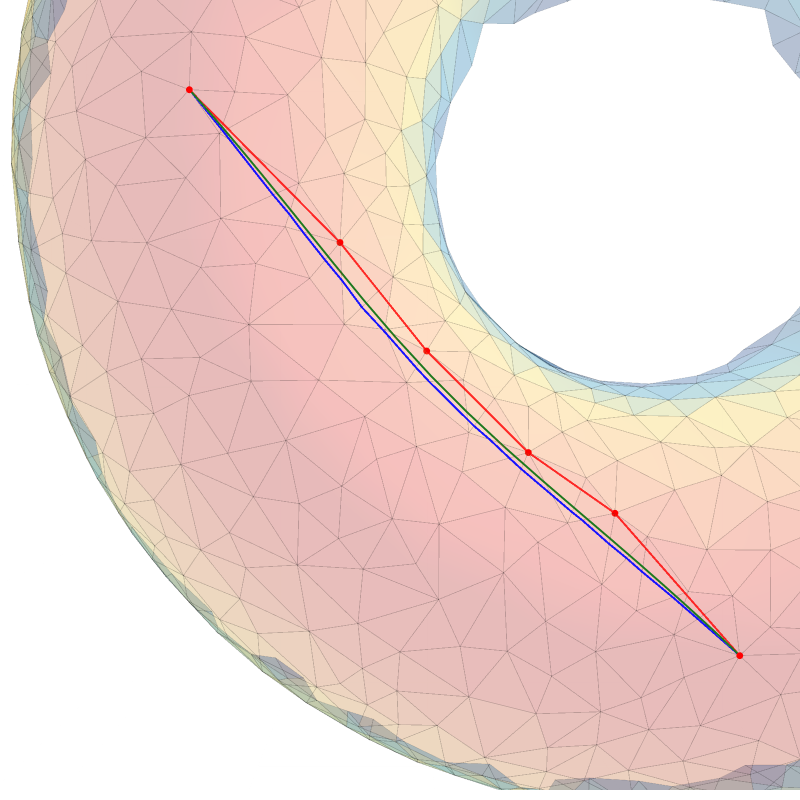} \quad
\includegraphics[width=.30\linewidth, trim=30 30 30 30, clip]{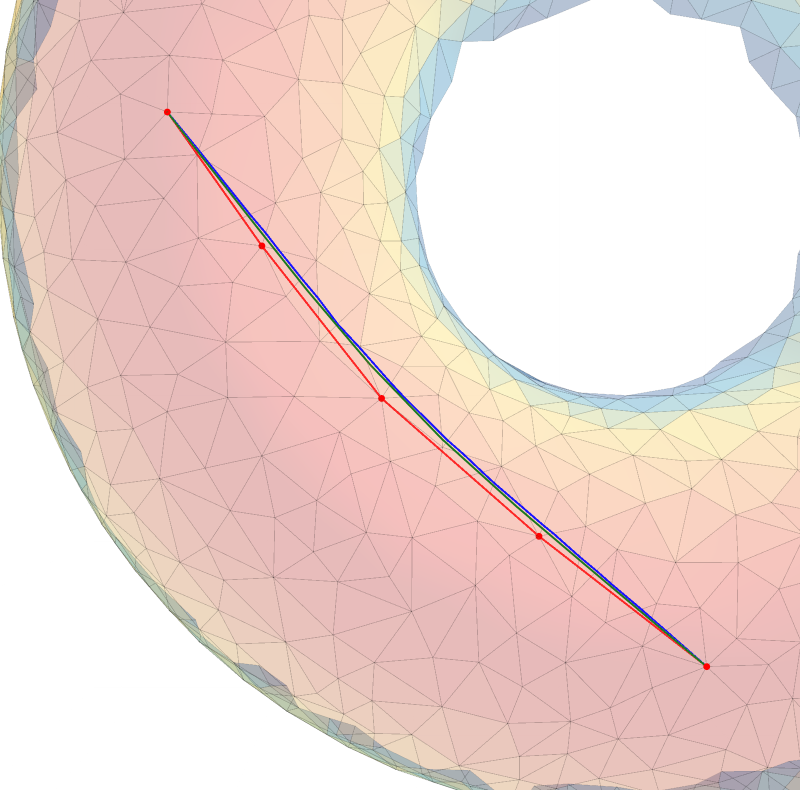} 
\includegraphics[width=.30\linewidth, trim=35 0 50 45, clip]{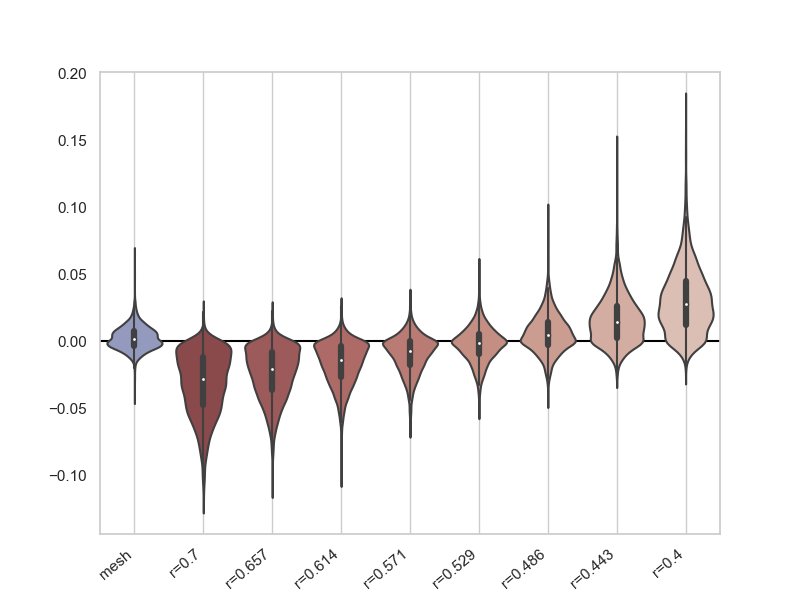} \quad
\includegraphics[width=.30\linewidth, trim=35 0 50 45, clip]{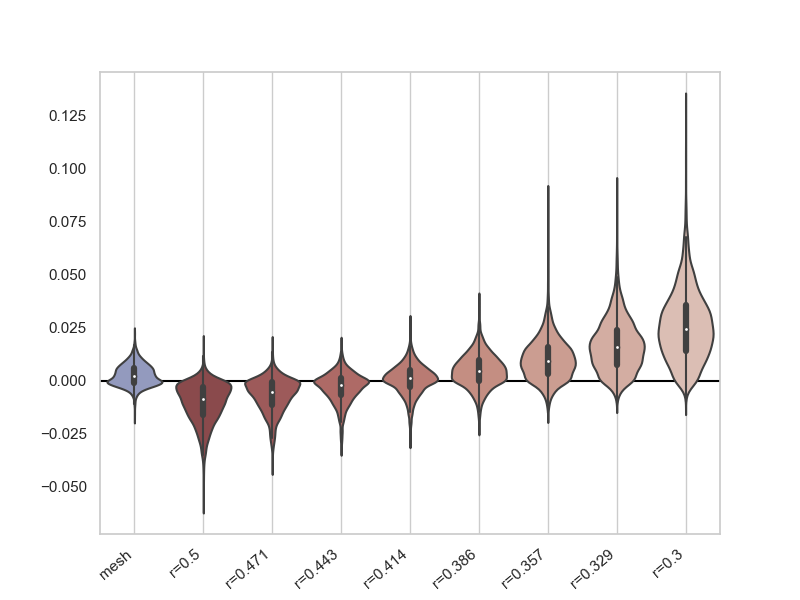} \quad
\includegraphics[width=.30\linewidth, trim=35 0 50 45, clip]{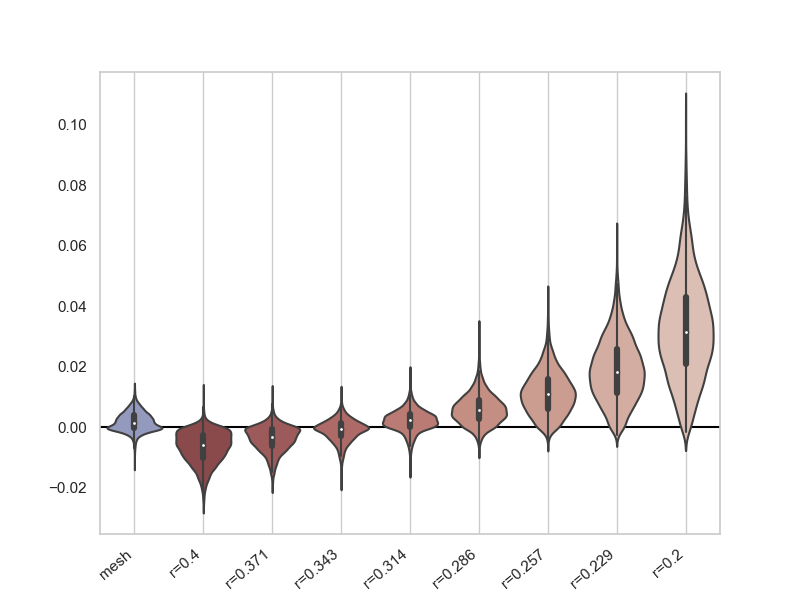} 
\caption{Estimation of distances for the torus. (See \figref{exp_sphere} for details.)} 
\label{fig:exp_torus}
\end{figure}

\begin{figure}[!htb]
\centering 
\includegraphics[width=.30\linewidth, trim=10 10 10 10, clip]{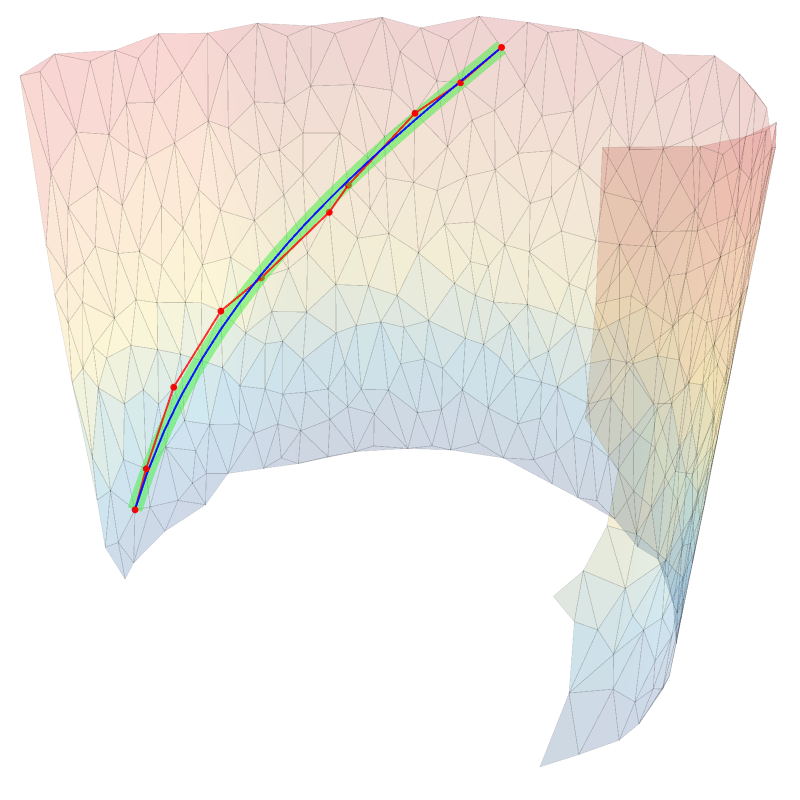} \quad
\includegraphics[width=.30\linewidth, trim=10 10 10 10, clip]{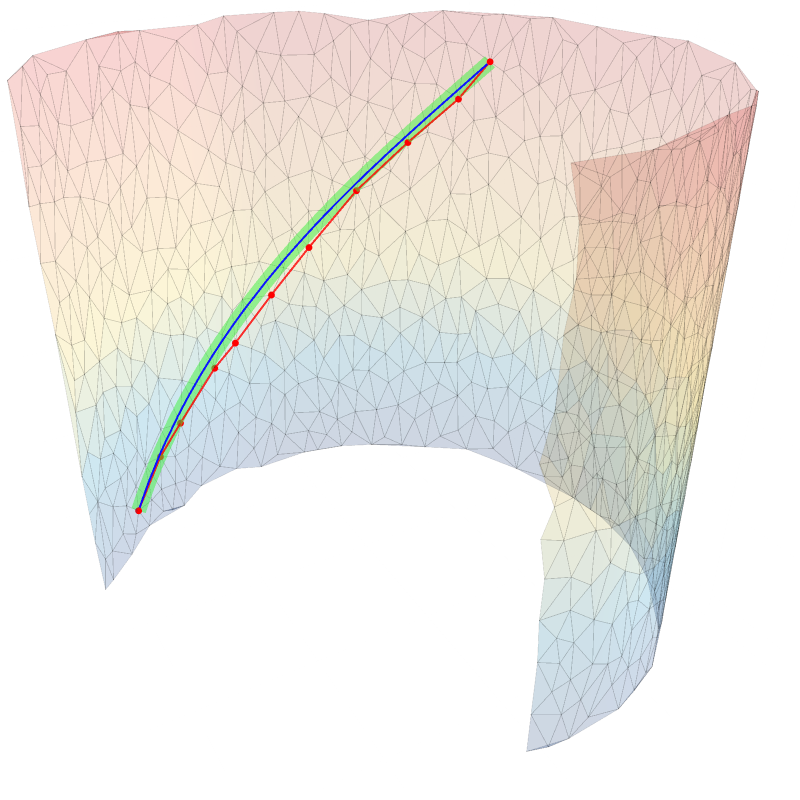} \quad
\includegraphics[width=.30\linewidth, trim=10 10 10 10, clip]{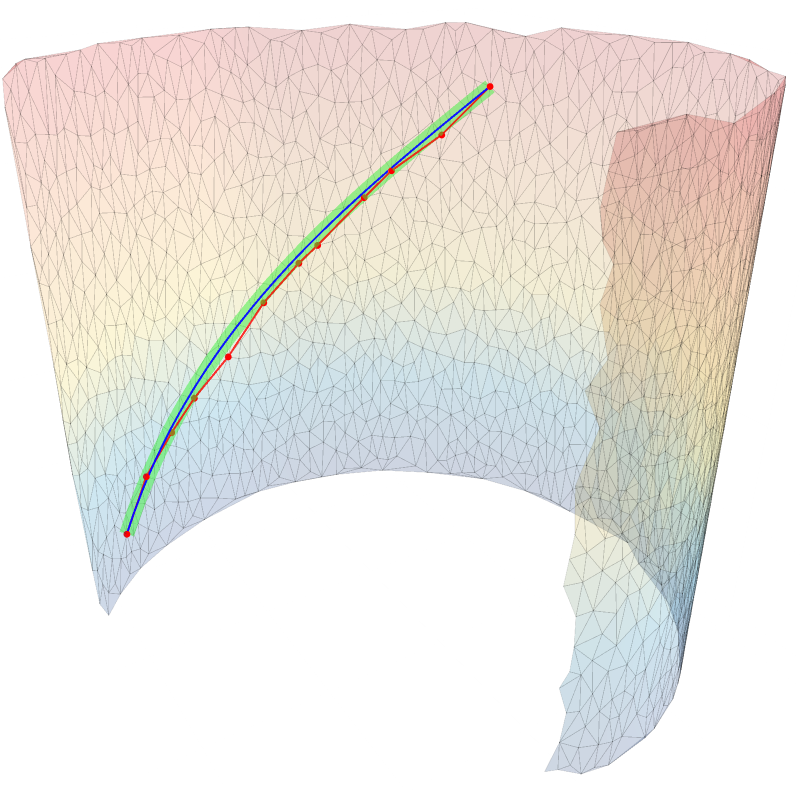} 
\includegraphics[width=.30\linewidth, trim=35 0 50 45, clip]{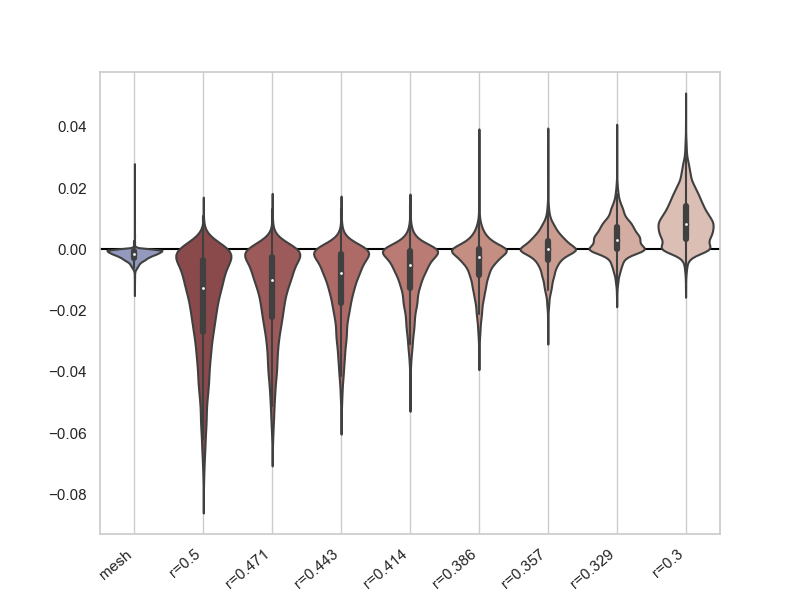} \quad
\includegraphics[width=.30\linewidth, trim=35 0 50 45, clip]{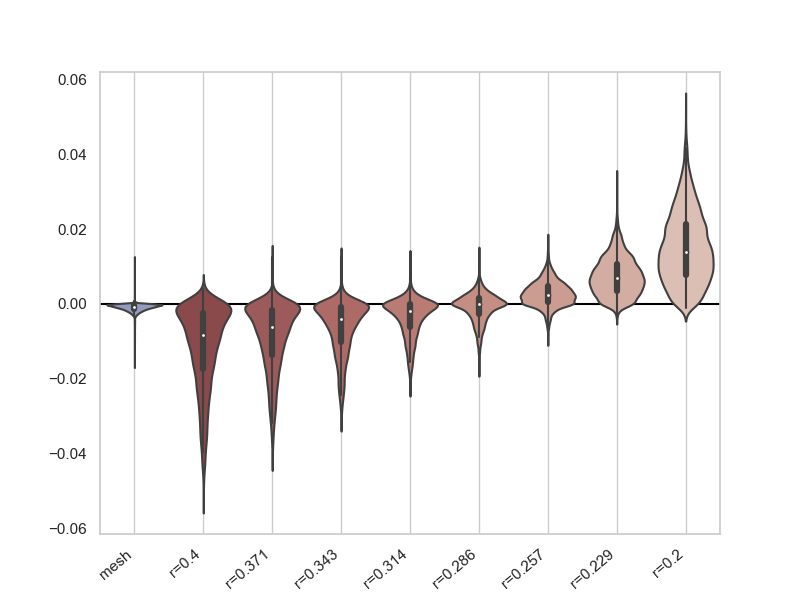} \quad
\includegraphics[width=.30\linewidth, trim=35 0 50 45, clip]{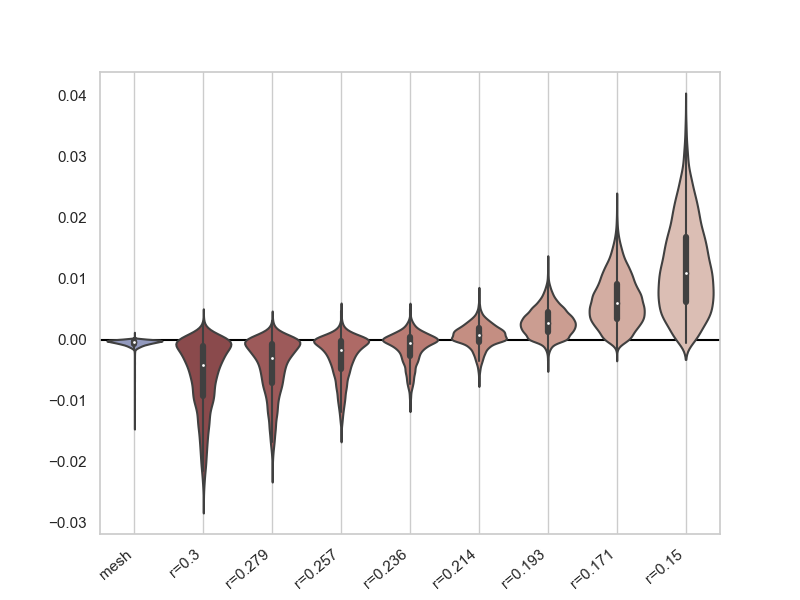} 
\caption{Estimation of distances for the Swiss roll. (See \figref{exp_sphere} for details.)} 
\label{fig:exp_swiss}
\end{figure}

We also examined the accuracy of the two methods as a function of the sample size.
For this experiment, we focused on the sphere.
As we explored larger sample sizes, we approximate the error by evaluating the difference between true and estimated distance on 100 pairs of points chosen at random.
The result of this experiment is reported in \figref{varying}.

\begin{figure}[!htb] 
\centering 
\includegraphics[width=.5\linewidth]{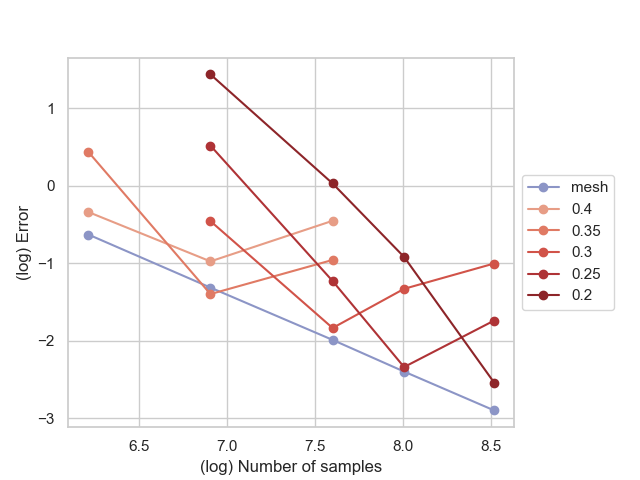} 
\caption{Estimation of distances on the sphere. The setting is as in \figref{exp_sphere}, except that here we look at a wider range of sample sizes. (Also, the error is absolute, not signed.) The values of the neighborhood radius --- each with a different color specified in the side legend box --- were chosen so as to optimize the accuracy of corresponding distance estimation.} 
\label{fig:varying}
\end{figure}

\section{Minimax manifold learning}
\label{sec:isomap}

The modern era of manifold learning, aka (nonlinear) dimensionality reduction, may have started with the advent of Isomap~\cite{tenenbaum1997mapping, tenenbaum2000global} and Local Linear Embedding (LLE)~\cite{Roweis00LLE}. This led to a flurry of methods, including Laplacian Eigenmaps~\citep{Belkin03}, Manifold Charting~\citep{brand2003charting}, Diffusion Maps~\citep{coifman2006diffusion}, Hessian Eigenmaps (HLLE)~\citep{Donoho03Hessian}, Local Tangent Space Alignment (LTSA)~\citep{Zhang04LTSA}, Maximum Variance Unfolding (aka Semidefinite Embedding)~\citep{mvu04}, $t$-SNE~\cite{maaten2008visualizing}, and UMAP~\cite{mcinnes2018umap}, among others.

Some theory was developed for many of them, in the original article or in followup publications such as~\cite{bernstein2000graph, zha2007continuum, arias2013convergence, ye2015discrete, gine2006empirical, smith2008convergence, belkin2008towards, vonLuxburg08, singer2006graph, hein2005graphs, goldberg2008manifold, arias2020perturbation}.
To this day, however, there is no optimality theory of manifold learning --- at least as far as we know. In fact, there is no clear agreement on what manifold learning is all about. We focus here on what we believe to be the simplest, and arguably the most fundamental framework for manifold learning: recovering a global isometry when one exists. Thus we assume that the underlying surface $\cM \subset \bbR^d$ is isometric to a (compact) domain $\cU \subset \bbR^k$ of full dimension (i.e., with non-empty interior). If $\phi: \cU \to \cM$ is such an isometry, then the goal is to estimate the embedded points $u_i := \phi^{-1}(x_i)$, up to a rigid transformation. Remember that $\bX = \{x_1, \dots, x_n\}$ denotes the sample and is assumed to belong to $\cM$. We denote $\bU := \{u_1, \dots, u_n\}$.
Note that, because of the isometric correspondence, since $\bX$ is an $\eps$-covering of $\cM$, $\bU$ is an $\eps$-covering of $\cU$.


Because the domain $\cU$ has a boundary, so does $\cM$. To keep the exposition simple, and to enable Isomap and the variant we propose to be consistent, we assume that $\cU$ is convex.
To be specific, we assume the following.
\begin{assumption} 
\label{asp:M_isomap} 
There is a compact and convex domain with non-empty interior in $\bbR^k$ and a $C^2$ isometry defined on an open set containing that domain such that $\cM$ is the image of that domain via that isometry.
\end{assumption}

In our context, two embeddings are necessarily compared up to a rigid transformation. We are able to leverage the results from \secref{distortion} to establish the existence of an embedding procedure that returns $\hat u_1, \dots, \hat u_n$ with error bounded as follows
\begin{equation} \label{embed_error}
\min_{Q \in \bbQ_k}\, \max_{i \in [n]}\, \|\hat u_i - Q(u_i)\| \le C \eps^2,
\end{equation} 
where $\bbQ_k$ denotes the class of rigid transformations of $\bbR^k$, and $C$ is a constant that depends on $\cM$.

\subsection{Embedding by surface reconstruction} 
\label{sec:embedding}
As we noted in \remref{boundary}, the derivations and conclusions of \secref{surface_reconstruction} can be extended to surfaces with `nice-enough' boundary, which is certainly the case for surfaces that satisfy \aspref{M_isomap}. 
Following what we did in that section, let $\bbM = \bbM(k, \bX, \eps)$ denote the class of surfaces satisfying \aspref{M_isomap} for which $\bX$ is an $\eps$-covering. We know that $\bbM$ is non-empty since $\cM \in \bbM$. 
Let $\rho_{\rm max}$ denote the supremum reach among surfaces in $\bbM$, and let $\alpha_{\rm max}$ denote the maximum $\alpha$ (defined in \remref{boundary}) of a surface in $\bbM$ with reach $\ge \rho_{\rm max}/2$.
Finally, select any surface $\hat\cM \in \bbM$ satisfying $\rho(\hat\cM) \ge \rho_{\rm max}/2$ and $\alpha(\hat\cM) \ge \alpha_{\rm max}/2$. 
(As before, what matters is that the regularity of $\hat\cM$ is controlled as a function of $\cM$.)

With the interpolating surface $\hat\cM$ defined, we have two choices:
\begin{itemize}
\item Because $\hat\cM$ is in the class $\bbM$, it comes\footnote{We are again invoking the axiom of choice here.} with a compact and convex domain with non-empty interior in $\bbR^k$, say $\hat\cU$, and a $C^2$ isometry defined on an open set containing $\hat\cU$ such that $\hat\phi(\hat\cU) = \hat\cM$. By applying the inverse of this isometry to the data points, we obtain an embedding in $\bbR^k$ given by 
\begin{equation} \label{embed_CS}
\hat u_1 := \hat\phi^{-1}(x_1), \dots, \hat u_n := \hat\phi^{-1}(x_n).
\end{equation}
\item We estimate the metric on $\cM$ by the metric on $\hat\cM$ as done in \eqref{d_hat} and then apply Classical Scaling to the set of estimated distances $(\d_{\hat\cM}(x_i, x_j))$ to get an embedding, $\hat u_1, \dots, \hat u_n \in \bbR^k$. 
\end{itemize}
The second option is seemingly more constructive, but it builds on the selection of $\hat\cM$, which is non-constructive. As it turns out, the two options give the same embedding (up to a rigid transformation). This is so because
\[\hat d_{ij} = \d_{\hat\cM}(x_i, x_j) = \|\hat\phi^{-1}(x_i) - \hat\phi^{-1}(x_j)\|,\]
so that $\hat\phi^{-1}(x_1), \dots, \hat\phi^{-1}(x_n)$ is a perfect realization of $(\hat d_{ij})$ into $\bbR^k$ and it is well-known that Classical Scaling returns a perfect realization when one exists.

\begin{theorem} 
\label{thm:isomap_approximation} 
There is a constant $C>0$ depending on $\cM$ such that, if $\eps \le 1/C$, the embedding \eqref{embed_CS} satisfies the error bound \eqref{embed_error}.
\end{theorem}

We used Classical Scaling above as this is the method used in the main Isomap variant~\cite{tenenbaum2000global}, but many other methods for MDS are available. For technical reasons which we explain later on, we use a landmark variant of Classical Scaling. This method was proposed by some of the same others~\cite{de2004sparse,silva2002global} as a speedup of Classical Scaling. It consists in 
1) selecting a few items; 
2) embedding these items by Classical Scaling; 
3) embedding the other items by lateration by reference to the points obtained in Step 2. 
Lateration consists in locating a point based on its distance to known `landmark' points. The lateration method used in~\cite{de2004sparse,silva2002global} was first proposed by Gower~\cite{gower1968adding}. It is known that, just like Classical Scaling, landmark Classical Scaling returns a perfect realization when one exists as long as the landmark items are chosen in Step 1 correspond to points that span the entire Euclidean space where the embedding takes place. Therefore, applying Classical Scaling or its landmark variant to $(\d_{\hat\cM}(x_i, x_j))$, we obtain an embedding of the form \eqref{embed_CS} in either case.
For reference, Classical Scaling is Algortihm~2 and Gower's lateration is Algorithm~3 in~\cite{arias2020perturbation}.

The technical reason why we use landmark Classical Scaling below is because of the perturbation bounds available to us.
For $y_1, \dots, y_m \in \bbR^k$ and $\bY := \{y_1, \dots, y_m\}$, simultaneously seeing as an $n \times k$ matrix with row vectors $y_i$, define 
\begin{equation} \label{diam_width}
\diam_2(\bY) = 2 \|\bY\|/\sqrt{m}, \qquad 
\width_2(\bY) = 2 \|\bY^\ddag\|^{-1}/\sqrt{m},
\end{equation}
where $\|\cdot\|$ denote the operator norm and $\bY^\ddag$ is the Moore--Penrose pseudo-inverse of $\bY$. Equivalently, these are the largest and smallest singular values of $(2/\sqrt{m}) \bY$.

The following is a slight edit of~\cite[Cor~2]{arias2020perturbation}. 
\begin{lemma}
\label{lem:CS}
Consider $y_1, \dots, y_m \in \bbR^k$ with diameter $\zeta$ and width $\omega$ as defined in \eqref{diam_width}, and with pairwise distances denoted $\delta_{ij} := \|y_i - y_j\|$.  
For an arbitrary set of nonnegative numbers $(\lambda_{ij})$, let $\eta^2 = \max_{i,j} |\lambda_{ij}^2 - \delta_{ij}^2|$.
There is a constant $C > 0$ depending only on $k$ such that, if $\eta/\omega \le 1/C$, then Classical Scaling with input dissimilarities $\{\lambda_{ij}\}$ (and dimension $k$) returns a point set $z_1, \dots, z_m \in \bbR^k$ satisfying
\begin{equation} \label{CS}
\min_{Q \in \bbQ_k} \bigg[\frac1m \sum_{i \in [m]} \|z_i - Q(y_i)\|^2\bigg]^{1/2} \le C (\zeta/\omega^2)\, \eta^2.
\end{equation}
\end{lemma}

The following is a slightly different variant of~\cite[Cor~3]{arias2020perturbation}. 
\begin{lemma}
\label{lem:lateration}
Consider $y_1, \dots, y_m \in \bbR^k$ with diameter $\zeta$ and width $\omega$ as defined in \eqref{diam_width}.
For a point $y \in \bbR^k$, set $\delta_{i} = \|y - y_i\|$.
For another point set $z_1, \dots, z_m \in \bbR^k$ and an arbitrary set of nonnegative numbers $(\lambda_i)$, let $\xi^2 = \frac1m \sum_{i \in [m]} \|z_i - y_i\|^2$ and $\eta^2 = \max_{i} |\lambda_{i}^2 - \delta_{i}^2|$.
There is a constant $C > 0$ depending only on $k$ such that, if $\xi/\omega \le 1/C$, Gower's lateration with inputs $(z_1, \dots, z_m)$ and $(\lambda_{i})$ returns $z \in \bbR^k$ satisfying 
\begin{equation} \label{lateration}
\|z - y\|
\le (C/\omega) \big(\eta^2 + m \zeta\, \xi\big).
\end{equation}
\end{lemma}

The reason we do not deal directly with Classical Scaling is because, in our setting, the width as defined in \eqref{diam_width} cannot be controlled for the entire sample $\bX$, so that the bound \eqref{CS} is not directly useful to control the performance of Classical Scaling. 
Instead, we employ landmark Classical Scaling and prove the following more general result.

\begin{theorem} \label{thm:embed}
Starting from an estimate of the distances $(\hat d_{ij})$ satisfying \eqref{distance_error}, for each set of $k+1$ sample points, embed them by Classical Scaling and compute their width as in \eqref{diam_width}. Apply landmark Classical Scaling to $(\hat d_{ij})$ with the $(k+1)$-tuple that gives the largest width as landmarks. 
There is a constant $C>0$ depending on $\cM$ such that, if $\eps \le 1/C$, the resulting embedding satisfies the error bound \eqref{embed_error}.
\end{theorem}

\begin{proof}
We start with the landmark points.
First, we lower bound their width by a constant that only depends on $\cM$.
Let $\bA = \{a_1, \dots, a_{k+1}\} \subset \cU$ be such that its convex hull has maximum width among all $(k+1)$-tuples in $\cU$. Because $\bU$ is an $\eps$-covering of $\cU$, there are $u_{i_1}, \dots, u_{i_{k+1}}$ such that $\|a_j - u_{i_j}\| \le \eps$ for all $j \in [k+1]$.
Assume without loss of generality that $i_j = j$ for all $j \in [k+1]$.  
Define $\bL = \{u_1, \dots, u_{k+1}\}$. Given that $\width_2(\bA)$ and $\width_2(\bL)$ are the smallest singular values of $(1/\sqrt{k+1}) \bA$ and $(1/\sqrt{k+1}) \bL$, respectively, Weyl's inequality gives
\[\width_2(\bL) \ge \width_2(\bA) - \tfrac1{\sqrt{k+1}} \|\bA - \bL\|.\]
We then have
\[\|\bA - \bL\|^2
\le \|\bA - \bL\|_F^2
= \sum_{j \in [k+1]} \|a_j - u_j\|^2
\le (k+1) \eps^2,\]
so that
\[\omega := \width_2(\bL) \ge \width_2(\bA) - \eps.\]
By construction, $\width_2(\bA)$ is a constant of $\cM$, say $2/C_1$, and henceforth we require that $\eps \le 1/C_1$ so that the landmarks have width $\omega \ge 1/C_1$ when embedded without error.

We now embed the landmark (or base) points $\bL$ by Classical Scaling. Let $\hat u_1, \dots, \hat u_{k+1}$ denote the embedded points. 
The embedding cannot be perfect as we do not know the true distances, but only have access to estimates. 
(We are about to apply \lemref{CS} with $y_i \gets u_i$ so that $\delta_{ij} = \|u_i - u_j\| = \d_\cM(x_i,x_j)$, and $\lambda_{ij} \gets \hat d_{ij}$, and the resulting embedding is $z_i \gets \hat u_i$. We denote by $C_3$ the constant in that lemma.) 
Define
\begin{equation} \label{eta_bound}
\eta^2 := \max_{i,j \in [k+1]} | \hat d_{ij}^2 - \|u_i - u_j\|^2| \le C_2 \eps^2,
\end{equation}
by \eqref{distance_error} and the fact that $\cU$ has a diameter that is a constant of $\cM$,  we apply \lemref{CS} to get that, if $\eta/\omega \le 1/C_3$, which holds if $\eps \le 1/(C_1 C_2^{1/2} C_3)$, then
\begin{equation}
\min_{Q \in \bbQ_k} \bigg[\frac1{k+1} \sum_{i \in [k+1]} \|\hat u_i - Q(u_i)\|^2\bigg]^{1/2} \le C_3 (\diam_2(\bL)/\omega^2)\, \eta^2 \le C_4 \eps^2,
\end{equation}
again using the fact that $\omega \ge 1/C_1$ and $\diam_2(\bL) \le \diam(\cU)$, which is a constant of $\cM$.
Henceforth, we assume that $Q = {\rm id}$ without loss of generality so that
\begin{equation} \label{xi_bound}
\xi^2 := \frac1{k+1} \sum_{i \in [k+1]} \|\hat u_i - u_i\|^2 \le (C_4 \eps^2)^2.
\end{equation}

Finally, we embed the remaining points, $x_i, i > k+1$, one-by-one by lateration. This is done by reference to $\hat u_1, \dots, \hat u_{k+1}$ using the estimated distances $(\hat d_{ij})$.
Take $p$ in $\{k+2, \dots, n\}$ and consider embedding $x_p$.
(We are about to apply \lemref{lateration} with $y_i \gets u_i$ for $i \in [k+1]$ and $y = u_p$ for some $p > k+1$, so that $\delta_{i} = \|u_p - u_i\| = \d_\cM(x_p, x_i)$, and $z_i \gets \hat u_i$ and $\lambda_i = \hat d_{pi}$. We denote by $C_5$ the constant in that lemma.) 
If $\xi/\omega \le 1/C_5$, which in view of \eqref{xi_bound} holds if $\eps \le 1/(C_1^{1/4} C_4^{1/2} C_5^{1/2})$, then
\begin{equation}
\|\hat u_p - u_p\|
\le (C_5/\omega) \big(\eta^2 + (k+1) \zeta\, \xi\big) \le C_1C_5 \big(C_2 \eps^2 + (k+1) \diam(\cU) C_4 \eps^2) =: C_6 \eps^2.  
\end{equation}
using $\omega \ge 1/C_1$, \eqref{eta_bound}, \eqref{xi_bound}, and $\diam_2(\bL) \le \diam(\cU)$, which is a constant of $\cM$.
\end{proof}

\begin{remark}
The procedure described in \thmref{embed} would in principle require going all possible $(k+1)$-tuples, and there are too many of them (on the order of $O(n^{k+1})$) for this to be practical. In principle, a randomized version would do essentially as well. It would amount to examining a number $N$ of $(k+1)$-tuples and choosing the best among them in terms of width. Then, the error bound \eqref{embed_error} would hold, say with twice the constant there, with probability exponentially close to~1 as a function of $N$.
Another possibility is to subsample $\bX$ to obtain an $(2\eps, 1/2)$-net (see \secref{nets}) and embed it by Classical Scaling --- which turns out to be fine in that case. Once embedded, it is computationally much easier to select a $(k+1)$-tuple of points with good width, and these are used to embedding the remaining sample points by lateration.
\end{remark}

\subsection{Information bound}
The same example used in \secref{information} can also be used to establish an information bound showing that the stated performance bound established in \thmref{isomap_approximation} is best possible. Indeed, using some of the same notation, on the one hand, $\cM_1$ is isometric to $\cU_1 := [0,1]^k$, with corresponding embedded points $u_i^1 := (x_{i,1}, \dots, x_{i,k})$ when $x_i = (x_{i,1}, \dots, x_{i,k})$.
On the other hand, $\cM_2$ is isometric to $\cU_2 
:= [0,1]^{k-1} \times [0,\Lambda(\gamma_\eps)]$,  
with corresponding embedded points $u_i^2 
:= (x_{i,1}, \dots, x_{i,k-1}, \lambda_\eps(x_{i,k}))$, where $\lambda_\eps$ is defined in \eqref{lambda}.
These embeddings are obviously not the only possibilities, but any other ones would have to be obtained by rigid transformations of these, and these particular ones are closest in average squared distance.
See \lemref{procrustes} below.

We then proceed to lower bound the squared average distance between these two embeddings
\begin{align*} 
\max_{i \in [m]^k} \|u_i^1 - u_i^2\|^2
\ge \frac1{m^k} \sum_{i \in [m]^k} \|u_i^1 - u_i^2\|^2 
 & = \frac1{m^k} \sum_{i \in [m]^k} \big[x_{i,k} - \lambda_\eps(x_{i,k})\big]^2 \\ 
 & = \frac1m \sum_{j \in [m]} \big[(j-1)\eps - (j-1)\eta\big]^2                            \\ 
 & = \frac{1}m (\eps-\eta)^2 \sum_{j \in [m]} (j-1)^2                                         \\ 
 & = (\eps-\eta)^2 (2m-1)(m-1) 
 \ge C_1^2 \eps^4,
\end{align*}
using at the end the lower bound in \eqref{eta_bounds} and the fact that $\eps = 1/(m-1)$.

Based on what we know of the true $\cM$, it could be $\cM_1$ as easily as $\cM_2$, and therefore, for any embedding $\hat u_i$,
\begin{align*}
\max_{\cM \in \{\cM_1, \cM_2\}} \min_{Q \in \bbQ_k} \max_{i \in [n]} \|\hat u_i - Q(u_i)\| 
&= \max\Big\{\min_{Q \in \bbQ_k} \max_{i \in [n]} \|\hat u_i - Q(u^1_i)\|, \min_{Q \in \bbQ_k} \max_{i \in [n]} \|\hat u_i - Q(u^2_i)\|\Big\} \\
 & \ge \tfrac12\, \min_{Q \in \bbQ_k} \max_{i \in [n]} \|u^1_i - Q(u^2_i)\| \\
& = \tfrac12\, \max_{i \in [n]} \|u^1_i - u^2_i\| \\  
& \ge \tfrac12 C_1 \eps^2.
\end{align*}

We have thus established the following.

\begin{theorem}
For any embedding method $\hat u$, the following is true. 
For any $\eps > 0$, there is a surface $\cM$ satisfying \aspref{M_isomap} and a set of points $x_1, \dots, x_n$ belonging to $\cM$ dense enough that \eqref{eps} holds, such that 
\begin{equation}
\min_{Q \in \bbQ_k} \max_{i \in [n]} \|\hat u_i - Q(u_i)\| \ge C^{-1} \eps^2. 
\end{equation}
\end{theorem}

\begin{lemma} 
\label{lem:procrustes} 
Consider two sets of points, $u_i = i = (i_1, \dots, i_k)$ and $v_i := (\alpha_1 i_1, \dots, \alpha_k i_k)$ for $i \in [m]^k$ and some real numbers $\alpha_1, \dots, \alpha_k$. Then, regardless of $\alpha$, the best alignment of these points by a rigid transformation is achieved by the identity transformation.
\end{lemma}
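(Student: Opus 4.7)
The plan is to carry out the classical Procrustes analysis. I would first set
\[
f(R, t) \;=\; \sum_{i \in [m]^k} \|R u_i + t - v_i\|^2
\]
and optimize over $t$ at a fixed orthogonal $R$; the first-order condition gives $t^*(R) = \bar{v} - R\bar{u}$, matching the two centroids $\bar{u}$ and $\bar{v}$. After substituting back, the objective reduces to maximizing $\operatorname{tr}(RM)$ over orthogonal $R$, where $M := \sum_i (u_i - \bar{u})(v_i - \bar{v})^\top$ is the (un-normalized) cross-covariance of the centered clouds. The lemma then amounts to the statement that the maximum is attained at $R = I$, independently of $\alpha$.

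The crux is to verify that $M$ is diagonal, thanks to the Cartesian product structure of the grid $[m]^k$. Writing $\bar{m} := (m+1)/2$, the off-diagonal entries are
\[
M_{pq} \;=\; \alpha_q \sum_{i \in [m]^k} (i_p - \bar{m})(i_q - \bar{m}), \qquad p \neq q,
\]
and Fubini factors the sum into the product of two univariate zero-mean sums $\sum_{j=1}^m (j - \bar{m}) = 0$ (times a harmless $m^{k-2}$), so it vanishes. The diagonal entries collapse to $M_{pp} = c\,\alpha_p$ with the common positive constant $c := m^{k-1} \sum_{j=1}^m (j - \bar{m})^2$, yielding $M = c \cdot \operatorname{diag}(\alpha_1, \ldots, \alpha_k)$.

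With $M$ diagonal, maximizing $\operatorname{tr}(RM) = c \sum_q \alpha_q R_{qq}$ over orthogonal $R$ is read off from the SVD: the canonical basis vectors are singular vectors of $M$, and the standard Procrustes formula $R^* = V U^\top$ obtained from $M = U\Sigma V^\top$ collapses to the identity under the natural axis-aligned decomposition. The accompanying optimal translation $t^*(I) = \bar{v} - \bar{u}$ completes the optimal rigid transformation, which is the identity up to the centroid-matching translation that every rigid alignment carries. The step that will warrant the most care is the sign accounting in the SVD of the diagonal matrix $c \cdot \operatorname{diag}(\alpha)$ — absorbing the signs of the $\alpha_q$ into the singular vectors rather than the singular values — so that the identity orthogonal matrix is identified as a maximizer of $\operatorname{tr}(RM)$ over the orthogonal group; once that bookkeeping is in place, the diagonal structure of $M$ established above closes the argument.
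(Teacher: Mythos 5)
Your proposal and the paper's proof follow the same route: center both point clouds, reduce the Procrustes minimization to maximizing $\operatorname{tr}(RM^\top)$ (with $M$ the cross-covariance of the centered clouds), show via a Fubini factorization over the product grid $[m]^k$ that $M$ is diagonal, and read off the optimal $R$ from the SVD of $M$. Your explicit computations ($M_{pq}=0$ for $p\neq q$ and $M_{pp}=c\,\alpha_p$) match the paper's verification that $Me_t = A(k,m)\,\alpha_t\,e_t$.

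One cautionary note, though: the ``sign bookkeeping'' you flag at the end is not mere bookkeeping that can be absorbed. If some $\alpha_q<0$, the SVD of $c\operatorname{diag}(\alpha)$ is $I\cdot c\operatorname{diag}(|\alpha|)\cdot \operatorname{diag}(\operatorname{sgn}\alpha)$, and the Procrustes optimum is then $R^\ast=\operatorname{diag}(\operatorname{sgn}\alpha)\neq I$; the conclusion of the lemma is genuinely false in that case, and no choice of where to place the signs rescues the identity as a maximizer of $\operatorname{tr}(RM^\top)$. The paper's proof silently relies on $\alpha_q>0$ for all $q$ (which holds in the application, where the $\alpha_q$ are the positive scale factors $\eps$ and $\eta$), so that $M$ is positive-semidefinite and the SVD gives $A=B=I$ immediately. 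The cleanest fix for your write-up is to assume $\alpha_q>0$ explicitly rather than try to repair the general real-valued case.
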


\begin{proof} 
The optimization problem we are studying is 
\begin{equation*} 
    \min_{Q \in \bbQ_k} \sum_{i \in [m]^k} \|v_i - Q(u_i)\|^2 
    = \min_{R \in \bbO_k} \min_{r \in \bbR^k} \sum_{i \in [m]^k} \|v_i - r - R u_i\|^2, 
\end{equation*} 
where $\bbO_k$ is the class of orthogonal transformations of $\bbR^k$. Given $R$, the minimum over $r$ is achieved at the average of $v_i-R u_i$, which reduces the problem to
\begin{equation*} 
\min_{R \in \bbO_k} \sum_{i \in [m]^k} \|v_i-\bar v - R (u_i-\bar u)\|^2,
\end{equation*} 
where $\bar u := (\frac{m+1}2, \dots, \frac{m+1}2)$ and $\bar v := (\alpha_1 \frac{m+1}2, \dots, \alpha_k \frac{m+1}2)$ are the barycenters of $u_1, \dots, u_n$ and $v_1, \dots, v_n$, respectively. 
Let $U$ and $V$ be the matrices with row vectors $u_i - \bar u$ and $v_i - \bar v$, respectively. It is well-known that the optimal orthogonal transformation solving the problem the optimal $R$ above is $A B^\top$ if $A \Lambda B^\top$ is a singular value decomposition of $M := V^\top U$.
To show that this is the identity matrix, it suffices to show that $M$ is diagonal, or equivalently, that the canonical basis vectors of $\bbR^k$, denoted $e_1, \dots, e_k$ below, are eigenvectors for $M$. 
Take any $t \in [k]$. 
Then, noting that $M = \sum_i (v_i - \bar v) (u_i - \bar u)^\top$, we have for $s \ne t$, 
\begin{align*} 
    (M e_t)_s 
     & = \sum_{i \in [m]^k} (v_i - \bar v)_s (u_i - \bar u)                                 _t\\ 
     & = \sum_{i \in [m]^k} \alpha_s (i_s-\tfrac{m+1}2) (i_t-\tfrac{m+1}2)                        \\ 
     & = m^{k-2} \alpha_s \sum_{i_s \in [m]} (i_s-\tfrac{m+1}2) \sum_{i_t \in [m]} (i_t-\tfrac{m+1}2) 
    = 0, 
\end{align*} 
and, similarly,  
\begin{align*} 
    (M e_t)_t 
     & = m^{k-1} \alpha_t \sum_{i_t \in [m]} (i_t-\tfrac{m+1}2)^2 
    =: a\, \alpha_t, \quad a := m^k (m^2-1)/12.
\end{align*} 
Hence, $M e_t = a\, \alpha_t e_t$, so that $e_t$ is indeed an eigenvector of $M$ (for the eigenvalue $a \alpha_t$).
\end{proof}

\subsection{Mesh Isomap}

Isomap consists in 1) building a neighborhood graph; 2) computing all pairwise graph distances; 3) applying Classical Scaling to the resulting distances. Steps 1 and 2 have for purpose to estimate the pairwise intrinsic distances on the underlying surface, and this is where we bring an improvement, as we replace these steps with a more accurate way of estimating distances based on a mesh construction. Step 3 remains the same in principle, or it can be replaced by any other method for MDS as was done for Isomap, where landmark Classical Scaling was proposed as a faster alternative~\cite{silva2002global}.  See \algref{mesh_isomap}, where {\sf mesh} denotes a generic mesh construction algorithm and {\sf meshDistances} a generic algorithm for computing all pairwise distances between the vertices of a given mesh, and {\sf MDS} denotes a generic method for MDS.

\begin{algorithm}[!tpb] 
\caption{Mesh Isomap} 
\label{alg:mesh_isomap} 
\begin{algorithmic} 
    \STATE {\bf Input:} point set $x_1, \dots, x_n$ in $\bbR^d$, embedding dimension $k$, any parameter of {\sf mesh} 
    \STATE {\bf Output:} point set $\hat u_1, \dots, \hat u_n$ in $\bbR^k$ 
    \medskip 
    \STATE {\bf 1:} 
    Apply {\sf mesh} to $x_1, \dots, x_n$ to get a mesh $\hat{\sf M}$ 
    \STATE {\bf 2:} 
    Apply {\sf meshDistances} to $\hat{\sf M}$ to get a matrix of pairwise distances $\hat D$ 
    \STATE {\bf 3:} 
    Apply {\sf MDS} to $\hat D$ to produce a point set $u_1, \dots, u_n$ in $\bbR^k$ 
    \STATE {\bf Return:} the point set $\hat u_1, \dots, \hat u_n$ 
\end{algorithmic}
\end{algorithm}

\begin{remark}\label{rem:isomap_discussion}
Although the method as such seems new, it was mentioned in~\cite{balasubramanian2002isomap} in a discussion of the original Isomap paper~\cite{tenenbaum2000global}. In that discussion, the authors mention previous work of theirs~\cite{schwartz1989numerical} on the flattening of a mesh, which consists in computing the distances on the mesh and then applying the multidimensional scaling method of~\cite{sammon1969nonlinear}. Note however that the setting is different in that a mesh is assumed to be provided, while we only assume that a point cloud is provided. Although this distinction was immediately underscored by the authors of Isomap in their rebuttal, they also failed to realize that a better performance could be gained by using a mesh construction in the process of computing the pairwise distances. This is the main novelty in \algref{mesh_isomap}.
\end{remark}

In an effort to obtain a performance bound for Mesh Isomap, we specialize the algorithm by using as mesh construction the tangential Delaunay complex corrected for inconsistencies based on estimated tangent spaces described in \secref{TDC} and using as method for MDS  landmark Classical Scaling as described in \secref{embedding}. 

We established in \thmref{TDC_bound} that the mesh construction yields distance estimates that satisfy \eqref{distance_error}, but we did so under the assumption that the surface $\cM$ does not have a boundary. It turns out that the construction is local in that the computation of a given simplex in the complex only depends on the sample points that are within $C_1\eps$ of the simplex~\cite[Lem~8.10(3)]{Boissonnat2018}; and the estimation of the tangent spaces at a given point, as carried out in \secref{tangent_estimation}, only depends on the sample points that are within $C_2\eps$ of the point of interest. 
This leads us to anticipate that the embedding error \eqref{embed_error} applies here as well --- even though $\cM$ has a boundary --- at least for data points that are $C_3 \eps$ away from $\partial \cM$. 
This points to the possibility that {\em this variant of Mesh Isomap is minimax rate-optimal for manifold learning in the situation where the submanifold is isometric to a convex domain}.

\subsection{Numerical experiments}

\begin{figure}[htbp!] 
\centering 
\begin{subfigure}[t]{.475\linewidth} 
    \centering 
    \includegraphics[width=1.1\linewidth, trim=60 30 40 40, clip]{3dplot.png} 
    \caption[]{{\small The large colored points on the Swiss roll (of sample size $n=1000$) are the 
                landmarks that will be embedded in $\bbR^2$ using Classical Scaling.}} 
\end{subfigure} 
\quad 
\begin{subfigure}[t]{.475\linewidth} 
    \centering 
    \includegraphics[width=\linewidth, trim=30 0 50 35, clip ]{procrustes_error_noline.png} 
    \caption[]{{\small Procrustes error of the embeddings returned by Mesh Isomap (blue) and regular Isomap with varying connectivity radius. The graph approximation 
                provides the most accurate embedding with $r=0.3$, but the mesh approximation is more accurate still. Based on 20 repeats.}} 
\end{subfigure} 
\vskip\baselineskip 
\begin{subfigure}[t]{.475\linewidth} 
    \centering 
    \includegraphics[width=\linewidth, trim=20 20 40 40, clip]{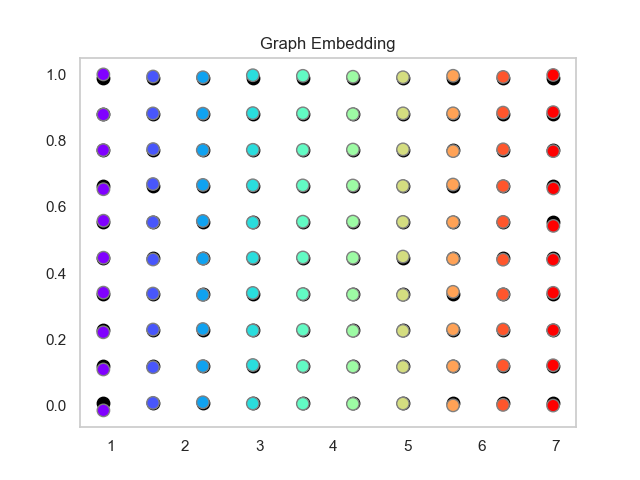} 
    \caption[]{\small Typical output of regular Isomap with the best choice of neighborhood radius $r=0.3$. 
        The original points are in black and the output is in color.} 
\end{subfigure} 
\quad 
\begin{subfigure}[t]{.475\linewidth} 
    \centering 
    \includegraphics[width=\linewidth, trim=20 20 40 40, clip]{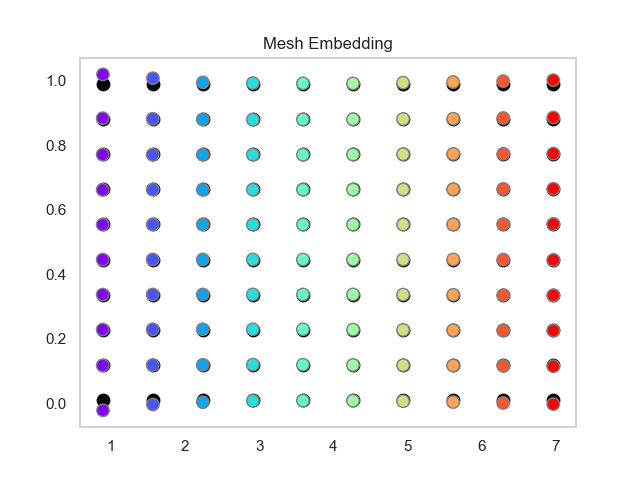} 
    \caption[]{\small Typical output of Mesh Isomap.             The original points are in black and the output is in color.} 
\end{subfigure} 
\caption{Comparison of Isomap and Mesh Isomap on the Swiss roll.} 
\label{fig:exp_isomap}
\end{figure}

In this subsection, we compare the original Isomap algorithm with Mesh Isomap in simulations. We do so on the Swiss roll, which is perhaps the most emblematic surface in manifold learning. In our implementation of Mesh Isomap we used the same tangential Delaunay complex construction~\cite{gudhi:TangentialComplex} as we did in \secref{numerics}.
The embedding error was computed up to a rigid transformation by Procrustes.

The result of this experiment is reported in \figref{exp_isomap}.
As in \secref{numerics}, the average performance of Mesh Isomap is noticeably better than that of regular Isomap across all choices of the connectivity radius.

\section{Discussion}
\label{sec:discussion}

In two places in the paper, we suspected but were not able to prove that surfaces were $O(\eps^2)$-distortions of each other --- and had to use a different route to get to the desired result. 
 
In \secref{distortion}, we believe that $\hat\cM$ and $\cM$ are $O(\eps^2)$-distortions of each other. If this had been established, then it would have enabled us to apply \corref{distortion} to immediately get \thmref{approximation}. As we were not able to prove this claim, we used a different route through \lemref{both_projections} instead. 

\begin{conjecture}
There are universal constants $C_1, C_2>0$ such that, if $\cM$ and $\cS$ are compact and connected $k$-dimensional submanifolds without boundary with reach $\ge \rho$, and if they are within Hausdorff distance $h \le \rho/C_1$ of each other, then they are $(C_2 h/\rho)$-distortions of each other.
\end{conjecture}

In \secref{meshes}, we believe that $\hat\cT_{\rm c}$ and $\cM$ are $O(\eps^2)$-distortions of each other. 
(In \cite[Th~4.1]{aamari2018stability} and its proof via \cite[Lem~4.2]{aamari2018stability}, we see that $\cM$ and $\tilde\cM$ are $O(\eps)$-distortions of each other. It would have been enough to have $O(\eps^2)$ in place of $O(\eps)$.)
If this had been established, then it would have enabled us to apply \corref{distortion} to immediately get \thmref{TDC_bound}. Although the proof of that result is short, we could have avoided the use of multiple net construction as described in \secref{nets} --- see \remref{nets} there. It would have been enough to work with a single net (obtained by subsampling $\bX$) and then the error bound \eqref{distance_error} would have been established for all sample points (including those outside the net) by way of \corref{distortion}. 

\begin{conjecture}
There are universal constants $C_1, C_2>0$ such that the following holds. Suppose $\cM$ is a compact and connected $k$-dimensional submanifold without boundary with reach $\ge \rho$. Consider a $k$-simplicial complex $\cT$ with vertices on $\cM$ that is homeomorphic to $\cM$ and such that all its $k$-simplexes have diameter $\le h$ with $h/\rho \le 1/C_1$ and thickness $\ge 1/C_1$. Then they are $(C_2 h/\rho)$-distortions of each other.
\end{conjecture}

\section*{Acknowledgments}
When EAC presented prior work on this topic at the {\em 6th Princeton Day of Statistics}, Amit Moscovich proposed this idea of using an approximation to the underlying surface to possibly obtain a better approximation rate, which is at the foundation of the present paper. Although we later discovered that this idea had been entertained earlier (see \remref{isomap_discussion}), we are nonetheless indebted to him as this idea got us started on this project.
We are also grateful to Eddie Aamari, Jean-Daniel Boissonnat, Fr\'ed\'eric Chazal, and Justin Roberts for helpful discussions and pointers to the literature.
This work was partially supported by the US National Science Foundation (DMS 1916071).

\small
\bibliographystyle{abbrv}
\bibliography{biblio.bib}

\end{document}